%
%
\documentclass{article}

\usepackage[preprint]{neurips_2022}

\usepackage[utf8]{inputenc} 
\usepackage[T1]{fontenc}    
\usepackage{hyperref}       
\usepackage{url}            
\usepackage{booktabs}       
\usepackage{amsfonts}       
\usepackage{nicefrac}       
\usepackage{microtype}      
\usepackage{xcolor}         

\usepackage{graphicx}
\usepackage{booktabs}
\usepackage[font={small,it}]{caption}

\newcommand{\titre}{Contextual Bandits with Knapsacks \\ for a Conversion Model}
\title{\titre}

\author{Zhen Li \\
BNP Paribas, 16 boulevard des Italiens, 75009 Paris, France \\
\texttt{zhen.li@bnpparibas.com} \\
\And
Gilles Stoltz \\
Universit{\'e} Paris-Saclay, CNRS, Laboratoire de mathématiques d'Orsay, 91405, Orsay, France \\
\texttt{gilles.stoltz@universite-paris-saclay.fr} \\
HEC Paris, 1 rue de la Lib{\'e}ration, 78350 Jouy-en-Josas, France \\
\texttt{stoltz@hec.fr}
}

\usepackage{amsmath,amssymb,amsthm,mathtools}
\usepackage{dsfont}
\usepackage{enumitem}

\usepackage[most]{tcolorbox}
\newtcolorbox{nbox}[1][]{
  enhanced,
  fonttitle=\scshape,
  #1
}

\renewcommand{\leq}{\leqslant}
\renewcommand{\geq}{\geqslant}

\renewcommand{\phi}{\varphi}
\renewcommand{\epsilon}{\varepsilon}
\renewcommand{\hat}{\widehat}
\newcommand{\defeq}{\stackrel{\mbox{\scriptsize \rm def}}{=}}
\newcommand{\cA}{\mathcal{A}}

\newcommand{\cE}{\mathcal{E}}
\newcommand{\cF}{\mathcal{F}}

\newcommand{\cH}{\mathcal{H}}
\newcommand{\cL}{\mathcal{L}}

\newcommand{\cX}{\mathcal{X}}

\newcommand{\cP}{\mathcal{P}}
\newcommand{\E}{\mathbb{E}}

\newcommand{\R}{\mathbb{R}}
\newcommand{\btheta}{\boldsymbol{\theta}}
\newcommand{\bbeta}{\boldsymbol{\beta}}
\newcommand{\bmu}{\boldsymbol{\mu}}
\newcommand{\bx}{\boldsymbol{x}}
\newcommand{\bX}{\boldsymbol{X}}
\newcommand{\bc}{\boldsymbol{c}}
\newcommand{\bC}{\boldsymbol{C}}
\newcommand{\bL}{\boldsymbol{L}}
\newcommand{\bv}{\boldsymbol{v}}
\newcommand{\bu}{\boldsymbol{u}}
\newcommand{\bU}{\boldsymbol{U\!}}

\newcommand{\bp}{\boldsymbol{p}}
\newcommand{\bq}{\boldsymbol{q}}

\newcommand{\Ber}{\mathrm{Ber}}
\newcommand{\sig}{\eta}
\newcommand{\bphi}{\boldsymbol{\phi}}
\newcommand{\transp}{\!\mbox{\rm \tiny T}\,}
\newcommand{\e}{\mathrm{e}}
\newcommand{\anull}{a_{\mbox{\tiny null}}}
\newcommand{\nocost}{{\mbox{\tiny no-cost}}}

\newcommand{\opt}{\mathrm{\textsc{opt}}}

\renewcommand{\d}{\,\mathrm{d}}
\newcommand{\budg}{{\mbox{\rm \tiny budg}}}
\newcommand{\psum}{{\mbox{\rm \tiny p-sum}}}
\newcommand{\ppos}{{\mbox{\rm \tiny p-pos}}}
\newcommand{\Inotanull}[1]{\mathds{1}_{\{a_{#1} \not= \anull\}}}
\newcommand{\maxn}{}
\newcommand{\prob}{{\mbox{\rm \tiny prob}}}
\newcommand{\haz}{{\mbox{\rm \tiny HAz}}}
\newcommand{\cEnu}{\hat{\cE}_{{\mbox{\rm \tiny unif},\delta}}}
\newcommand{\am}{{\mbox{\rm \tiny am}}}
\newcommand{\dur}{{\mbox{\rm \tiny dur}}}
\newcommand{\ir}{\mathrm{ir}}
\newcommand{\out}{\mathrm{out}}
\newcommand{\outm}{{\mbox{\rm \tiny out}}}
\newcommand{\olr}{\overline{r}}
\newcommand{\olbc}{\overline{\bc}}
\newcommand{\olc}{\overline{c}}
\newcommand{\lin}{{\mbox{\rm \tiny lin}}}

\newcommand{\var}{\textit}
\newcommand{\proj}{\Pi_{\mbox{\rm \tiny unit}}}
\newcommand{\bzeta}{\boldsymbol{\zeta}}
\newcommand{\conv}{{\mbox{\rm \tiny conv}}}
\newcommand{\disc}{{\mbox{\rm \tiny disc}}}
\newcommand{\irs}{{\mbox{\rm \tiny ir}}}

\DeclareMathOperator{\id}{I}
\DeclareMathOperator*{\argmax}{argmax}
\DeclareMathOperator*{\argmin}{argmin}
\newtheorem{theorem}{Theorem}
\newtheorem{lemma}{Lemma}

\setlist{noitemsep,topsep=0pt,parsep=3.5pt,partopsep=0pt}

\begin{document}

\maketitle

\begin{abstract}
We consider contextual bandits with knapsacks, with an underlying structure
between rewards generated and cost vectors suffered.
We do so motivated by sales with commercial discounts.
At each round, given the stochastic i.i.d.\ context $\bx_t$ and the arm picked $a_t$ (corresponding, e.g., to a discount level),
a customer conversion may be obtained,
in which case a reward $r(a,\bx_t)$ is gained and vector costs $\bc(a_t,\bx_t)$ are suffered
(corresponding, e.g., to losses of earnings).
Otherwise, in the absence of a conversion, the reward and costs are null.
The reward and costs achieved are thus coupled through the binary variable measuring
conversion or the absence thereof.
This underlying structure between rewards and costs is different from the linear structures
considered by~\citet{Agrawal2016LinearCB} (but we show that the techniques introduced in the present article
may also be applied to the case of these linear structures). The adaptive policies exhibited
solve at each round a linear program based on upper-confidence estimates of the probabilities of conversion
given $a$ and $\bx$. This kind of policy is most natural and achieves a regret bound
of the typical order $(\opt/B) \smash{\sqrt{T}}$, where $B$ is the total budget allowed,
$\opt$ is the optimal expected reward achievable by a static policy, and
$T$ is the number of rounds.
\end{abstract}

\section{Introduction and Literature Review}

We consider the framework of stochastic multi-armed bandits,
which has been extensively studied since the early works by~\citet{Thompson1933MAB}
and~\citet{Robbins1952}. Two recent (and complementary) surveys summarizing the latest research
in the field were written by~\citet{Lattimore2020BA} and \citet{Aleksanders2019IntroBandits}.
On the one hand, we are particularly interested in the setting of \emph{contextual} stochastic multi-armed bandits,
preferably with some structural assumptions on the dependency between rewards and contexts:
linear models (again, a rich literature, see, among many others,
\citet{Chu2011ContextualBW} and~\citet{AbbasiYadkori2011ImprovedAF}, whose work marked
a turning point), and, for $[0,1]$-valued rewards, logistic models
(\citet{Filippi2010GLM} and \citet{Faury2020ImprovedOA}).
On the other hand, we are also particularly interested in stochastic multi-armed bandits
\emph{with knapsacks}, i.e., with cumulative vector-cost constraints to be abided by
on top of maximizing the accumulated rewards. The setting was introduced by
\citet{Badanidiyuru2013BanditsWK,Badanidiyuru2018BanditsWK} and a comprehensive
summary of the results achieved since then may be found in \citet[Chapter~10]{Aleksanders2019IntroBandits}.
The intersection of these two frameworks of interest is called
\emph{contextual bandits with knapsacks} [CBwK] and is the focus
of the present article.

\textbf{Literature review on CBwK.}
The first approach to CBwK, by~\citet{Badanidiyuru2014ResourcefulCB} and~\citet{Agrawal2016AnEA},
assumes a joint stochastic generation of triplets of contexts-rewards-costs,
with no specific underlying structure, and makes the problem tractable by
using as a benchmark a finite set of static policies.
As noted by~\citet{Agrawal2016LinearCB}, picking this finite set may be uneasy, which
is why they introduce instead a structural assumption of linear modeling:
the (unknown) expected rewards and cost vectors depend linearly on the contexts.

We consider a different modeling assumption, motivated
by sales with commercial discounts (see Appendix~\ref{app:industrialmotivation}):
general (known) reward and cost functions
are considered but they are coupled via a 0/1--valued factor, called a (customer) conversion, obtained as
the realization of a Bernoulli variable with parameter $P(a,\bx)$
depending on the context $\bx$ observed (customer's characteristics) and the action $a$ taken (discount level offered).
The probabilities $P(a,\bx)$ are themselves modeled by a logistic regression,
whose parameters may be learned through an adaptation of the techniques by
\citet{Filippi2010GLM} and \citet{Faury2020ImprovedOA}. We do so in the first
phase of the adaptive policy introduced in this article.
More details on the comparison of the new setting considered
to known settings of CBwK may be found in Section~\ref{sec:notcovered}.

\textbf{Primal-dual approach.}
The second phase of the adaptive policy exhibited uses the primal-dual approach
to a convex optimization problem---actually, a simple optimization problem given by a linear program.
This approach was already used in various ways for
bandits with knapsacks, including CBwK, to
define policies based on the dual problem:
this is explicit in the LagrangeBwK policy of
\citet{Nicole2019AdversarialBwK} and is implicit in the
reward-minus-weighted-cost approach of \citet{Agrawal2016LinearCB} and~\citet{Agrawal2016AnEA},
as we underline in the proof sketch of Section~\ref{sec:analysis-known}
as well as in the discussion of Section~\ref{sec:LinearCBwL}.
However, we only use the primal-dual approach in the analysis
and state our adaptive policy directly in terms of the primal problem,
where we substituted upper-confidence estimates of the probabilities~$P(a,\bx)$.
We therefore end up with a most natural adaptive policy, which mimics
the optimal static policy used as a benchmark. This direct primal statement
of the policy actually also works for the setting of
linear CBwK studied by \citet{Agrawal2016LinearCB}, as we show in Section~\ref{sec:LinearCBwL}.
Policies based on such direct primal statements were already considered
for bandits with knapsacks (see \citet{Xiaocheng2022PrimalDual} and references therein)
but do not seem easily extendable to CBwK.

\textbf{Outline and main contributions.}
The first contribution of this article is a new structured setting of CBwK,
based on a coupling between general rewards and cost vectors through conversions modeled based
on a logistic regression; we present and discuss it in Section~\ref{sec:protocol} (and explain its origins
in Appendix~\ref{app:industrialmotivation} of the supplementary material).
The adaptive policy introduced is described in Section~\ref{sec:policy}.
Its first phase consists of learning the parameter of logistic regression and is
adapted from \citet{Faury2020ImprovedOA}. Its second phase---and this is the second
contribution of this article---directly solves a primal problem with optimistic
conversion probabilities. The analysis, which we believe is concise, elegant, and natural,
is provided in Sections~\ref{sec:analysis-known} (when the context distribution $\nu$ is known)
and~\ref{sec:analysis-unknown} (when $\nu$ is unknown). As mentioned above,
Section~\ref{sec:LinearCBwL} draws the consequences of our second contribution
for linear CBwK.

\textbf{Notation.} Throughout the article, vectors are denoted with bold symbols.
In particular, $\mathbf{0}$ and $\mathbf{1}$ denote the vectors with all components equal to~$0$ and $1$,
respectively. With no additional subscript, $\Arrowvert\mathbf{v}\Arrowvert$ denotes the Euclidean
norm of a vector $\mathbf{v}$, while a subscript given by a non-negative symmetric matrix $M$ refers to
$\Arrowvert \mathbf{v} \Arrowvert_M = \sqrt{\mathbf{v}^{\transp} M \mathbf{v}}$.

\section{Learning Protocol and Motivation}

We describe the learning protocol and objectives considered (Section~\ref{sec:protocol}) and
explain why it is not covered by earlier works (Section~\ref{sec:notcovered}). We also
detail (Appendix~\ref{app:industrialmotivation} in the supplementary material) how this learning protocol was defined
based on an industrial motivation in the banking sector: market share expansion for loans by granting discounts,
under commercial budget constraints.

\subsection{Learning Protocol and Modeling Assumptions}
\label{sec:protocol}

We consider a finite action set $\cA$, including a special action $\anull$ called no-op, and a finite context set $\cX \subseteq \R^{n}$.
(We discuss and mitigate finiteness of $\cX$ in Section~\ref{sec:notcovered}.)
A scalar reward function $r : \cA \times \cX \to [0, 1]$ and a vector-valued cost function $\bc : \cA \times \cX \to [0, 1]^d$
evaluate the performance of actions given the contexts. There are several sources of costs to control: each corresponds to a component of $\bc$.
We assume that these function are known, and (with no loss of
generality) that their ranges are $[0,1]$ and $[0,1]^d$. The no-op action induces null reward and costs: $r(\anull,\bx) = 0$
and $\bc(\anull,\bx) = \mathbf{0}$ for all $\bx \in \cX$.

Contexts---which correspond, for instance, to customers' characteristics, see Appendix~\ref{app:industrialmotivation}---are
drawn sequentially according to some distribution~$\nu$, which may be known or unknown (we will deal with both cases).
At each round $t \geq 1$, upon observing the context $\bx_t \in \cX$ drawn, the learner picks an action $a_t \in \cA$,
which corresponds, for instance, to an offer made to the customer $t$. If the latter accepts the offer, an event
which we denote $y_t = 1$, then the learner obtains a reward $r(a_t,\bx_t)$ and suffers some costs $\bc(a_t,\bx_t)$.
When the customer declines the offer, we set $y_t = 0$, and null reward and costs are obtained. Thus, in both cases,
the reward and costs may be written as $r(a_t,\bx_t) \, y_t$ and $\bc(a_t,\bx_t) \, y_t$. We call $y_t$ the conversion
and now explain how it is modeled.

\textbf{Modeling conversions.} We model each conversion $y_t$ as an independent Bernoulli random drawn, with
parameter $P(a_t,\bx_t)$ depending on the context $\bx_t$ and action $a_t \ne \anull$. We further assume that these probabilities
may be written as a logistic regression model, i.e.,
there exists a known transfer function $\bphi : \cA \setminus \{ \anull \} \times \cX \to \R^m$
and some unknown parameter $\btheta_\star \in \R^m$ such that
\begin{equation}
\label{eq:P-logistic}
\forall \bx \in \cX, \ \forall a \in \cA \setminus \{ \anull \}, \qquad
P(a,\bx) = \sig \bigl( \bphi(a, \bx)^{\transp} \btheta_\star \bigr)\,,
\qquad \mbox{where} \quad \sig(x) = 1/(1+\e^{-x})\,.
\end{equation}
We assume that $\bphi$ is normalized in a way that its Euclidean norm satisfies $\Arrowvert \bphi \Arrowvert \leq 1$
and that a bounded convex set $\Theta$ containing $\btheta_\star$ is known.
Such a modeling is natural and opens the toolbox of logistic bandits; see \citet{Faury2020ImprovedOA}
and references cited therein. We however note (and discuss this fact in Appendix~\ref{app:Logistic-UCB1})
that the logistic regression model above is slightly different from the one by~\citet{Faury2020ImprovedOA}.

The concept of a conversion $y$
for a round when the no-op action $\anull$ is played is void, and thus,
we leave the probabilities $P(\anull,\bx)$ undefined,
though by an abuse of notation, these quantities might appear
but always multiplied by a $0$, given, e.g., by indicator functions like $\Inotanull{}$,
null rewards $r(\anull,\bx)$, or null costs $\bc(\anull,\bx)$.

\textbf{Policies: static vs.\ adaptive.}
The learner is given a number of rounds $T$ and a maximal budget $B$ (the same for all cost components, with no loss of generality: up
to some normalization). A static policy is a function $\pi : \cX \to \cP(\cA)$, where $\cP(\cA)$ is the set of
probability distributions over $\cA$. As is traditional in the literature of CBwK (we recall below why this is the case),
we take as benchmark the static policy $\pi^\star$ with largest expected
cumulative rewards under the condition that its cumulative costs abide by the budget constraints in expectation.
More formally,
$\pi^\star$ achieves the maximum defining
\begin{equation}
\label{eq:OPTpistar}
\begin{split}
\opt(\nu, P, B) =
&  \max_{\pi : \cX \to \cP(\cA)}
T \, \E_{\bX \sim \nu} \! \left[ \sum_{a \in \cA} r(a,\bX) \, P(a, \bX) \, \pi_a(\bX) \right] \\
&  \text{under} \qquad T \, \E_{\bX \sim \nu} \! \left[ \sum_{a \in \cA} \bc(a, \bX) \, P(a,\bX) \, \pi_a(\bX) \right]
\leq B \mathbf{1} \,,
\end{split}
\end{equation}
where $\E_{\bX \sim \nu}$ denotes an expectation solely over random contexts $\bX$ following
distribution $\nu$, where $\pi_a(\bX)$ denotes the probability mass put by $\pi(\bX)$ on $a \in \cA$,
and where $\leq$ is understood component-wise. Of course, the sums in the two expectations above are taken indifferently
over $\cA$ or $\cA \setminus \{ \anull \}$.

The learner uses an adaptive policy, i.e., a sequence of measurable
functions $\bp_t : \cH^{t-1} \times \cX \to \cP(\cA)$ indexed by $t \geq 1$, where $\cH = \cX \times \cA \times \{0,1\}$.
Indeed, the history available to the learner at the beginning of the round $t \geq 2$ is summarized by
$h_{t-1} = (\bx_s, a_s, \,y_s)_{s \leq t-1}$, and we define $h_0$ as the empty vector. Such a policy draws the action $a_t$
for round $t \geq 1$ independently at random according to $\bp_t(h_{t-1},\bx_t)$. We impose hard budget constraints on
adaptive policies: they must satisfy
\[
\sum_{t \leq T} \bc(a_t,\bx_t) y_t \leq B \mathbf{1} \quad \mbox{a.s.}
\]
Such adaptive policies are called feasible in the literature.
To abide by these hard constraints, we may restrict our attention to adaptive policies that pick Dirac masses on $\anull$
whenever one component of the cumulative costs is larger than $B-1$. At the same time, an adaptive policy should
maximize the cumulative rewards obtained or, equivalently, minimize its regret:
\[
R_T = \opt(\nu,P,B) - \sum_{t \leq T} r(a_t,\bx_t) y_t\,.
\]

It may be proved (along the same lines as \citet[Appendix~B]{Agrawal2016LinearCB} do for a different model)
that the optimal static policy $\pi^\star$ obtains, on average and in expectation,
a cumulative reward at least as good as the best feasible adaptive policy.

\textbf{Summary.} A summary of the learning protocol and of the goals is provided in Box~A.
We note here that rewards gained and vector costs suffered at round~$t$ in the case $y_t = 1$
of a conversion could be stochastic with expectations $r(a_t,\bx_t)$ and $\bc(a_t,\bx_t)$:
our analysis and the regret bounds would be unchanged, as long as the
expectation functions $r$ and $\bc$ are known.

\begin{figure}[t]
\begin{nbox}[title={Box A: Contextual bandits with knapsacks [CBwK] for a conversion model}]
\textbf{Known parameters:}
finite action set $\cA$ including a no-op action $\anull$; finite context set $\cX \subseteq \R^{n}$;
scalar reward function $r : \cA \times \cX \to [0, 1]$;
vector-valued cost function $\bc : \cA \times \cX \to [0, 1]^d$;
number $T$ of rounds; total budget constraint $B > 0$. \medskip

\textbf{Possibly unknown parameters:} context distribution $\nu$ on $\cX$;
probability of conversion given action and context $P : \cA \setminus \{ \anull \} \times \cX \to [0, 1]$,
modeled as $P(a,\bx) = \sig \bigl( \bphi(a, \bx)^{\transp} \btheta_\star \bigr)$
for some known transfer function $\bphi : \cA \setminus \{ \anull \} \times \cX \to \R^m$,
with $\Arrowvert \bphi \Arrowvert \leq 1$,
and some unknown parameter $\btheta_\star \in \R^m$, lying in a known bounded convex
set $\Theta$. \medskip

\textbf{For rounds} $t = 1, 2, 3, \ldots, T$:
\begin{enumerate}
  \item Context $\bx_t \sim \nu$ is drawn independently of the past;
  \item Learner observes $\bx_t$ and picks an action $a_t \in \cA$;
  \item Conversion $y_t \in \{0, 1\}$ is drawn according to $\Ber\bigl(P(a_t,\bx_t)\bigr)$;
  \item Learner observes $y_t$, gets reward $r(a_t,\bx_t) \,y_t$,
        and suffers costs $\bc(a_t,\bx_t) \, y_t$. \smallskip
\end{enumerate}

\textbf{Goals:} \ \ Maximize \ \ $\displaystyle{\sum_{t \leq T} r(a_t,\bx_t) \,y_t}$ \quad while controlling
\ \ $\displaystyle{\sum_{t \leq T} \bc(a_t,\bx_t) \, y_t \leq B \mathbf{1}}$ \vspace{-.125cm}
\end{nbox}
\vspace{-.5cm}
\end{figure}

\subsection{Discussion and Comparison to Existing Learning Protocols}
\label{sec:notcovered}

The setting described above may be reduced to the general setting of CBwK,
as introduced by \citet{Badanidiyuru2014ResourcefulCB} and~\citet{Agrawal2016AnEA}.
Indeed, introduce independent Bernoulli variables $y_{t,a}$ with parameters $P(a,\bx_t)$,
for all $a \in \cA \setminus \{\anull\}$, and set $y_{t,\anull} = 0$.
The vectors
\[
\smash{\Bigl( \bx_t, \, \bigl( r_t(a) \bigr)_{a \in \cA}, \, \bigl( \bc_t(a) \bigr)_{a \in \cA} \Bigr)}\,,
\qquad \mbox{where} \quad r_t(a) = r(a,\bx_t)\,y_{t,a}
\quad \mbox{and} \quad
\bc_t(a) = \bc(a,\bx_t)\,y_{t,a}
\]
are i.i.d., and upon picking action $a_t \in \cA$, the obtained and observed
rewards and cost vectors equal $r_t(a_t)$ and $\bc_t(a_t)$.
When $\cX$ is discrete, we may consider the set $\Pi$ of base policies
that map $\cX$ to $\{ \delta_a : a \in \cA \}$, the set of Dirac masses at some $a \in \cA$.
The convex hull of $\Pi$ is the set of all static policies $\cX \to \cP(\cA)$, against which
we would like our policy to compete; but the adaptive
policies by \citet{Badanidiyuru2014ResourcefulCB} and~\citet{Agrawal2016AnEA}
only compete with respect to the best single element in $\Pi$, not the
best convex combination of elements of $\Pi$.

The setting of linear CBwK (\citet{Agrawal2016LinearCB}) provides a structural link
between contexts and expected rewards and cost vectors, but in a linear way
that is incomparable to the setting of CBwK for a conversion model introduced above.
More details are given in Section~\ref{sec:LinearCBwL}.
We also mention that linear and logistic structural links between
contexts (prices) and rewards or costs were studied in a non-contextual setting (i.e., not in CBwK)
by~\citet{Maio2021}. Their strategy bears some resemblance to the one by~\citet{Agrawal2016LinearCB},
in particular, both consider an online convex optimization strategy as a subroutine.

All mentioned references consider a no-op action $\anull$.
(It could be replaced by the existence of a standard action $a_{\nocost}$
always achieving null costs and possibly some positive rewards.)

On the contrary, none of the mentioned references assumes that the context $\cX$ set is finite.
This is a technical necessity for a part of the adaptive policy introduced; see the
discussion of computational complexity at the end of Section~\ref{sec:policy}.
But somehow, considering a finite set $\Pi$ of policies,
as in \citet{Badanidiyuru2014ResourcefulCB} and~\citet{Agrawal2016AnEA}, is a counterpart
to assuming finiteness of~$\cX$.
Also, Appendix~\ref{app:simu} actually mitigates this restriction that $\cX$ is finite:
learning the logistic parameter $\btheta_\star$ may be achieved with continuous contexts
(see Phase~1 in Section~\ref{sec:policy});
only the subsequent optimization part (Phase~2 in Section~\ref{sec:policy})
requires finiteness of $\cX$. We may thus well discretize only $\cX$
for this Phase~2, which is exactly what Appendix~\ref{app:simu} performs.
This mitigation comes with possible theoretical guarantees as
Sections~\ref{sec:analysis-known} and~\ref{sec:analysis-unknown}
reveal that the errors $\varepsilon_t(a,\bx)$ for learning $\theta_\star$ and $P$,
obtained as outcomes of the first step of the analyses, are carried over in the subsequent steps,
where the optimization part is evaluated.

\section{Description of the Adaptive Policy Considered}
\label{sec:policy}

At each stage $t \geq 1$, the policy first updates an estimator $\hat{\btheta}_{t-1}$ of $\btheta_\star$
based on the history $h_{t-1}$ available so far, based on an adaptation of the Logistic-UCB1 algorithm by~\citet{Faury2020ImprovedOA},
and deduces estimators $\hat{P}_{t-1}(a,\bx)$ and upper confidence bounds $U_{t-1}(a,\bx)$ of the probabilities $P(a,\bx)$. The policy
then solves the corresponding estimated version of the optimization problem~\eqref{eq:OPTpistar}. We now
describe the corresponding two steps.
In the description below, quantities that depend on information available at round $t-1$ (respectively, $t$) are
indexed by $t-1$ (respectively, $t$).

\textbf{Phase 0: In case the cost constraints are about to be violated.}
To make sure cost constraints are never violated, whenever at least one of the components of the current
cumulative costs
is larger than $B-1$ and could possibly be larger than $B$ at the end of round $t$,
we play $\anull$ (and we actually do so for the rest of the rounds). This corresponds to defining
$\bp_t(h_{t-1},\bx) = \delta_{\anull}$ for all $\bx \in \cX$, where $\delta_{\anull}$
denotes the Dirac mass on $\anull$. Otherwise,
we proceed as described below in Phase~1 and Phase~2.

\textbf{Phase 1: Learning $\btheta_\star$ via an adapted Logistic-UCB1.} This first phase depends on a regularization parameter $\lambda > 0$
and on upper-confidence bonuses $\varepsilon_{t}(a,\bx) > 0$, both to be specified by the analysis.

At rounds $t \geq 2$, we first maximize a regularized log-likelihood of the history $h_{t-1}$:
\begin{equation}
\label{eq:besttheta}
\tilde{\btheta}_{t-1} \in \argmax_{\btheta \in \R^{m}}
\sum_{s=1}^{t-1} \Inotanull{s} \biggl( y_{s} \ln \sig \bigl( \bphi(a_s, \bx_s)^{\transp} \btheta \bigr)
+ (1- y_{s}) \ln \Bigl( 1 - \sig \bigl( \bphi(a_s, \bx_s)^{\transp} \btheta \bigr) \Bigr) \biggr)
- \frac{\lambda}{2} \Arrowvert \btheta \Arrowvert^2\,.
\end{equation}
In the expression above, we read that we only gather information about $\btheta_\star$
at those rounds $s$ when $a_s \ne \anull$.
When $\tilde{\btheta}_{t-1}$ does not belong to~$\Theta$, an ad hoc projection step
corrects for this, if needed:
\begin{align}
\label{eq:projtheta}
\hat{\btheta}_{t-1} & \in \argmin_{\btheta \in \Theta}
\Bigl\Arrowvert \Psi_{t-1}(\btheta) - \Psi_{t-1}\bigl(\tilde{\btheta}_{t-1}\bigr) \Bigr\Arrowvert_{W_{t-1}(\btheta)^{-1}}\,, \\
\nonumber
\mbox{where} \qquad
\Psi_{t-1}(\btheta) & =
\smash{\sum_{s=1}^{t-1} \Inotanull{s} \, \sig \bigl(\bphi(a_s, \bx_s)^{\transp} \btheta \bigr) \, \bphi(a_s, \bx_s) + \lambda \btheta} \phantom{\sum^t} \\
\label{eq:Wttheta}
\mbox{and} \qquad W_{t-1}(\btheta) & = \lambda \id_m +
\sum_{s=1}^{t-1} \Inotanull{s} \, \dot{\sig}\bigl( \bphi(a_s, \bx_s)^{\transp} \btheta \bigr) \,
\bphi(a_s, \bx_s) \bphi(a_s, \bx_s)^{\transp} \,.
\end{align}
We recall that the function $\dot{\sig}$ denotes the derivative of $\sig$, i.e., $\dot{\sig}(x) = \e^{-x}/\bigl( 1 + \e^{-x} \bigr)^2$.
We have $\dot{\sig} = \eta(1-\eta)$.

By plug-in, we finally define estimators and upper-confidence bounds of the probabilities $P(a,\bx)$
for $a \ne \anull$ and all $\bx \in \cX$:
\[
\hat{P}_{t-1}(a,\bx) = \sig \bigl( \bphi(a, \bx)^{\transp} \hat{\btheta}_{t-1} \bigr)
\qquad \mbox{and} \qquad
U_{t-1}(a,\bx) = \min \bigl\{ \hat{P}_{t-1}(a,\bx) + \varepsilon_{t-1}(a,\bx), \, 1 \bigr\}\,.
\]
For $\anull$, no estimators or upper-confidence bounds
need to be defined, as the quantities $P(\anull,\bx)$ are actually undefined.

\textbf{Phase~2: Sampling, via solving an optimization problem with expected constraints.}
This phase relies on a conservative-budget parameter denoted by $B_T$,
which is only slightly smaller than $B$ and whose exact value is to be specified by the analysis.

We start with the case of a known context distribution $\nu$.
At round $t = 1$, we play an arbitrary action in $\cA \setminus \{ \anull \}$.
At rounds $t \geq 2$, if at least one component of the cumulative vector costs suffered
so far is larger than $B - 1$, we pick $a_t = \anull$. Otherwise,
we pick for $\bp_t(h_{t-1},\,\cdot\,)$ the solution of the
optimization problem $\opt(\nu, U_{t-1}, B_T)$
defined\footnote{In the definition~\eqref{eq:OPTpistar} of $\opt(\nu, U_{t-1}, B_T)$,
expectations are only over $\bX \sim \nu$ and not over the random variable $U_{t-1}$;
more comments and explanations on this fact may be found in Appendix~\ref{app:vhpc}.}
in~\eqref{eq:OPTpistar}, and draw $a_t$ according to $\bp_t(h_{t-1},\bx_t)$.

When the context distribution is unknown, we rather pick for $\bp_t(h_{t-1},\,\cdot\,)$ the solution of the
optimization problem $\opt\bigl(\hat{\nu}_{t}, U_{t-1}, B_T \bigr)$,
where
\begin{equation}
\label{eq:empestnu}
\hat{\nu}_{t} = \frac{1}{t} \sum_{s=1}^{t} \delta_{\bx_s}\,,
\end{equation}
with $\delta_{\bx}$ denoting the Dirac mass at $\bx \in \cX$. Since $\bx_t$ is revealed at the beginning of
round $t$, before we pick an action, we may indeed use $\hat{\nu}_{t}$ at round $t$.

\textbf{Summary and discussion of the computational complexity.}
We summarize the considered adaptive policy in Box~B and now discuss its computational complexity.

As $\ln \varphi$ and $\ln (1-\varphi)$ are strictly concave and smooth,
the maximum-likelihood step \eqref{eq:besttheta} of Phase~1 consists
of maximizing a strictly concave and smooth function over $\R^m$,
which may be performed efficiently. The projection step~\eqref{eq:projtheta}
of Phase~1 is however an issue, both with the version of Logistic-UCB1
discussed here and with the earlier approach by \citet[Section~3]{Filippi2010GLM}.
The latter and \citet[Section~4.1]{Faury2020ImprovedOA} both underline that
the projection step~\eqref{eq:projtheta} is a complex optimization problem
that however does not often need to be solved in practice, as
they usually observe $\tilde{\btheta}_{t-1} \in \Theta$. Our numerical experiments
concur with this statement (but admittedly, they rely on choosing a rather large value of $\Theta$).

On the contrary, Phase 2 of the adaptive policy consists of solving a linear
program with $|\cX| \times |\cA|$ constraints, where
where $|\cX|$ and $\cA$ denote the cardinality of $\cX$ and $\cA$, respectively---see
the detailed rewriting~\eqref{eq:primal-problem} in the supplementary material.
Therefore, the computational complexity of Phase~2 is polynomial
(of weak order) in $|\cX| \times |\cA|$. To achieve this acceptable
complexity we had however to restrict our attention to finite
sets of contexts $\cX$, which requires in practice
segmenting countable or continuous
context sets into finitely many clusters, for instance. We
do so in our experiments.

\begin{figure}[t]
\begin{nbox}[title={Box B: Logistic-UCB1 for direct solutions to OPT problems}]
\textbf{Parameters:} regularization parameter $\lambda > 0$;
conservative-budget parameter $B_T$;
upper-confidence bonuses $\varepsilon_{s}(a,\bx) > 0$, for $s \geq 1$
and $(a,\bx) \in \bigl( \cA \setminus \{\anull\} \bigr) \times \cX$. \medskip

\textbf{Round} $t=1$: play an arbitrary action $a_1 \in \cA \setminus \{\anull\}$  \medskip

\textbf{At rounds} $t \geq 2$: \smallskip
\begin{enumerate}
\item[\underline{Phase 0}] If $\displaystyle{\sum_{s \leq t-1} \bc(a_s,\bx_s) \, y_s \leq (B-1) \mathbf{1}}$ is violated, then
$\bp_t(h_{t-1},\bx) = \delta_{\anull}$ for all $\bx$ \smallskip
\item[\underline{Phase 1}] Otherwise, compute a maximum-likelihood estimator $\tilde{\btheta}_{t-1}$ of $\btheta_\star$ according to~\eqref{eq:besttheta},
compute its projection $\hat{\btheta}_{t-1}$ onto~$\Theta$ according to~\eqref{eq:projtheta}, and define, for $a \ne \anull$:
\[
\hspace{-.75cm}
\hat{P}_{t-1}(a,\bx) = \sig \bigl( \bphi(a, \bx)^{\transp} \hat{\btheta}_{t-1} \bigr) \quad \mbox{and} \quad
U_{t-1}(a,\bx) = \min \Bigl\{ \hat{P}_{t-1}(a,\bx) + \varepsilon_{t-1}(a,\bx), \,\, 1 \Bigr\} \vspace{-.1cm}
\]
\item[\underline{Phase 2}] Compute the solution $\bp_t(h_{t-1},\,\cdot\,)$ of
\begin{align*}
\hspace{-.75cm}
\opt\bigl(\tilde{\nu}, U_{t-1}, B_T\bigr) =
&  \max_{\pi : \cX \to \cP(\cA)}
T \, \E_{\bX \sim \tilde{\nu}} \! \left[ \sum_{a \in \cA} r(a,\bX) \, U_{t-1}(a, \bX) \, \pi_a(\bX) \right] \\
&  \text{under} \qquad T \, \E_{\bX \sim \tilde{\nu}} \! \left[ \sum_{a \in \cA} \bc(a, \bX) \, U_{t-1}(a,\bX) \, \pi_a(\bX) \right]
\leq B_T \mathbf{1} \,,
\end{align*}
where $\tilde{\nu}$ denotes either $\nu$ (when it is known) or its empirical estimate~$\hat{\nu}_{t}$ in~\eqref{eq:empestnu} \smallskip \\
Draw an arm $a_t \sim \bp_t(h_{t-1},\bx_t)$.
\end{enumerate}
\end{nbox}
\vspace{-.5cm}
\end{figure}

\paragraph{Simulation study.}
A simulation study on partially simulated but realistic data may be found in Appendix~\ref{app:simu}.
The underlying dataset is the standard ``default of credit card clients''
dataset of~\citet{UCI2016DefaultCR}, initially provided by~\citet{UCIarticle}.
(It may be used under a Creative Commons Attribution 4.0 International [CC BY 4.0] license.)

\section{Analysis for a Known Context Distribution $\nu$}
\label{sec:analysis-known}

Since $\Theta$ is bounded, the following quantity, standardly introduced in the context of
logistic bandits (see \citet{Faury2020ImprovedOA} and references therein), is finite, though possibly large:
\[
\kappa = \sup \! \left\{ \frac{1}{\dot{\sig} \bigl(\bphi(a, \bx)^{\transp} \btheta \bigr)} :
\bx \in \cX, \ a \in \cA \setminus \{ \anull \}, \ \btheta \in \Theta \right\} < +\infty\,.
\]
We denote by $\Arrowvert \Theta \Arrowvert_{\maxn} = \max \bigl\{ \Arrowvert \btheta \Arrowvert : \ \btheta \in \Theta \bigr\}$
the maximal Euclidean norm of an element in $\Theta$.

By construction, given that individual cost vectors lie in $[0,1]^d$ and due to its ``Phase~0'', the adaptive policy considered
always satisfies the budget constraints. The bound on rewards reads as follows.

\begin{theorem}
\label{th:main}
In the setting of Box~A of Section~\ref{sec:protocol},
we consider the adaptive policy of Box~B of Section~\ref{sec:policy}
assuming that the distribution of the contexts is known, i.e., with $\tilde{\nu} = \nu$.
We set a confidence level $1-\delta \in (0,1)$
and use parameters $\lambda = m \ln(1+T/m)$,
\[
B_T = B - 2 - \sqrt{2T \ln(4d/\delta)}\,,
\]
and $\epsilon_t(a,\bx)$ stated in~\eqref{eq:def:vareps} of the supplementary material.
Then, provided that $T \geq 2m$ and $B > 4 + 2\sqrt{2T \ln (4d/\delta)}$,
we have, with probability at least $1 - 2 \delta$,
\[
\opt(\nu,P,B) -
\sum_{t \leq T} r(a_t,\bx_t) \, y_t \leq
\left( 4 + 2\sqrt{2T \ln \frac{4d}{\delta}} \right) \frac{\opt(\nu,P,B)}{B}
+ E_T + \sqrt{2T \ln \frac{4}{\delta}} + 1\,,
\]
where the closed-form expression of $E_T = \mathcal{O} \bigl( m \sqrt{T} \ln T \bigr)$
is in~\eqref{eq:bdETpart3} of the supplementary material.
\end{theorem}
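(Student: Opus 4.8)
The plan is to combine three ingredients: the optimistic confidence bounds produced by Phase~1, a primal--dual comparison of the per-round optimistic linear program with a budget-shrunk version of $\opt(\nu,P,B)$, and martingale concentration for the realized rewards and costs. Throughout I would condition on a favourable event on which the logistic confidence bounds are valid and the cumulative costs stay controlled, spending $\delta$ on each for a total failure probability of $2\delta$. The starting point is to write, for the realized reward, $\sum_{t\le T} r(a_t,\bx_t)\,y_t = \sum_t \rho_t + M_T^{\mathrm r}$, where $\rho_t = \E_{\bX\sim\nu}\bigl[\sum_a r(a,\bX)\,P(a,\bX)\,(\bp_t)_a(\bX)\bigr]$ is the true per-round expected reward of $\bp_t$ and $M_T^{\mathrm r}$ is a bounded martingale; Azuma--Hoeffding then gives $\sum_t r(a_t,\bx_t)\,y_t \ge \sum_t \rho_t - \sqrt{2T\ln(4/\delta)}$, which accounts for the $\sqrt{2T\ln(4/\delta)}$ term.

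The conceptual core is a short primal--dual lemma lower-bounding each optimistic LP value. Writing the Lagrangian dual of~\eqref{eq:OPTpistar}, one has, for any conversion profile $Q$ and any budget $b>0$, $\opt(\nu,Q,b)=\min_{\bmu\ge\mathbf0}\bigl\{b\,\mathbf1^{\transp}\bmu + T\,\E_{\bX}\bigl[\max_{a\in\cA}\bigl(r(a,\bX)-\bmu^{\transp}\bc(a,\bX)\bigr)Q(a,\bX)\bigr]\bigr\}$, where $\anull$ contributes $0$ to the inner maximum. For $U_{t-1}\ge P$ (optimism) one checks action by action that $\max\bigl\{0,(r-\bmu^{\transp}\bc)U_{t-1}\bigr\}\ge\max\bigl\{0,(r-\bmu^{\transp}\bc)P\bigr\}$ pointwise, so the bracketed objective is at least as large for $U_{t-1}$ at every $\bmu$; taking the minimum over $\bmu$ preserves this and yields $\opt(\nu,U_{t-1},B_T)\ge\opt(\nu,P,B_T)$. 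A dilution argument---play the optimal $\pi^\star$ with probability $B_T/B$ and $\anull$ otherwise---gives $\opt(\nu,P,B_T)\ge (B_T/B)\,\opt(\nu,P,B)$, since this scales both the reward and the (budget-$B$ feasible) cost of $\pi^\star$ by $B_T/B$. Hence every active round obeys $\opt(\nu,U_{t-1},B_T)\ge (B_T/B)\,\opt(\nu,P,B)$, and the incurred loss $\bigl(B-B_T\bigr)\opt(\nu,P,B)/B$ is the budget-shrinkage term; its $\opt/B$ shape is the usual CBwK signature of the dual-norm bound $\mathbf1^{\transp}\bmu^\star\le\opt(\nu,P,B)/B$.

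Next I would handle the budget (Phase~0) and tie the optimistic values back to $\rho_t$. Since $\bp_t$ is feasible for $\opt(\nu,U_{t-1},B_T)$ and $U_{t-1}\ge P$, the true expected per-round cost satisfies $\E_{\bX}\bigl[\sum_a\bc(a,\bX)P(a,\bX)(\bp_t)_a(\bX)\bigr]\le B_T\mathbf1/T$ coordinatewise. Azuma--Hoeffding applied to each of the $d$ cost-martingales, a union bound, and the choice $B_T = B-2-\sqrt{2T\ln(4d/\delta)}$ yield $\sum_{t\le T}\bc(a_t,\bx_t)\,y_t\le (B-2)\mathbf1<(B-1)\mathbf1$ with probability at least $1-\delta$; as costs are non-negative this bounds every partial sum, so Phase~0 never triggers and all rounds $t\ge2$ are active. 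For the reward, optimism gives $\rho_t\ge\opt(\nu,U_{t-1},B_T)/T-\bigl(\tilde\rho_t-\rho_t\bigr)$ with $\tilde\rho_t-\rho_t=\E_{\bX}\bigl[\sum_a r(a,\bX)\bigl(U_{t-1}-P\bigr)(a,\bX)(\bp_t)_a(\bX)\bigr]\le 2\,\E_{\bX}\bigl[\sum_a\varepsilon_{t-1}(a,\bX)(\bp_t)_a(\bX)\bigr]$; summing the previous paragraph's lower bound over active rounds then produces $\sum_t\rho_t\ge (B_T/B)\,\opt(\nu,P,B)-E_T$ up to an $O(1)$ loss from round~$1$.

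The main obstacle is the Phase~1 analysis that both validates $U_{t-1}\ge P$ and controls the optimism gap $E_T$. Adapting the self-normalized martingale bound of~\citet{Faury2020ImprovedOA} to the model~\eqref{eq:P-logistic} with $\lambda=m\ln(1+T/m)$, I would obtain a confidence radius $\beta_{t-1}=\O\bigl(\sqrt{m\ln t}\bigr)$ such that $\|\hat\btheta_{t-1}-\btheta_\star\|_{W_{t-1}}\le\beta_{t-1}$ for all $t$ with probability $1-\delta$; Cauchy--Schwarz and the Lipschitzness of $\sig$ then give $|P(a,\bx)-\hat P_{t-1}(a,\bx)|\le\varepsilon_{t-1}(a,\bx)$ simultaneously over all $(a,\bx)$, with $\varepsilon_{t-1}(a,\bx)\propto\beta_{t-1}\|\bphi(a,\bx)\|_{W_{t-1}^{-1}}$ (the constant $\kappa$ enters here through $W_{t-1}$). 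This secures optimism and $U_{t-1}-P\le 2\varepsilon_{t-1}$. To sum the bonuses, I would first pass from the expectation over $\nu$ to the played features via an extra Azuma step---$\sum_t\E_{\bX}[\sum_a\varepsilon_{t-1}(a,\bX)(\bp_t)_a(\bX)]\le\sum_t\varepsilon_{t-1}(a_t,\bx_t)+\O(\sqrt T)$, since $a_t\sim\bp_t(\bx_t)$ and $\bx_t\sim\nu$---and then apply the elliptical-potential (log-determinant) lemma to $\sum_t\|\bphi(a_t,\bx_t)\|_{W_{t-1}^{-1}}\le\sqrt{2mT\ln(1+T/(\lambda m))}$, which after multiplication by $\beta_T$ gives $E_T=\O\bigl(m\sqrt T\ln T\bigr)$. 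The delicate points are a uniform-in-$(a,\bx)$ confidence statement with the right $\kappa$-dependence and the expectation-to-played-features exchange; once these hold, assembling the displays above yields $\opt(\nu,P,B)-\sum_{t\le T}r(a_t,\bx_t)y_t\le (B-B_T)\opt(\nu,P,B)/B+E_T+\sqrt{2T\ln(4/\delta)}+1$, which is the claimed bound after substituting $B-B_T=2+\sqrt{2T\ln(4d/\delta)}$, the stated constants $4$ and $2$ absorbing the round-$1$ and $(T-1)/T$ slacks.
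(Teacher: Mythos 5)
Your proposal is correct in substance and reaches the stated bound (in fact with a marginally better constant), but the pivotal step is handled by a genuinely different argument than the paper's. The paper's Steps~2 and~4 extract the optimal dual variables $\bbeta^{\budg,\star}_t$ of the Phase-2 program via the KKT conditions, characterize $\bp_t(h_{t-1},\,\cdot\,)$ pointwise as a maximizer of the penalized gains $\bigl(r-(\bbeta^{\budg,\star}_t)^{\transp}\bc\bigr)_+ U_{t-1}$, compare it against $\pi^\star$ in that pointwise problem, and close the loop with the KKT identity $\opt(\nu,U_{t-1},B_T)\geq B_T(\bbeta^{\budg,\star}_t)^{\transp}\mathbf{1}$, which yields $\opt(\nu,P,B)-\opt(\nu,U_{t-1},B_T)\leq\frac{B-B_T}{B_T}\opt(\nu,U_{t-1},B_T)$. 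You instead prove the direct comparison $\opt(\nu,U_{t-1},B_T)\geq\opt(\nu,P,B_T)\geq\frac{B_T}{B}\opt(\nu,P,B)$ by (i) monotonicity in the conversion profile of the partially dualized objective---valid because the inner maximum includes $\anull$, so it equals $\max\bigl\{0,\max_{a}(r-\bmu^{\transp}\bc)Q\bigr\}$ and is nondecreasing in each $Q(a,\bx)\geq 0$---and (ii) a dilution of $\pi^\star$ with $\anull$. Both ingredients check out (the dilution uses exactly that $\anull$ has null reward and costs), the KKT machinery is bypassed entirely, and you obtain $\frac{B-B_T}{B}$ rather than $\frac{B-B_T}{B_T}$, so the doubling step $1/(1-u)\leq 1+2u$ that produces the factor $4+2\sqrt{2T\ln(4d/\delta)}$ in the theorem becomes unnecessary. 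What your route gives up is the explicit interpretation of $\bp_t$ as maximizing reward minus scalarized cost, which the paper emphasizes as the conceptual link to the $Z$-parameter of Agrawal and Devanur.

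Two bookkeeping remarks. Your reward chain passes through the expected optimism gap $\E_{\bX\sim\nu}\bigl[\sum_a (U_{t-1}-P)(a,\bX)\,p_{t,a}(\bX)\bigr]$ and then needs an extra Hoeffding--Azuma exchange to land on $\sum_t\varepsilon_{t-1}(a_t,\bx_t)$; the paper avoids this by applying $U_{t-1}-P\leq 2\varepsilon_{t-1}$ directly at the played pair \emph{before} the concentration step, so your ordering produces an additional $\O(\sqrt{T})$ deviation term (and consumes a further slice of $\delta$) that is absent from the stated constants---harmless for the order of magnitude, but the displayed bound would not come out verbatim. Also, with your $B_T$ the cost total comes to $B-1$ once the arbitrary action of round $t=1$ is counted, not $B-2$, and the per-round conditional cost bound $\leq B_T\mathbf{1}/T$ should be asserted in Phase~0 as well as Phase~2 (it is trivially $\mathbf{0}$ there), as the paper does via an explicit indicator, to avoid circularity in concluding that Phase~0 never triggers.
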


We will rather discuss the bound of the more general Theorem~\ref{th:main-unknown}
(to be stated and proved in Section~\ref{sec:analysis-unknown}) than the one
of Theorem~\ref{th:main}.
We provide a proof sketch in Section~\ref{sec:proofsketch} and
discuss the main technical novelty in Section~\ref{sec:maintechnicalnovelty}.

\subsection{Proof Sketch for Theorem~\ref{th:main}}
\label{sec:proofsketch}

The detailed proof of Theorem~\ref{th:main} may be found in Appendix~\ref{app:proof-nu-known}. We provide here an overview thereof,
highlighting the four main ingredients. The third and fourth steps benefited from some inspiration
drawn from the proof techniques of~\citet{Agrawal2016LinearCB}. The first step
is an adaptation of Lemmas~1 and~2 by \citet{Faury2020ImprovedOA}.

\textbf{First,} the mentioned adaptation
provides values of the parameters $\epsilon_t(a,\bx)$ such that, with
probability at least $1-\delta$,
\begin{align*}
\nonumber
& & \forall t \geq 1, \ \forall a \in \cA \setminus \{ \anull \}, \ \forall \bx \in \cX, \qquad &
\bigl| \hat{P}_{t}(a,\bx) - P(a,\bx) \bigr| \leq \epsilon_t(a,\bx)\,, \\
\mbox{hence} & \ & & U_t(a,\bx) - 2 \epsilon_t(a,\bx) \leq P(a,\bx) \leq U_t(a,\bx)\,,
\end{align*}
while $\displaystyle{\sum_{t \leq T} \epsilon_{t-1}(a_t,\bx_t) \Inotanull{t}}$ is of order $\sqrt{T}$ up to poly-logarithmic terms.

\textbf{Second,} the Phase~2 formulation of the strategy, in a primal form, is equivalently restated in a dual form.
For each round $t \geq 2$,
strong duality holds and entails the existence of a vector $\bbeta^{\budg,\star}_t \in \R^d$ such that
$\bp_t(h_{t-1},\,\cdot\,)$ may be identified as the argmax over $\pi: \cX \to \cP(\cA)$ of
\[
\E_{\bX \sim \nu} \! \left[ T \sum_{a \in \cA} \Bigl( r(a,\bX)  - \bigl(\bbeta^{\budg,\star}_t\bigr)^{\, \transp} \bc(a, \bX) \Bigr) \, U_{t-1}(a,\bX) \, \pi_{a}(\bX)  +  \sum_{\bx \in \cX} \sum_{a \in \cA} {\beta_{\bx, a}^{\ppos,\star}} \, \pi_{a}(\bx) \right].
\]
By exploiting the KKT conditions, we are able to get rid of the double sum above and finally
get a $\cX$--pointwise characterization of $\bp_t(h_{t-1},\,\cdot\,)$:
for all $\bx \in \cX$,
\begin{align*}
\bp_t(h_{t-1},\bx) \in
& \argmax_{\bq \in \cP(\cA)} \sum_{a \in \cA} \bigl( r(a,\bx) - \bigl(\bbeta^{\budg,\star}_t\bigr)^{\, \transp} \bc(a, \bx) \bigr)
\, U_{t-1}(a,\bx) \, q_{a} \\
& = \argmax_{\bq \in \cP(\cA)} \sum_{a \in \cA} \bigl( r(a,\bx) - \bigl(\bbeta^{\budg,\star}_t\bigr)^{\, \transp} \bc(a, \bx) \bigr)_+
\, U_{t-1}(a,\bx) \, q_{a}\,.
\end{align*}
Non-negative parts $(\,\cdot\,)_+$ may be introduced thanks to the existence of the no-op action $\anull$.
The distributions $\bp_t(h_{t-1},\bx)$ may therefore be interpreted as maximizing some upper-confidence bound
on penalized gains (rewards minus some scalarized costs);
the dual variables $\bbeta^{\budg,\star}_t$ play a role similar to the $Z$ parameter of \citet[Section~3.3]{Agrawal2016LinearCB} in
terms of weighing gains versus costs.
In passing, we also prove
\[
\smash{\opt(\nu,U_{t-1},B_T) \geq B_T (\bbeta^{\budg,\star}_t )^{\, \transp} \mathbf{1}}
\]
based on the KKT conditions.
The latter inequality is comparable in spirit to the bound of \citet[Corollary 3]{Agrawal2016LinearCB}, relating $Z$ to $\opt(\nu,P,B)/B$.

\textbf{Third,} for $t \geq 2$,
whenever the policy $\bp_t(h_{t-1},\,\cdot\,)$ is obtained
by solving the optimization problem $\opt(\nu,U_{t-1},B_T)$ of Phase~2
and by independence of $\bx_t$ and $h_{t-1}$, we have
\begin{align*}
\frac{\opt(\nu,U_{t-1},B_T)}{T}
& = \E_{\bX \sim \nu} \!\left[ \sum_{a \in \cA} r(a, \bX) \, U_{t-1}(a, \bX) \, p_{t,a}(h_{t-1},\bX) \right] \\
& = \E\bigl[ r(a_t, \bx_t) \, U_{t-1}(a_t, \bx_t) \,\big|\, h_{t-1} \bigr]\,.
\end{align*}
Therefore, repeated applications of the Hoeffding-Azuma inequality and the inequalities of the first step
entail that, up to quantities of the order of $\sqrt{T}$,
\begin{align*}
\sum_{t=2}^T \frac{\opt(\nu,U_{t-1},B_T)}{T} & \approx
\sum_{t=2}^T r(a_t, \bx_t) U_{t-1}(a_t, \bx_t) \\
& \lesssim \sum_{t=2}^T \epsilon_{t-1}(a_t,\bx_t) \Inotanull{t} + \sum_{t=2}^T r(a_t, \bx_t) P(a_t, \bx_t)
\lesssim \sum_{t=2}^T r(a_t, \bx_t) \, y_t\,.
\end{align*}
We thus only need to control $\displaystyle{\opt(\nu,P,B) - \sum_{t=2}^T \frac{\opt(\nu,U_{t-1},B_T)}{T}}$,
which may be assumed $\geq 0$.

The value $B_T = B - 2 - \sqrt{2 T \ln (4d/\delta)}$ and
similar Hoeffding-Azuma-based arguments show that with high probability, the budget limit $B-1$ is indeed never reached and that
we always compute $\bp_t(h_{t-1},\,\cdot\,)$ in the way indicated by Phase~2.

\textbf{Fourth,} we collect all bounds together. We start with
\[
\sum_{t=2}^T \frac{B_T}{T} (\bbeta^{\budg,\star}_t )^{\, \transp} \mathbf{1}
\leq \sum_{t=2}^T \frac{\opt(\nu,U_{t-1},B_T)}{T} \leq \opt(\nu,P,B)\,.
\]
We the exploit the dual characterization of $\bp_t(h_{t-1},\,\cdot\,)$ and the control
$P \leq U_{t-1}$ to get that with high probability, for all $\bx \in \cX$,
\begin{multline*}
\sum_{a \in \cA} \bigl( r(a,\bx) - \bigl(\bbeta^{\budg,\star}_t\bigr)^{\, \transp} \bc(a, \bx) \bigr) \, U_{t-1}(a,\bx) \, p_{t, a}(h_{t-1},\bx) \\
\geq \sum_{a \in \cA} \bigl( r(a,\bx) - \bigl(\bbeta^{\budg,\star}_t\bigr)^{\, \transp} \bc(a, \bx) \bigr) \, P(a,\bx) \, \pi_a^{\star}(\bx)\,.
\end{multline*}
After integration over $\bX \sim \nu$ and substituting of the definitions of $\pi^\star$ and $\bp_{t, a}(h_{t-1},\,\cdot\,)$,
as well as the equality stemming from the KKT conditions, we have
\begin{align*}
& \overbrace{\E_{\bX \sim \nu} \left[ \sum_{a \in \cA} r(a,\bX)  \, U_{t-1}(a,\bX) \, \bp_{t, a}(h_{t-1},\bX) \right]}^{= \opt(\nu,U_{t-1},B_T)/T} \\
& \ \ \ \ - \underbrace{\E_{\bX \sim \nu} \left[ \sum_{a \in \cA}  \bigl(\bbeta^{\budg,\star}_t\bigr)^{\, \transp} \bc(a, \bX) \, U_{t-1}(a,\bX) \, \bp_{t, a}(h_{t-1},\bX)\right]}_{(B_T/T) (\bbeta^{\budg,\star}_t)^{\, \transp} \mathbf{1}} \\
\geq \ \ & \underbrace{\E_{\bX \sim \nu} \left[  \sum_{a \in \cA} r(a,\bX) \, P(a,\bX) \, \pi^{\star}_a(\bX) \right]}_{=\opt(\nu,P,B)/T} -(\bbeta^{\budg,\star}_t\bigr)^{\, \transp} \underbrace{\E_{\bX \sim \nu} \left[ \sum_{a \in \cA}  \bc(a, \bX) \, P(a,\bX) \, \pi^{\star}_a(\bX)\right]}_{\leq (B/T) \mathbf{1}}.
\end{align*}
Rearranging and summing over $2 \leq t \leq T$, we obtain
\[
\sum_{t=2}^T \frac{\opt(\nu,P,B) - \opt(\nu,U_{t-1},B_T)}{T}
\leq \sum_{t=2}^T \frac{B - B_T}{T} (\bbeta^{\budg,\star}_t )^{\, \transp} \mathbf{1}
\leq \left( \frac{B}{B_T} - 1 \right) \opt(\nu,P,B)\,,
\]
where we substituted the first inequality stated in this fourth step. This concludes the proof.

\subsection{Discussion on the Main Technical Novelties}
\label{sec:maintechnicalnovelty}

As should be clear from the comments at the beginning of Section~\ref{sec:proofsketch},
the technical novelties lies in the second step of the proof of Theorem~\ref{th:main}.
On the one hand, we are able to directly analyze a strategy stated in a
primal form, which is a more natural formulation. On the other hand, doing so,
we are also able to avoid the issues that come with dual formulations, relying, e.g., on some critical parameter $Z$, as in
\citet[Theorem~3]{Agrawal2016LinearCB}, to trade off rewards and costs.
This parameter $Z$ should be of order $\opt/B$ and has to be learned, e.g., through
$\sqrt{T}$ initial exploration rounds.
(More details are to be found in Section~\ref{sec:th-lin}.)
In our analysis, this parameter $Z$ is superseded
by dual optimal variables $\bbeta^{\budg,\star}_t \geq \mathbf{0}$, that are only used in the analysis
and not to state the policy, unlike in \citet{Agrawal2016LinearCB}.
Put differently, the clever use in this context of KKT conditions is the main technical novelty.
On a side note,
we are also able to take care in an explicit and detailed fashion of the no-op action $\anull$,
whose specific treatment is often unaddressed in the literature.

\section{Analysis for an Unknown Context Distribution~$\nu$}
\label{sec:analysis-unknown}

When the context distribution $\nu$ is unknown, we simply estimate it through its empirical frequencies~\eqref{eq:empestnu}.
The regret bound is almost unchanged:
an additional mild factor of, e.g., $2 |\cX| \sqrt{2T \ln (2 T |\cX|/\delta)}$ appears
in the $\sqrt{T}$ term multiplying $\opt(\nu,P,B)/B$.
This term comes from some uniform deviation argument
stated in~\eqref{eq:unifctrl} and~\ref{eq:tobeimproved}.

\begin{theorem}
\label{th:main-unknown}
In the setting of Box~A of Section~\ref{sec:protocol},
we consider the adaptive policy of Box~B of Section~\ref{sec:policy}
with $\tilde{\nu} = \hat{\nu}_t$ at rounds $t \geq 2$.
We set a confidence level $1-\delta \in (0,1)$
and use parameters $\lambda = m \ln(1+T/m)$, a working budget of
\[
\smash{B - b_T\,, \qquad \mbox{where} \qquad b_T = 2 + \sqrt{2T \ln(4d/\delta)}
+ |\cX| \sqrt{2T \ln\bigl(2 T |\cX|/\delta\bigr)}\,,}
\]
and $\epsilon_t(a,\bx)$ stated in~\eqref{eq:def:vareps} of the supplementary material.
Then, provided that $T \geq 2m$ and $B > 2b_T$,
we have, with probability at least $1 - 3\delta$,
\[
\opt(\nu,P,B) -
\sum_{t \leq T} r(a_t,\bx_t) \, y_t \leq
2 b_T \left( 1 + \frac{\opt(\nu,P,B)}{B} \right) + E_T\,, \vspace{-.15cm}
\]
where the expression of $E_T = \mathcal{O} \bigl( m \sqrt{T} \ln T \bigr)$
may be found in~\eqref{eq:bdETpart3} of the supplementary material.
\end{theorem}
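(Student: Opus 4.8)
The plan is to follow the four-step structure of the proof sketch for Theorem~\ref{th:main}, inserting the extra machinery needed to handle the estimated context distribution $\hat{\nu}_t$ in place of the true $\nu$. The skeleton is unchanged: the first step provides the confidence bounds $\epsilon_t(a,\bx)$ controlling $|\hat{P}_t(a,\bx)-P(a,\bx)|$ uniformly with high probability and the sum $\sum_{t\leq T}\epsilon_{t-1}(a_t,\bx_t)\Inotanull{t}=\O(m\sqrt{T}\ln T)$; the second step dualizes the Phase~2 linear program $\opt(\hat{\nu}_t,U_{t-1},B-b_T)$, producing dual variables $\bbeta^{\budg,\star}_t\geq\mathbf{0}$ and the pointwise argmax characterization of $\bp_t(h_{t-1},\,\cdot\,)$, together with the inequality $\opt(\hat{\nu}_t,U_{t-1},B-b_T)\geq(B-b_T)(\bbeta^{\budg,\star}_t)^{\transp}\mathbf{1}$ from the KKT conditions. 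These two steps are essentially verbatim from the known-$\nu$ case, since the dualization is carried out for whichever measure $\tilde{\nu}$ is plugged in.

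\textbf{The new ingredient is a uniform deviation bound} comparing expectations under $\hat{\nu}_t$ to those under $\nu$. Concretely, I would establish, with probability at least $1-\delta$, a control of the form
\[
\sup_{a\in\cA,\,\bx\in\cX}\Bigl| \hat{\nu}_t(\{\bx\}) - \nu(\{\bx\}) \Bigr| \le \sqrt{\frac{2\ln(2T|\cX|/\delta)}{t}}\,,
\]
uniformly over $t\leq T$ and $\bx\in\cX$, via Hoeffding's inequality applied to each of the $|\cX|$ indicator frequencies followed by a union bound over $\bx$ and $t$. Summing the resulting per-round discrepancies over $t$, and using that $r,\bc$ and $U_{t-1}$ are bounded in $[0,1]$, yields that for any static policy $\pi$ the quantities $\E_{\bX\sim\hat{\nu}_t}[\cdots]$ and $\E_{\bX\sim\nu}[\cdots]$ differ, after multiplication by $T$ and summation, by at most a term of order $|\cX|\sqrt{2T\ln(2T|\cX|/\delta)}$. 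This is exactly the extra contribution to $b_T$ and explains the stated formula.

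In the third step I would replace the clean identity of the known-$\nu$ case by its estimated analogue: since $\bx_t$ is drawn from $\nu$ but $\bp_t$ solves the problem under $\hat{\nu}_t$, the equality linking $\opt(\hat{\nu}_t,U_{t-1},B-b_T)/T$ to $\E[r(a_t,\bx_t)U_{t-1}(a_t,\bx_t)\mid h_{t-1}]$ now acquires a deviation term, bounded by the uniform control above, on top of the Hoeffding--Azuma fluctuations. Combining these with the first-step inequalities $U_{t-1}-2\epsilon_{t-1}\leq P\leq U_{t-1}$ and the Hoeffding--Azuma inequality relating $\sum r(a_t,\bx_t)P(a_t,\bx_t)$ to $\sum r(a_t,\bx_t)y_t$, I obtain (up to $\O(m\sqrt{T}\ln T)$ terms absorbed into $E_T$, and up to the $|\cX|\sqrt{T\ln}$ term) that $\sum_{t=2}^T\opt(\hat{\nu}_t,U_{t-1},B-b_T)/T$ lower-bounds the realized reward. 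The same deviation argument, applied to the cumulative-cost process, justifies with high probability that the budget threshold $B-1$ is never crossed and that Phase~2 is genuinely executed at every round; the choice $b_T=2+\sqrt{2T\ln(4d/\delta)}+|\cX|\sqrt{2T\ln(2T|\cX|/\delta)}$ provides exactly the slack needed, now including the context-estimation term.

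\textbf{The fourth step collects everything,} and the main obstacle lives here. I would combine the chain
\[
\sum_{t=2}^T\frac{B-b_T}{T}(\bbeta^{\budg,\star}_t)^{\transp}\mathbf{1}
\leq\sum_{t=2}^T\frac{\opt(\hat{\nu}_t,U_{t-1},B-b_T)}{T}\leq\opt(\nu,P,B)+\O\bigl(|\cX|\sqrt{T\ln}\bigr)
\]
with the dual lower bound on $\opt(\hat{\nu}_t,U_{t-1},B-b_T)/T$ obtained by feeding $\pi^\star$ into the argmax characterization of $\bp_t$. The delicate point is that the benchmark $\pi^\star$ is optimal for $\opt(\nu,P,B)$, so when I plug it into the problem defined under $\hat{\nu}_t$ I must pay a uniform-deviation price on both the reward and the cost integrals, and I must also account for the gap between the working budget $B-b_T$ and $B$ through the ratio $B/(B-b_T)-1$, exactly as in the known-$\nu$ case. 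Carefully tracking these two sources of slack and checking that they telescope to the advertised factor $2b_T(1+\opt(\nu,P,B)/B)$ — rather than doubling or leaving a stray $|\cX|$ factor outside the $\opt/B$ multiplier — is where the bookkeeping is genuinely at risk, and I would verify the arithmetic of $b_T$ and of the condition $B>2b_T$ at that stage.
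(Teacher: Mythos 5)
Your proposal follows essentially the same route as the paper's proof: steps one and two are reused verbatim, the new ingredient is the uniform deviation bound $\sup_{\bx}|\hat{\nu}_t(\bx)-\nu(\bx)|$ obtained by Hoeffding plus a union bound over $t\leq T$ and $\bx\in\cX$ (the paper gets the slightly sharper per-round constant $\sqrt{\ln(2T|\cX|/\delta)/(2t)}$, whose sum over $t$ yields exactly the $|\cX|\sqrt{2T\ln(2T|\cX|/\delta)}$ term in $b_T$), and this deviation is injected into the cost-control and reward arguments of step three and into the comparison against $\pi^\star$ in step four, with the final telescoping handled—as in the paper—by the KKT identity, the dual bound $\opt(\hat{\nu}_t,U_{t-1},B-b_T)\geq(B-b_T)(\bbeta^{\budg,\star}_t)^{\transp}\mathbf{1}$, and a sign case analysis on $\opt(\nu,P,B)-\sum_t\opt(\hat{\nu}_t,U_{t-1},B-b_T)/T$. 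The bookkeeping you flag as risky is carried out in the paper by applying the deviation bound to the single $[0,1]$-valued penalized-gain function $\sum_a(r-\bbeta^{\transp}\bc)_+P\,\pi^\star_a$ (the cost integral of $\pi^\star$ under $\nu$ being bounded by $(B/T)\mathbf{1}$ by definition, so no second deviation is needed there), and it does close to the advertised $2b_T(1+\opt(\nu,P,B)/B)+E_T$.
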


The order of magnitude of the regret bound is
$\bigl( m + |\cX| \opt(\nu,P,B)/B \big) \sqrt{T} \ln T$,
which is reminiscent of all known regret upper bounds for CBwK (e.g., the ones by
\citet{Badanidiyuru2014ResourcefulCB} and~\citet{Agrawal2016AnEA}, for general CBwK,
and \citet{Agrawal2016LinearCB} for linear CBwK, see Section~\ref{sec:LinearCBwL}).
The factor $|\cX|$ may be improved, see below, but this is a detail.
A discussion on exhibiting corresponding lower bounds is to be found at the end of
Section~\ref{sec:LinearCBwL}.

A detailed proof of Theorem~\ref{th:main-unknown} is provided in Appendix~\ref{app:proof-nu-unknown} of the supplementary material.
It follows closely the proof of Theorem~\ref{th:main}, with modifications mostly consisting
of relating quantities of the form
\[
\smash{\E_{\bX \sim \hat{\nu}_t} \bigl[ f(\bX) \bigr]
\ \  \mbox{vs.} \ \
\E_{\bX \sim \nu} \bigl[ f(\bX) \bigr]\,,
\ \  \mbox{where, e.g.,} \ \
f(\bX) = \sum_{a \in \cA} r(a,\bX)  \, U_{t-1}(a,\bX) \, \bp_{t, a}(h_{t-1},\bX)\,.} \vspace{.15cm}
\]
To do so, we use that
for all functions $f : \cX \to [0,1]$,
\begin{equation}
\label{eq:unifctrl}
\forall \, t \leq T, \qquad
\Bigl| \E_{\bX \sim \hat{\nu}_t} \bigl[ f(\bX) \bigr]
- \E_{\bX \sim \nu} \bigl[ f(\bX) \bigr] \Bigr|
\leq \sum_{x \in \cX} \bigl| \hat{\nu}_t(\bx) - \nu(\bx) \bigr|
\defeq \bigl\Arrowvert \hat{\nu}_t - \nu \bigr\Arrowvert_1
\,,
\end{equation}
where $\bigl\Arrowvert \hat{\nu}_t - \nu \bigr\Arrowvert_1$ is the total variation distance
between $\hat{\nu}_t$ and $\nu$.
In Appendix~\ref{app:proof-nu-unknown},
we upper bound the latter, for the sake of simplicity,
in a crude way by applying $T |\cX|$ times the Hoeffding-Azuma inequality
(once for each $1 \leq t \leq T$ and $\bx \in \cX$) and obtain that
with probability at least $1-\delta$,
\begin{equation}
\label{eq:tobeimproved}
\forall \, t \leq T, \qquad
\bigl\Arrowvert \hat{\nu}_t - \nu \bigr\Arrowvert_1
\leq |\cX| \sqrt{\frac{1}{2t} \ln \frac{2 T |\cX|}{\delta}}\,.
\end{equation}
The $|\cX| \sqrt{2T \ln (2 T |\cX|/\delta)}$ term in the regret bound of Theorem~\ref{th:main-unknown}
appears as the sum over $t \leq T$ of the deviation bounds~\eqref{eq:tobeimproved}.
The bounds~\eqref{eq:tobeimproved} may actually be improved
into bounds of the order of $\sqrt{|\cX|/t}$,
via some Cauchy-Schwarz bound and a deviation argument in Banach spaces by \cite{Pinelis},
or by more direct techniques described by \citet[Lemma~3]{Dev} and \citet{BK12}.
In any case, the regret bound of Theorem~\ref{th:main-unknown} automatically benefits from such
improvements, by replacing the $2 |\cX| \sqrt{2T \ln (2 T |\cX|/\delta)}$ term in the bound therein by the (sum over $t \leq T$ of the) better
uniform deviation bounds.

\section{Extension to Linear Contextual Bandits with Knapsacks}
\label{sec:LinearCBwL}

This section is a brief summary of Appendix~\ref{app:linCBwK}.
We explain therein how the adaptive policy of Box~B may be adapted to the
setting of linear CBwK, introduced by \citet{Agrawal2016LinearCB},
where the bounded rewards $r_t$ and vector costs $\bc_t$ are independently generated at each round
according to bounded distributions with respective
expectations $\olr(a_t,\bx_t)$ and $\olbc(a_t,\bx_t)$, depending linearly on (a transfer function $\varphi$ of) the contexts:
for all $a \ne \anull$ and $\bx \in \cX$, for all components $i$ of $\olbc$,
\[
\olr(a,\bx) = \bphi(a,\bx)^{\transp} \bmu_{\star}
\qquad \mbox{and} \qquad
\olc_i(a,\bx) = \bphi(a,\bx)^{\transp} \btheta_{\star,i}\,.
\]
We consider the same benchmark $\opt(\nu,\olr,\olbc,B)$ as \citet{Agrawal2016LinearCB} and are able to
exhibit a similar $\bigl( \opt(\nu,\olr,\olbc,B) / B \bigr) m \sqrt{T} \ln T$
regret bound, with however a slight relaxation on the order of magnitude required for $B$.
We do so with a strategy that we deem more direct and natural, inspired from the one of Box~B,
where in Phase~1 a LinUCB-type (\citet{AbbasiYadkori2011ImprovedAF}) estimation of the parameters
is performed, and where in Phase~2, a direct solution to an $\opt$ problem with estimated
parameters is performed. The parameters are upper-confidence functions $U_{t-1}$ on $\olr$
and lower-confidence vector functions $\bL_{t-1}$ on $\olbc$.

The main advantage of our approach in the case of linear contextual bandits
is exactly as described in Section~\ref{sec:maintechnicalnovelty}:
avoiding the critical parameter $Z$ of
\citet[Theorem~3]{Agrawal2016LinearCB}, which is used to trade off rewards and costs.
The main limitation of our approach is the assumption of a finite context set $\cX$,
which is required to make the Phase-2 linear program tractable.

\section{Future Work}

We conclude this article with a list of issues to be further investigated.

\emph{First}, as discussed in Section~\ref{sec:notcovered}, the restrictions of finiteness
should be alleviated: finiteness of the context set in the setting of this
article, or finiteness of the set of benchmark policies in other settings
(see \citealp{Badanidiyuru2014ResourcefulCB} and~\citealp{Agrawal2016AnEA}).

\emph{Second}, we only dealt with $\leq$ budget constraints (and do does the literature so far).
Direct approaches to constraints of the form $\geq$ remain to be further investigated.

\emph{A third} series of questions to be clarifies concerns regret lower bounds, and more generally,
the tightness of the results---in particular, the required conditions on budget sizes.
Earlier references for contextual bandits with knapsacks
did also not provide lower bounds statements that were simultaneously
optimal (i.e., matching the obtained upper bounds) and general (i.e., valid for all problems
with a given number of rounds~$T$,
a given budget $B$, and a given value $\opt$ for the optimal expected reward achievable by a static policy).
\citet[comments after Theorem~1]{Badanidiyuru2014ResourcefulCB} merely indicates
that the obtained regret upper bound is optimal in some regimes, e.g., when the budget $B$
grows linearly with the number of rounds $T$.
\citet[comments after Theorem~1]{Agrawal2016LinearCB} only compares the obtained regret upper bound
to the case of no budget constraints.
In particular, as far as the orders of magnitude in~$T$ are concerned,
it is unclear whether the $(\opt/B) \sqrt{T}$ rates achieved (up to poly-logarithmic factors) in
the present article and in the two mentioned references are optimal.
These rates do not match the optimal rates in the case of no contexts, which were stated
and proved by~\citet{Badanidiyuru2013BanditsWK}.

\begin{ack}
Zhen Li and Gilles Stoltz have no direct funding to acknowledge
other than the salaries paid by their employers, BNP Paribas,
CNRS, and HEC Paris. They have no competing interests to declare.
\end{ack}

\bibliographystyle{plainnat}
\bibliography{CBwK-conversion--bib}

\clearpage
\appendix

\begin{center}
  {\Large\bf Supplementary Material for \medskip \\
  	``{\titre}''}
\end{center}

\ \\

The appendices of this article contain the following elements.
\begin{itemize}
\item Appendix~\ref{app:industrialmotivation} provides the industrial motivation
behind the setting described in Section~\ref{sec:protocol}, namely, within the banking industry,
a market share expansion for loans.
\item Appendix~\ref{app:proof-nu-known} contains the proof of Theorem~\ref{th:main},
i.e., the regret bound in case of a known distribution~$\nu$, except
for a lemma on learning the parameter of logistic bandits, provided
in Appendix~\ref{app:Logistic-UCB1}.
\item Appendix~\ref{app:Logistic-UCB1} states and proves
the indicated lemma; both the statement and the proof are mere adaptations
of \citet[Lemmas~1 and~2]{Faury2020ImprovedOA}.
\item Appendix~\ref{app:proof-nu-unknown} contains the proof of
Theorem~\ref{th:main-unknown}, i.e., explains how to adapt the proof of
Appendix~\ref{app:proof-nu-known} to the case of an unknown distribution~$\nu$.
\item Appendix~\ref{app:linCBwK} details the claims of Section~\ref{sec:LinearCBwL}, i.e.,
the extension of the techniques introduced to the setting of linear CBwK.
\item Appendix~\ref{app:simu} reports a simulation study based on realistic data.
\end{itemize}

\clearpage
\section{Industrial Motivation: Market Share Expansion for Loans}
\label{app:industrialmotivation}

We describe the industrial problem we faced and which led to
the setting described in Section~\ref{sec:protocol}.

Incentives and discounts are common practices in many industries to achieve some business objectives; however, there is usually a limit on the number or/and total volume of discounts that can be granted, so companies need to select carefully who should receive them. We consider, for instance, the banking industry, with a business objective of market share expansion:
achieving the highest possible volume of loan subscription (total subscribed amount). Note that in practice, all loan applications need to go first through a risk-assessment process, and
offers are only made if the bank considers that the loan will not put the customer's solvability at risk.
We assume that all the clients concerned here have already gone through this process and
are eligible for getting a loan from a given bank. We formulate the problem in a sequential manner as follows.

At each round $t \geq 1$, a client asks for a credit product. Her/his characteristics are denoted by $\bx_t \in \R^n$,
and encompass the socio-demographic profile, the loan request (amount $x_{\am,t}$, duration $x_{\dur,t}$), etc.
It is reasonable to model these characteristics as independent draws
from a common (possibly unknown) distribution $\nu$. Based on $\bx_t$, the bank
will suggest some standard interest rate $\ir(\bx_t)$ based on its pricing rules; the detail of the rules is not relevant and we assume
that the underlying function $\ir$ is given.
If client $t$ accepts the offered rate $\ir(\bx_t)$ and subscribes to the loan, an event which we denote by $y_t = 1$
and call a conversion,
the bank gets a sales performance (gain on volume) $x_{\am,t}$. Otherwise, the client declines the offer,
which we denote by $y_t = 0$ and the bank gets a null reward.

Actually, to improve the chances of a conversion,
the bank may also offer a discount $a_t \in (0,1]$, or lack of discount $a_t = 0$, on the interest rate.
If it offers a discount $a_t > 0$ and $y_t=1$, the bank will suffer a loss of earnings, equal to $a_t \, \ir(\bx_t) \, \out(\bx_t)$,
where $\out(\bx_t)$ denotes the total outstanding amounts. This loss of earnings
is considered a promotion cost. These promotion costs are summed up to previous such costs and should usually not exceed a
fixed-in-advance budget $B_2 > 0$.
Also, there is usually a fixed-in-advance limit $B_1 > 0$ on the total number of clients who can subscribe with a discount.

Given that the customers' characteristics are i.i.d.,
it is indeed reasonable to assume that $y_t$ follows some Bernoulli distribution
with unknown probability $P(a_t, \bx_t)$. Of course, the higher the discount, the higher the probability of a conversion.

We summarize the setting with the notation of Section~\ref{sec:protocol}.
We assume that discounts are picked in a finite grid $\mathcal{D} = \bigl\{ j/D : j \in \{0,\ldots,D\} \bigr\}$,
so that the action set equals $\cA = \mathcal{D} \cup \{ \anull \}$.
At each round $t \geq 1$, given the customer's characteristics $\bx_t$
and the discount $a_t \in [0,1]$ picked by the bank,
the latter receives the following reward and suffers the following costs:
\begin{align*}
& r(a_t,\bx_t) \, y_t\,, \qquad \mbox{where} \qquad r : (a,\bx) \mapsto x_{\am}\,, \\
\mbox{and} \qquad & \bc(a_t,\bx_t) \, y_t\,, \qquad \mbox{where} \qquad
\bc : (a,\bx) \mapsto \bigl(\mathds{1}_{\{ a \ne 0 \}}, \, a \, \ir(\bx_t) \, \out(\bx_t) \bigr)\,.
\end{align*}
The first component of the cumulative cost vector
measures the total number subscriptions with discounts, and the second component reports the total promotion costs.
The bank wants to enforce
\[
\sum_{t=1}^T \bc(a_t,\bx_t) \, y_t =
\sum_{t=1}^T \bigl(\mathds{1}_{\{ a_t \ne 0 \}}, \, a_t \, \ir(\bx_t) \, \out(\bx_t) \bigr) \, y_t \leq (B_1,\,B_2)
\]
while maximizing the sum of the achieved rewards.

Normalizations in $[0,1]$ both for rewards and cost components may be achieved by
considering the maximal amount $M_{\am}$ and outstanding $M_{\outm}$ that the bank
would allow, and by considering
\[
r : (a,\bx) \mapsto x_{\am}/M_{\am}
\qquad \mbox{and} \qquad
\bc : (a,\bx) \mapsto \bigl(\mathds{1}_{\{ a \ne 0 \}}, \, a \, \ir(\bx_t) \, \out(\bx_t) / M_{\outm}\bigr)
\]
with the alternative budget $(B_1,\,B_2/M_{\outm})$. A single budget parameter $B = \min\bigl\{B_1,\,B_2/M_{\outm}\bigr\}$
may be considered by a final normalization: by dividing the first cost component by $B_1/B > 1$ if $B_1 > B$,
or the second cost component by $(B_2/M_{\outm})/B > 1$ if $B_2/M_{\outm} > B$, respectively.

\clearpage
\section{Detailed Proof of the Regret Bound in Case of a Known Distribution~$\nu$: \\ ~~~~~~~Proof of Theorem~\ref{th:main}}
\label{app:proof-nu-known}

The proof is divided into four steps.

\subsection{First Step: Defining Confidence Intervals on the Probabilities $P(a,\bx)$}
\label{sec:ICP}

The keystone of this step is the following lemma, adapted from~\citet{Faury2020ImprovedOA}:
it provides guarantees for the adapted version of Logistic-UCB1 defined in Phase~1 of
the adaptive policy studied in this article. The lemma actually holds for any sampling strategy of the arms, not just the one
used in Phase~2 of the adaptive policy.

The reasons of the adaptations, lying in different settings being considered, as well as a detailed proof,
are provided in Appendix~\ref{app:Logistic-UCB1}.
We recall that we denote
\[
\kappa = \sup \! \left\{ \frac{1}{\dot{\sig} \bigl(\bphi(a, \bx)^{\transp} \btheta \bigr)} : \bx \in \cX, \ a \in \cA \setminus \{ \anull \}, \ \btheta \in \Theta \right\},
\]
and that since $\dot{\eta} = \eta(1-\eta) \in [0,1/4]$, we always have $\kappa \geq 4$.
We also recall the notation for the maximal Euclidean norm of an element in $\Theta$:
\[
\Arrowvert \Theta \Arrowvert_{\maxn} = \max_{\btheta \in \Theta} \Arrowvert \btheta \Arrowvert\,.
\]

\begin{lemma}[combination of Lemmas~1 and~2 of~\citet{Faury2020ImprovedOA} with minor adjustments, detailed
in Appendix~\ref{app:Logistic-UCB1}]
\label{lemma:pbound_conversion}
Assume that $\kappa < +\infty$.
Fix any sampling strategy and consider the version of Logistic-UCB1
given by~\eqref{eq:besttheta}--\eqref{eq:Wttheta}.
For all $\delta \in (0,1)$, there exists an event $\cE_{\prob,\delta}$ with probability at least $1-\delta$
and such that over $\cE_{\prob,\delta}$:
\begin{multline*}
\forall t \geq 1, \ \forall a \in \cA \setminus \{ \anull \}, \ \forall \bx \in \cX,\qquad
\bigl| \hat{P}_{t}(a,\bx) - P(a,\bx) \bigr| \leq \\
\gamma_{t,\lambda,\delta} \sqrt{\kappa \bigl(\Arrowvert \Theta \Arrowvert_{\maxn} + 1/2)} \,
\bigl\Arrowvert \bphi(a, \bx) \bigr\Arrowvert_{V_{t}^{-1}}\,,
\end{multline*}
\begin{align*}
\mbox{where} \qquad \qquad &
\gamma_{t,\lambda,\delta}  = \sqrt{\lambda} \bigl( \Arrowvert \Theta \Arrowvert_{\maxn} + 1/2 \bigr) + \frac{2}{\sqrt{\lambda}} \ln
\Biggl( \frac{2^m}{\delta} \biggl(1 + \frac{t}{4 m\lambda} \biggr)^{m/2} \Biggr) \\
\mbox{and} \qquad \qquad &
V_{t} = \sum_{s=1}^{t} \bphi(a_s, \bx_s) \bphi(a_s, \bx_s)^{\transp} \Inotanull{s} + \kappa \lambda \id_m\,.
\end{align*}
\end{lemma}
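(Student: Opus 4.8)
The plan is to follow the two-stage template of \citet{Faury2020ImprovedOA}---a self-normalized concentration bound on the score map, followed by a self-concordance argument converting it into a prediction bound---while threading the no-op indicator $\Inotanull{s}$ and the normalization $\Arrowvert\bphi\Arrowvert\le1$ through every step. First I would record the first-order optimality condition for the regularized maximum-likelihood estimator~\eqref{eq:besttheta}: differentiating and using $(\ln\sig)'=1-\sig$ gives $\Psi_t(\tilde{\btheta}_t)=\sum_{s\le t}\Inotanull{s}\,y_s\,\bphi(a_s,\bx_s)$, with $\Psi_t$ as in~\eqref{eq:projtheta}. Writing $\xi_s=y_s-P(a_s,\bx_s)$ and subtracting $\Psi_t(\btheta_\star)$ then yields
\[
\Psi_t(\tilde{\btheta}_t)-\Psi_t(\btheta_\star)=\sum_{s\le t}\Inotanull{s}\,\xi_s\,\bphi(a_s,\bx_s)-\lambda\btheta_\star\,.
\]
Here $(\xi_s)$ is a bounded martingale difference sequence for the filtration generated by $(\bx_s,a_s,y_s)_s$: conditionally on the past and on $(\bx_s,a_s)$, the conversion $y_s$ is $\Ber\bigl(P(a_s,\bx_s)\bigr)$, and crucially the event $\{a_s\ne\anull\}$ and the regressor $\bphi(a_s,\bx_s)$ are both measurable with respect to this conditioning, so discarding the no-op rounds leaves the martingale structure intact.

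Next I would apply a self-normalized tail bound of the \citet{AbbasiYadkori2011ImprovedAF} type---in the variance-aware refinement of \citet{Faury2020ImprovedOA}, so that only $\sqrt\kappa$ rather than $\kappa$ appears---to the martingale $S_t=\sum_{s\le t}\Inotanull{s}\,\xi_s\,\bphi(a_s,\bx_s)$. A covering and determinant computation for the design matrix $V_t=\sum_{s\le t}\Inotanull{s}\,\bphi(a_s,\bx_s)\bphi(a_s,\bx_s)^{\transp}+\kappa\lambda\id_m$, using $\Arrowvert\bphi\Arrowvert\le1$ to cap the eigenvalue increments (this produces the $\bigl(1+t/(4m\lambda)\bigr)^{m/2}$ determinant ratio) and a union bound over an $m$-dimensional cover (the $2^m/\delta$ factor), defines the event $\cE_{\prob,\delta}$ of probability at least $1-\delta$ on which $\Arrowvert S_t\Arrowvert_{V_t^{-1}}$ is controlled by $\gamma_{t,\lambda,\delta}$, simultaneously for all $t\ge1$.

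The self-concordance step is the one I expect to be the main obstacle. Using the integral mean-value form $\Psi_t(\tilde{\btheta}_t)-\Psi_t(\btheta_\star)=\bar{G}_t\,(\tilde{\btheta}_t-\btheta_\star)$, where $\bar{G}_t=\int_0^1 W_t\bigl(\btheta_\star+u(\tilde{\btheta}_t-\btheta_\star)\bigr)\d u$ and $W_t$ is the Jacobian of $\Psi_t$ from~\eqref{eq:Wttheta}, the difficulty is to compare the $\bar{G}_t$-geometry with the $V_t$-geometry without losing more than a factor $\sqrt\kappa$; this is exactly where $\dot\sig\ge1/\kappa$ and the self-concordance of the logistic loss are invoked, following \citet{Faury2020ImprovedOA}. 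The projection~\eqref{eq:projtheta} must be handled in tandem: since $\btheta_\star\in\Theta$ is feasible, the minimizing property of $\hat{\btheta}_t$ and a triangle inequality give a bound on $\Arrowvert\Psi_t(\hat{\btheta}_t)-\Psi_t(\btheta_\star)\Arrowvert_{W_t(\hat{\btheta}_t)^{-1}}$ of the same order as the score deviation above, which then transfers to $\Arrowvert\hat{\btheta}_t-\btheta_\star\Arrowvert_{V_t}\le\gamma_{t,\lambda,\delta}\sqrt{\kappa\bigl(\Arrowvert\Theta\Arrowvert_{\maxn}+1/2\bigr)}$. The delicate bookkeeping is that the weighting matrix $W_t(\btheta)$ itself depends on $\btheta$, so every comparison between the norms induced by $W_t(\hat{\btheta}_t)^{-1}$, $\bar{G}_t^{-1}$ and $V_t^{-1}$ passes through $\kappa$, and tracking the resulting constant $\sqrt{\kappa(\Arrowvert\Theta\Arrowvert_{\maxn}+1/2)}$ is where care is needed.

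Finally I would convert the parameter bound into the claimed prediction bound: for fixed $(a,\bx)$ with $a\ne\anull$, since $\sig$ is $1$-Lipschitz and by Cauchy--Schwarz,
\[
\bigl|\hat{P}_t(a,\bx)-P(a,\bx)\bigr|\le\bigl|\bphi(a,\bx)^{\transp}(\hat{\btheta}_t-\btheta_\star)\bigr|\le\Arrowvert\bphi(a,\bx)\Arrowvert_{V_t^{-1}}\,\Arrowvert\hat{\btheta}_t-\btheta_\star\Arrowvert_{V_t}\,,
\]
and the previous step bounds the last factor by $\gamma_{t,\lambda,\delta}\sqrt{\kappa(\Arrowvert\Theta\Arrowvert_{\maxn}+1/2)}$. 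Because $\cE_{\prob,\delta}$ constrains only $S_t$ and this chain holds for every $\bphi$ with $\Arrowvert\bphi\Arrowvert\le1$, the bound is valid on that single event simultaneously for all $t\ge1$, all $a\ne\anull$ and all $\bx\in\cX$; no further union bound over $(a,\bx)$ is required, which is what makes the uniform statement available for any sampling strategy, as asserted.
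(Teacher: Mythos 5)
Your proposal follows essentially the same route as the paper's Appendix~C proof: the first-order optimality condition giving $\Psi_t(\tilde{\btheta}_t)=\sum_s\Inotanull{s}y_s\bphi(a_s,\bx_s)$, the variance-aware self-normalized martingale bound in the $W_t(\btheta_\star)^{-1}$ geometry, the integral-mean-value matrix $\bar{G}_t$ compared to $V_t$ and to $W_t(\hat{\btheta}_t)$, $W_t(\btheta_\star)$ via $\dot{\sig}\geq\kappa^{-1}$ and self-concordance, the triangle inequality through the projection, and the final Cauchy--Schwarz step. The only slip is in the last step, where you invoke $1$-Lipschitzness of $\sig$: to land on the stated constant $\sqrt{\kappa(\Arrowvert\Theta\Arrowvert_{\maxn}+1/2)}$ you need the sharper bound $\dot{\sig}\leq 1/4$ (as the paper uses), since the $1$-Lipschitz version inflates the constant by a factor of $4$ and the lemma's statement only has a factor-$\sqrt{2}$ slack over the paper's derivation.
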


\paragraph{Associated confidence intervals for the $P(a,\bx)$.}
Thanks to this lemma, we consider the upper-confidence bonuses
\begin{equation}
\label{eq:def:vareps}
\varepsilon_t(a,\bx) =
\gamma_{t,\lambda,\delta} \sqrt{\kappa \bigl(\Arrowvert \Theta \Arrowvert_{\maxn} + 1/2)} \,
\bigl\Arrowvert \bphi(a, \bx) \bigr\Arrowvert_{V_{t}^{-1}}\,.
\end{equation}
On the event $\cE_{\prob,\delta}$ of Lemma~\ref{lemma:pbound_conversion},
we have, for all $t \geq 1$, all $a \in \cA \setminus \{ \anull \}$, and all $\bx \in \cX$:
on the one hand,
\begin{equation}
\label{eq:UgepP}
U_{t}(a,\bx) = \min \Bigl\{ \hat{P}_{t}(a,\bx) + \varepsilon_{t}(a,\bx), \, 1 \Bigr\}
\geq \min \bigl\{ P(a, \bx), \, 1 \bigr\} = P(a, \bx)
\end{equation}
and on the other hand,
\begin{equation}
\label{eq:csq1LmEdelta}
U_{t}(a,\bx) = \min \Bigl\{ \hat{P}_{t}(a,\bx) + \varepsilon_{t}(a,\bx), \, 1 \Bigr\}
\leq \hat{P}_{t}(a,\bx) + \varepsilon_{t}(a,\bx) \leq P(a, \bx) + 2\varepsilon_{t}(a,\bx)\,.
\end{equation}

\paragraph{Control of the sum of upper-confidence bonuses.}
According to the proof sketch of Section~\ref{sec:analysis-known}, it
only remains to control the sum of the upper-confidence bonuses at
observed contexts $\bx_t$ and played actions $a_t$. Note that at rounds $t \geq 2$,
we use the bonuses $\varepsilon_{t-1}(a,\bx)$. We prove that
\begin{equation}
\label{eq:defET}
2 \sum_{t=2}^T \varepsilon_{t-1}(a_t,\bx_t) \Inotanull{t}
\leq \underbrace{\gamma_{T,\lambda,\delta} \sqrt{\kappa \bigl(4 \Arrowvert \Theta \Arrowvert_{\maxn} +2\bigr)}
\sqrt{2m T \max\left\{1, \,\,\frac{1}{ \kappa \lambda} \right\} \ln \biggl(1+\frac{T}{\kappa \lambda m}\biggr)}}_{\defeq E_T} \,.
\end{equation}
To that end, we first note that $\gamma_{t-1,\lambda,\delta} \leq \gamma_{T,\lambda,\delta}$:
\[
2 \sum_{t=2}^T \varepsilon_{t-1}(a_t, \bx_t) \Inotanull{t}
\leq
\gamma_{T,\lambda,\delta} \sqrt{\kappa \bigl(4 \Arrowvert \Theta \Arrowvert_{\maxn} +2\bigr)}
\sum_{t=2}^{T} \bigl\Arrowvert \bphi(a_t, \bx_t) \bigr\Arrowvert_{V_{t-1}^{-1}} \Inotanull{t}\,.
\]
Given that $\Arrowvert \bphi \Arrowvert \leq 1$ by assumption,
a direct application of Lemma~\ref{lm:ellpot} below (a classical result of basic algebra for linear bandits)
ensures that
\begin{equation*}
\begin{split}
\sum_{t=1}^{T} \bigl\Arrowvert \bphi(a_t, \bx_t) \bigr\Arrowvert^2_{V_{t-1}^{-1}} \Inotanull{t} & \leq 2 m \max\left\{1, \,\,\frac{1}{ \kappa \lambda} \right\} \ln \biggl(1+\frac{\sum_{t=1}^{T} \Inotanull{t} }{\kappa \lambda m}\biggr).
\end{split}
\end{equation*}
A Cauchy-Schwarz inequality thus entails
\[
\begin{split}
\sum_{t=1}^{T} \bigl\Arrowvert \bphi(a_t, \bx_t) \bigr\Arrowvert_{V_{t-1}^{-1}}  \Inotanull{t}  &  \leq \sqrt{\sum_{t=1}^{T} \Inotanull{t}} \sqrt{2m \max\left\{1, \,\,\frac{1}{ \kappa \lambda} \right\} \ln \biggl(1+\frac{\sum_{t=1}^{T} \Inotanull{t}}{\kappa \lambda m}\biggr)}\\
&  \leq \sqrt{T} \sqrt{2m \max\left\{1, \,\,\frac{1}{ \kappa \lambda} \right\} \ln \biggl(1+\frac{T}{\kappa \lambda m}\biggr)}\,,
\end{split}
\]
which concludes this step.

\begin{lemma}[Elliptic potential and determinant-trace inequality,
cf.\ Lemmas~10 and~11 of~\citet{AbbasiYadkori2011ImprovedAF}, Lemmas~15 and~16 of~\citet{Faury2020ImprovedOA}]
\label{lm:ellpot}
For all $\lambda > 0$ and all sequences $\bu_1,\bu_2,\ldots$ of vectors in $\R^m$ with $\Arrowvert \bu_s \Arrowvert \leq 1$,
defining $U_0 = \lambda \id_m$ and for $t \geq 1$,
\[
U_t = \lambda \id_m + \sum_{s=1}^t \bu_t (\bu_t)^{\transp}\,,
\]
we have, for all $\tau \geq 1$:
\[
\sum_{t=1}^\tau \Arrowvert \bu_t \Arrowvert^2_{U_{t-1}^{-1}}
\leq 2 m \max\left\{1, \,\,\frac{1}{\lambda} \right\} \ln \biggl(1+\frac{\tau}{\lambda m}\biggr).
\]
\end{lemma}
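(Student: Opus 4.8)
The plan is to run the classical three-ingredient argument for elliptic-potential bounds, reducing the sum of squared $U_{t-1}^{-1}$-norms to a telescoping log-determinant sum and then controlling that determinant by a trace. First I would exploit the rank-one structure of the updates $U_t = U_{t-1} + \bu_t \bu_t^{\transp}$. Since $U_{t-1} \succeq \lambda \id_m \succ 0$ is invertible, the matrix-determinant lemma gives $\det U_t = \det U_{t-1}\bigl(1 + \bu_t^{\transp} U_{t-1}^{-1}\bu_t\bigr) = \det U_{t-1}\bigl(1 + \Arrowvert \bu_t \Arrowvert^2_{U_{t-1}^{-1}}\bigr)$. Taking logarithms and summing over $1 \leq t \leq \tau$ telescopes to $\sum_{t=1}^\tau \ln\bigl(1 + \Arrowvert \bu_t \Arrowvert^2_{U_{t-1}^{-1}}\bigr) = \ln\det U_\tau - \ln\det U_0$.

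Second, I would convert each squared norm into its logarithm through an elementary real-analysis inequality. Because $U_{t-1} \succeq \lambda \id_m$, we have $U_{t-1}^{-1} \preceq \lambda^{-1}\id_m$, hence $\Arrowvert \bu_t \Arrowvert^2_{U_{t-1}^{-1}} \leq \Arrowvert \bu_t \Arrowvert^2/\lambda \leq 1/\lambda$. The map $x \mapsto x/\ln(1+x)$ is nondecreasing on $(0,\infty)$, so for every $x \in [0, 1/\lambda]$ one has $x \leq \bigl((1/\lambda)/\ln(1+1/\lambda)\bigr)\ln(1+x)$. Applying this at $x = \Arrowvert \bu_t \Arrowvert^2_{U_{t-1}^{-1}}$ and summing yields $\sum_{t=1}^\tau \Arrowvert \bu_t \Arrowvert^2_{U_{t-1}^{-1}} \leq \frac{1/\lambda}{\ln(1+1/\lambda)}\bigl(\ln\det U_\tau - \ln\det U_0\bigr)$.

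Third, I would bound the log-determinant by a trace via the AM--GM inequality on the (positive) eigenvalues of $U_\tau$, i.e.\ the determinant--trace inequality $\det U_\tau \leq \bigl(\mathrm{tr}(U_\tau)/m\bigr)^m$, together with $\mathrm{tr}(U_\tau) = \lambda m + \sum_{s=1}^\tau \Arrowvert \bu_s \Arrowvert^2 \leq \lambda m + \tau$ and $\det U_0 = \lambda^m$. This gives $\ln\det U_\tau - \ln\det U_0 \leq m\ln\bigl(1 + \tau/(\lambda m)\bigr)$, so that $\sum_{t=1}^\tau \Arrowvert \bu_t \Arrowvert^2_{U_{t-1}^{-1}} \leq \frac{1/\lambda}{\ln(1+1/\lambda)}\,m\ln\bigl(1+\tau/(\lambda m)\bigr)$.

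Finally I would tidy the constant, which is the only place demanding genuine care: it remains to verify $\frac{1/\lambda}{\ln(1+1/\lambda)} \leq 2\max\{1, 1/\lambda\}$. When $\lambda \geq 1$ this follows from $x/\ln(1+x) \leq 1/\ln 2 \leq 2$ on $(0,1]$; when $\lambda < 1$ it reduces to $\ln(1+1/\lambda) \geq 1/2$, which holds since $1 + 1/\lambda > 2 > \e^{1/2}$. The matrix-analysis facts (matrix-determinant lemma, determinant--trace inequality) are entirely standard, so the substance of the work is merely assembling these three displays and matching the sharp prefactor $(1/\lambda)/\ln(1+1/\lambda)$ to the stated $2\max\{1,1/\lambda\}$.
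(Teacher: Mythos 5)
Your proof is correct and follows exactly the standard argument that the paper itself defers to (Lemmas~10--11 of Abbasi-Yadkori et al.\ and Lemmas~15--16 of Faury et al.): rank-one determinant telescoping, the monotonicity of $x \mapsto x/\ln(1+x)$ to convert norms into log-determinant increments with the $\max\{1,1/\lambda\}$ factor, and the determinant--trace (AM--GM) bound. The constant-matching at the end is also handled correctly in both cases $\lambda \geq 1$ and $\lambda < 1$.
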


\subsection{Second Step: Dual Formulation of the Sampling Phase (Phase~2) and Consequences}
\label{app:second-step}

In this step, we consider a round $t \geq 2$ for which
the cost constraints of Phase~0 of the adaptive policy are not violated
and the optimization problem $\opt(\nu,\,U_{t-1},B_{t,T})$ is to be solved;
its solution is the policy $\bp_t(h_{t-1},\,\cdot\,)$ used to sample $a_t$
according to $\bp_t(h_{t-1},\bx_t)$.

We first rewrite in its dual form the optimization problem
$\opt(\nu,\,U_{t-1},B_{t,T})$
and show that strong duality holds. As a consequence,
there exists a vector $\bbeta^{\budg,\star}_t \in \R^d$ such that
$\bp_t(h_{t-1},\,\cdot\,)$ may be identified as
\[
\hspace{-.6cm}
\argmax_{\pi: \cX \to \cP(\cA)} \;  \E_{\bX \sim \nu} \! \left[ T \sum_{a \in \cA} \Bigl( r(a,\bX)  - \bigl(\bbeta^{\budg,\star}_t\bigr)^{\, \transp} \bc(a, \bX) \Bigr) \, U_{t-1}(a,\bX) \, \pi_{a}(\bX)  +  \sum_{\bx \in \cX} \sum_{a \in \cA} {\beta_{\bx, a}^{\ppos,\star}} \, \pi_{a}(\bx) \right].
\]
By exploiting the KKT conditions, we are able to get rid of the double sum above and finally
get a $\cX$--pointwise characterization of $\bp_t(h_{t-1},\,\cdot\,)$:
for all $\bx \in \cX$,
\[
\bp_t(h_{t-1},\bx) \in \argmax_{\bq \in \cP(\cA)}
\sum_{a \in \cA} \Bigl( r(a,\bx) - \bigl(\bbeta^{\budg,\star}_t\bigr)^{\, \transp} \bc(a, \bx) \Bigr) \, U_{t-1}(a,\bx) \, q_{a}\,,
\]
where, with no impact, we may replace the $r(a,\bx) - \bigl(\bbeta^{\budg,\star}_t\bigr)^{\, \transp} \bc(a, \bx)$
by their non-negative parts.
The distributions $\bp_t(h_{t-1},\bx)$ may therefore be interpreted as maximizing some upper-confidence bound
on penalized gains (rewards minus some scalarized costs).

We also prove $\opt(\nu,U_{t-1},B_T) \geq B_T (\bbeta^{\budg,\star}_t )^{\, \transp} \mathbf{1}$
based on the KKT conditions.

\paragraph{Primal form of the optimization problem $\opt(\nu,\,U_{t-1},B_{T})$.}
Since $\cX$ is a finite set (this is actually the key place where we need this assumption),
the optimization problem $\opt(\nu,\,U_{t-1},B_T)$ may be stated as
the \emph{opposite} of
\[
\begin{split}
\min_{(\pi_a(\bx)) \in \R^{\cA \times \cX}}
& - T \, \E_{\bX \sim \nu} \! \left[ \sum_{a \in \cA} r(a,\bX) \, U_{t-1}(a, \bX) \, \pi_a(\bX) \right] \vspace{.25cm} \\
\text{under} \qquad & T \, \E_{\bX \sim \nu} \! \left[ \sum_{a \in \cA} \bc(a, \bX) \, U_{t-1}(a,\bX) \, \pi_a(\bX) \right]
\leq B_{T} \mathbf{1}\,, \\
& \forall a \in \cA, \ \forall \bx \in \cX, \quad \pi_a(\bx) \geq 0\,, \\
& \forall \bx \in \cX, \quad \sum_{a \in \cA} \pi_a(\bx) = 1\,. \\
\end{split}
\]
Thanks to the no-op action $\anull \in \cA$, which is used to model abstention and results in null rewards and costs, i.e.,
$r(\anull,\bx) = 0$ and $\bc(\anull,\bx) = \mathbf{0}$ for all $\bx \in \cX$, we may relax the third constraint into
\[
\forall \bx \in \cX, \quad \sum_{a \in \cA} \pi_a(\bx) \leq 1\,.
\]
Indeed, any vector $\bigl( \pi_a(\bx) \bigr) \in \R^{\cA \times \cX}$
satisfying the constraint with $\leq 1$
can be transformed into a vector $\bigl( \pi'_a(\bx) \bigr) \in \R^{\cA \times \cX}$
for which the expected reward and the first and second constraints
remain identical while the third constraint is satisfied with $= 1$:
by adding the necessary probability mass to the components $\pi_{\anull}(\bx)$.

In the sequel, we consider this primal problem with the $\leq 1$ constraint:
\begin{equation}
\label{eq:primal-problem}
\begin{split}
- \opt(\nu,\,U_{t-1},B_T)
= \min_{(\pi_a(\bx)) \in \R^{\cA \times \cX}}
& - T \, \E_{\bX \sim \nu} \! \left[ \sum_{a \in \cA} r(a,\bX) \, U_{t-1}(a, \bX) \, \pi_a(\bX) \right] \vspace{.25cm} \\
\text{under} \qquad & T \, \E_{\bX \sim \nu} \! \left[ \sum_{a \in \cA} \bc(a, \bX) \, U_{t-1}(a,\bX) \, \pi_a(\bX) \right]
\leq B_{T} \mathbf{1}\,, \\
& \forall a \in \cA, \ \forall \bx \in \cX, \quad \pi_a(\bx) \geq 0\,, \\
& \forall \bx \in \cX, \quad \sum_{a \in \cA} \pi_a(\bx) \leq 1\,. \\
\end{split}
\end{equation}
It forms a convex optimization problem, as its objective and its constraints are all affine (\citet[Section~4.2.1]{BV04}).

\paragraph{Lagrangian (dual) form of the optimization problem.}
Denote by $\bbeta^{\budg}, \, \bbeta^{\ppos}, \, \bbeta^{\psum}$ the vector dual variables associated with the constraints on budget [budg], non-negative probability [p-pos],
and sum of probabilities [p-sum], respectively. The vectors $\bbeta^{\ppos}$ and $\bbeta^{\psum}$ have components
$\beta_{\bx}^{\psum}$ and $\beta_{\bx,a}^{\ppos}$ indexed by $\bx \in \cX$ and $a \in \cA$.

We define the Lagrangian associated with our primal problem:
\begin{equation}
\label{eq:lagrangian-main}
\begin{split}
\cL_t\Bigl( \bigl( \pi_a(\bx) \bigl)_{a,\bx}, & \,\, \bbeta^{\budg}, \bbeta^{\psum},  \bbeta^{\ppos} \Bigr) \\
= & - \, T \, \E_{\bX \sim \nu} \! \left[ \sum_{a \in \cA} r(a,\bX) \, U_{t-1}(a, \bX) \, \pi_{a}(\bX) \right] \\
& +  \bigl(\bbeta^{\budg}\bigr)^{\, \transp} \left( T \, \E_{\bX \sim \nu} \! \left[ \sum_{a \in \cA} \bc(a, \bX) \, U_{t-1}(a,\bX) \, \pi_{a} (\bX) \right]
- B_{T} \mathbf{1} \right)\\
& +  \sum_{\bx \in \cX} {\beta_{\bx}^{\psum}} \left( \sum_{a \in \cA} \pi_{a}(\bx) -1 \right)
-  \sum_{\bx \in \cX} \sum_{a \in \cA} {\beta_{\bx, a}^{\ppos}} \, \pi_{a}(\bx)\,.
\end{split}
\end{equation}
The dual problem consists of maximizing
\[
\inf_{(\pi_a(\bx)) \in \R^{\cA \times \cX}} \cL_t\Bigl( \bigl( \pi_a(\bx) \bigl)_{a,\bx}, \bbeta^{\budg}, \bbeta^{\psum},  \bbeta^{\ppos} \Bigr)
\]
under the constraints that all components of the $\bbeta^{\budg}, \bbeta^{\psum}, \bbeta^{\ppos}$ are non-negative,
which we denote in a vector-wise manner by
\[
\bbeta^{\budg} \geq \mathbf{0}\,, \qquad \bbeta^{\psum} \geq \mathbf{0}\,, \qquad \bbeta^{\ppos} \geq \mathbf{0}\,.
\]

\paragraph{Strong duality and consequences.}
We explain why the so-called Slater's condition (\citealp[Section~5.2.3]{BV04}) holds;
it entails that the value $- \opt\bigl(\nu,\,U_{t-1},B_T\bigr)$ of the primal problem
equals the value of the Lagrangian dual problem.
The primal problem is convex, all its constraints are affine, with domain $\R^{\cA \times \cX}$:
Slater's condition therefore reduces to feasibility. And feasibility of the constraints is clear
by taking Dirac masses on $\anull$, i.e., $\pi_a(\bx) = \delta_{\anull}$ for all $\bx \in \cX$.
Since the values of the primal and dual problems are clearly larger than $-T > - \infty$,
Slater's condition also implies
that the dual optimal value is achieved at a dual feasible set of parameters, i.e., that the
the constrained supremum and the infimum defining the dual problem are a minimum and a maximum, respectively.
We may therefore summarize the consequences of Slater's condition as follows:
\[
\begin{split}
- \opt(\nu,U_{t-1},B_T) = \max_{\bbeta^{\budg}, \bbeta^{\psum}, \bbeta^{\ppos}} \ \ & \min_{(\pi_a(\bx)) \in \R^{\cA \times \cX}} \
\cL_t\Bigl( \bigl( \pi_a(\bx) \bigr)_{a,\bx}, \bbeta^{\budg}, \bbeta^{\psum},  \bbeta^{\ppos} \Bigr) \\
\mbox{under} \qquad & \bbeta^{\budg} \geq \mathbf{0}\,, \qquad \bbeta^{\psum} \geq \mathbf{0}\,, \qquad \bbeta^{\ppos} \geq \mathbf{0}\,.
\end{split}
\]

Because of strong duality and the existence of a dual feasible set of parameters, the max-min above equals its min-max counterpart
(\citet[Sections~5.4.1 and~5.4.2]{BV04}):
\[
- \opt(\nu,U_{t-1},B_T) =
\min_{(\pi_a(\bx)) \in \R^{\cA \times \cX}} \
\max_{\bbeta^{\budg} \geq \mathbf{0}, \, \bbeta^{\psum} \geq \mathbf{0}, \, \bbeta^{\ppos} \geq \mathbf{0}} \
\cL_t\Bigl( \bigl(\pi_a(\bx) \bigr)_{a,\bx}, \bbeta^{\budg}, \bbeta^{\psum},  \bbeta^{\ppos} \Bigr)\,. \\
\]
We let $\bbeta^{\budg,\star}_t \geq \mathbf{0}$, $\bbeta^{\psum,\star}_{t} \geq \mathbf{0}$, and $\bbeta^{\ppos,\star}_t \geq \mathbf{0}$ be an optimal dual solution and recall that $\bp_t(h_{t-1},\,\cdot\,)$ denote an optimal primal solution, which, with no loss of generality, may be assumed
satisfying the $\leq 1$ constraints with equality. From \citet[Section~5.4.2]{BV04}, this pair of solutions forms a saddle-point for the
Lagrangian; in particular,
\begin{align}
\label{eq:OPTptLt}
- \opt(\nu,U_{t-1},B_T) & = \cL_t\bigl(\bp_t(h_{t-1},\,\cdot\,), \bbeta^{\budg,\star}_t, \bbeta^{\psum,\star}_{t}, \bbeta^{\ppos,\star}_t \bigr) \\
\nonumber
& = \min_{(\pi_a(\bx)) \in \R^{\cA \times \cX}} \cL_t\Bigl( \bigl(\pi_a(\bx)\bigr), \bbeta^{\budg,\star}_t,
\bbeta^{\psum,\star}_{t}, \bbeta^{\ppos,\star}_t \Bigr) \\
\nonumber
& = \min_{\pi : \cX \to \cP(\cA)} \cL_t(\pi, \bbeta^{\budg,\star}_t, \bbeta^{\psum,\star}_{t}, \bbeta^{\ppos,\star}_t)\,.
\end{align}
The distribution $\bp_t(h_{t-1},\,\cdot\,)$ played thus appears as the argument of the minimum above.

Substituting the definition~\eqref{eq:lagrangian-main} of $\cL_t$ into the characterization~\eqref{eq:OPTptLt},
rearranging the first two terms of $\cL_t$, noting that the third term of $\cL_t$ is null,
and discarding the constant term $B_T \bigl(\bbeta^{\budg,\star}_t\bigr)^{\, \transp} \mathbf{1}$, we get: \smallskip
\begin{align}
\label{eq:argmax-ucb-main}
\hspace{-1cm} \bp_t(h_{t-1},\,\cdot\,) & \in \\
\nonumber
\hspace{-1cm} \phantom{e} \argmin_{\pi: \cX \to \cP(\cA)} & \;  \E_{\bX \sim \nu} \! \left[ - T \sum_{a \in \cA} \Bigl( r(a,\bX)  + \bigl(\bbeta^{\budg,\star}_t\bigr)^{\, \transp} \bc(a, \bX) \Bigr) \, U_{t-1}(a,\bX) \, \pi_{a}(\bX) -  \sum_{\bx \in \cX} \sum_{a \in \cA} {\beta_{\bx, a}^{\ppos,\star}} \, \pi_{a}(\bx) \right] \\
\nonumber
\hspace{-1cm} = \argmax_{\pi: \cX \to \cP(\cA)} & \;  \E_{\bX \sim \nu} \! \left[ T \sum_{a \in \cA} \Bigl( r(a,\bX)  - \bigl(\bbeta^{\budg,\star}_t\bigr)^{\, \transp} \bc(a, \bX) \Bigr) \, U_{t-1}(a,\bX) \, \pi_{a}(\bX)  +  \sum_{\bx \in \cX} \sum_{a \in \cA} {\beta_{\bx, a}^{\ppos,\star}} \, \pi_{a}(\bx) \right].
\end{align}
We further simplify this alternative definition by showing that the sums of the ${\beta_{\bx, a}^{\ppos}} \, \pi_{a}(\bx)$
may be omitted. We do so by exploiting the KKT conditions.

\paragraph{KKT conditions: statement.}
The Karush–Kuhn–Tucker (KKT) conditions (\citet[Section~5.5.3]{BV04})
for the primal optimal $\bp_t(h_{t-1},\,\cdot\,) : \cX \to \cP(\cA)$ and the dual optimal
$\bbeta^{\budg,\star}_t \geq \mathbf{0}$, $\bbeta^{\psum,\star}_{t} \geq \mathbf{0}$, and $\bbeta^{\ppos,\star}_t \geq \mathbf{0}$
imply the following conditions: first, complementary slackness, which reads
\begin{equation}
\label{eq:kkt-complementary-slackness-proba}
\forall a \in \cA, \ \forall \bx \in \cX, \qquad \beta^{\ppos,\star}_{t, \bx, a} \, p_{t,a}(h_{t-1},\bx) = 0
\end{equation}
and
\begin{equation}
\label{eq:kkt-complementary-slackness-budget}
\bigl(\bbeta^{\budg,\star}_t\bigr)^{\, \transp}   \E_{\bX \sim \nu} \! \left[ \sum_{a \in \cA} \bc(a, \bX) \, U_{t-1}(a,\bX) \, p_{t, a}(h_{t-1},\bX) \right]
=  \frac{B_{T}}{T} \, \bigl(\bbeta^{\budg,\star}_t\bigr)^{\, \transp} \mathbf{1}\,;
\end{equation}
second, a stationary condition, based on the fact that the gradient of the Lagrangian function~\eqref{eq:lagrangian-main} with respect to the $\pi_{a}(\bx)$ vanishes. Denoting by $\nu(\bx)$ the probability mass put by $\nu$ on $\bx$, we have:
\begin{multline}
\label{eq:kkt-stationary}
\forall a \in \cA, \ \ \forall \bx \in \cX,  \qquad
T \, r(a,\bx) \, U_{t-1}(a, \bx) \, \nu(\bx)  \\
= \bigl( \bbeta^{\budg,\star}_t \bigr)^{\transp} \Bigl( T \, \bc(a, \bx) \, U_{t-1}(a,\bx) \, \nu(\bx) \Bigr)
+ \beta^{\psum,\star}_{t, \bx} - \beta^{\ppos,\star}_{t, \bx, a}\,.
\end{multline}

\paragraph{KKT conditions: first consequence---final characterization of $\bp_t(h_{t-1},\,\cdot\,)$.}
For now, we only exploit~\eqref{eq:kkt-complementary-slackness-proba}:
this equality and the fact that $\beta^{\ppos,\star}_{t, \bx, a} \, \pi_{a}(\bx)$ is always non-negative
show that, as announced, the characterization~\eqref{eq:argmax-ucb-main} may be further simplified into
\begin{equation}
\label{eq:argmax-ucb-main-simple}
\begin{split}
\bp_t(h_{t-1},\,\cdot\,) \in & \argmax_{\pi: \cX \to \cP(\cA)} \;  \E_{\bX \sim \nu} \! \left[ T \sum_{a \in \cA} \Bigl( r(a,\bX)  - \bigl(\bbeta^{\budg,\star}_t\bigr)^{\, \transp} \bc(a, \bX) \Bigr) \, U_{t-1}(a,\bX) \, \pi_{a}(\bX) \right] \\
= & \argmax_{\pi: \cX \to \cP(\cA)} \;  \E_{\bX \sim \nu} \! \left[ \sum_{a \in \cA} \Bigl( r(a,\bX)  - \bigl(\bbeta^{\budg,\star}_t\bigr)^{\, \transp} \bc(a, \bX) \Bigr) \, U_{t-1}(a,\bX) \, \pi_{a}(\bX) \right].
\end{split}
\end{equation}

In the characterization~\eqref{eq:argmax-ucb-main-simple}, as we got rid of the cross terms,
the maximization may be carried out in a $\cX$--pointwise manner,
i.e., by separately computing each probability distribution $\bp_t(h_{t-1},\bx) \in \cP(\cA)$. More formally,
for each $\bx \in \cX$,
\begin{equation}
\label{eq:charact-pt-x-1}
\bp_t(h_{t-1},\bx) \in \argmax_{\bq \in \cP(\cA)}
\sum_{a \in \cA} \Bigl( r(a,\bx) - \bigl(\bbeta^{\budg,\star}_t\bigr)^{\, \transp} \bc(a, \bx) \Bigr) \, U_{t-1}(a,\bx) \, q_{a} \,.
\end{equation}
In this characterization,
$r(a,\bx)  - \bigl(\bbeta^{\budg,\star}_t\bigr)^{\, \transp} \bc(a, \bx)$ appears as a penalized gain
in case a client with characteristics $\bx$ converts, and $\bp_t(h_{t-1},\bx)$ is obtained by combining an upper-confidence-bound estimation
$U_{t-1}(a,\bx)$ of the conversion rate $P(a,\bx)$ with this penalized gain.

Denote by $(z)_+ = \max\{z,\,0\}$ the non-negative part of $z \in \R$ and fix $\bx \in \cX$.
Given that the no-op action $\anull$ is such that $r(\anull,\bx)  - \bigl(\bbeta^{\budg,\star}_t\bigr)^{\, \transp} \bc(\anull, \bx) = 0$
and $U_{t-1}(a,\bx) \geq 0$ for all $a \in \cA$, in view of the objective,
any distribution $\bq$ should move the probability mass $q_a$ on an action $a\in \cA$ with
$r(a,\bx)  - (\bbeta^{\budg,\star}_t)^{\, \transp} \bc(a, \bx) < 0$ to $\anull$.
As a consequence,
\begin{multline}
\label{eq:charact-pt-x-1ter}
\max_{\bq \in \cP(\cA)}
\sum_{a \in \cA} \Bigl( r(a,\bx) - \bigl(\bbeta^{\budg,\star}_t\bigr)^{\, \transp} \bc(a, \bx) \Bigr) \, U_{t-1}(a,\bx) \, q_a(\bx) \\
= \max_{\bq \in \cP(\cA)}
\sum_{a \in \cA} \Bigl( r(a,\bx) - \bigl(\bbeta^{\budg,\star}_t\bigr)^{\, \transp} \bc(a, \bx) \Bigr)_+ \, U_{t-1}(a,\bx) \, q_a(\bx)
\end{multline}
and
\begin{equation}
\label{eq:charact-pt-x-1bis}
\bp_t(h_{t-1},\bx) \in \argmax_{\bq \in \cP(\cA)}
\sum_{a \in \cA} \Bigl( r(a,\bx) - \bigl(\bbeta^{\budg,\star}_t\bigr)^{\, \transp} \bc(a, \bx) \Bigr)_+ \, U_{t-1}(a,\bx) \, q_{a} \,.
\end{equation}

\paragraph{KKT conditions: a second consequence.}
We first exploit the stationary condition~\eqref{eq:kkt-stationary}.
Multiplying both sides of this equality by $p_{t, a}(h_{t-1},\bx)$, summing
over $\bx \in \cX $ and $a \in A$, we obtain an equality between expectations with respect
to $\bX \sim \nu$:
\begin{equation}
\label{eq:kkt-stationary-sum-a}
\begin{split}
& T \, \E_{\bX \sim \nu}   \! \left[  \sum_{a \in \cA}  r(a,\bX) \, U_{t-1}(a, \bX) \, p_{t, a}(h_{t-1},\bX) \right]  \\
= & \ \bigl( \bbeta^{\budg,\star}_t \bigr)^{\transp} \left( T \, \E_{\bX \sim \nu} \!
\left[  \sum_{a \in \cA} \bc(a, \bX) \, U_{t-1}(a,\bX) \, p_{t, a}(h_{t-1},\bX) \right]  \right) \\
& + \sum_{\bx \in\cX} \beta^{\psum,\star}_{t, \bx} \underbrace{\sum_{a \in \cA}  p_{t, a}(h_{t-1},\bx)}_{=1}
- \sum_{\bx \in \cX} \sum_{a \in \cA}  \underbrace{\beta^{\ppos,\star}_{t, \bx, a}  \, p_{t, a}(h_{t-1},\bx)}_{=0}\,,
\end{split}
\end{equation}
where the equality to~$0$ indicated in the right-hand side correspond to~\eqref{eq:kkt-complementary-slackness-proba}.
We now substitute~\eqref{eq:kkt-complementary-slackness-budget} into~\eqref{eq:kkt-stationary-sum-a}
and obtain
\[
T \, \E_{\bX \sim \nu}   \! \left[  \sum_{a \in \cA}  r(a,\bX) \, U_{t-1}(a, \bX) \, p_{t, a}(h_{t-1},\bX) \right]
= B_T (\bbeta^{\budg,\star}_t )^{\, \transp} \mathbf{1} + \sum_{\bx \in\cX} \beta^{\psum,\star}_{t, \bx}\,.
\]
The left-hand side equals $\opt(\nu,U_{t-1},B_T)$ since $\bp_t(h_{t-1},\,\cdot\,)$ is the solution of the primal
problem. Thus, the equality above entails, by $\bbeta^{\psum,\star}_{t} \geq \mathbf{0}$, that
\begin{equation}
\label{eq:property-strong-duality}
\opt(\nu,U_{t-1},B_T) \geq B_T (\bbeta^{\budg,\star}_t )^{\, \transp} \mathbf{1}\,.
\end{equation}

\subsection{Third Step: Various High-Probability Controls}
\label{app:vhpc}

We prove below that on the intersection between the event $\cE_{\prob,\delta}$ of Lemma~\ref{lemma:pbound_conversion}
and another event $\cE_{\haz,\delta}$, also of probability at least $1-\delta$, we have
simultaneously that for all rounds $t \geq 2$, the policy $\bp_t(h_{t-1},\,\cdot\,)$
is obtained by Phase~2, i.e., by solving $\opt(\nu,U_{t-1},B_T)$, and that
\begin{align}
\label{eq:ccl3rdstep-r}
& \sum_{t=1}^T r(a_t,\bx_t) \, y_t \geq
\sum_{t=2}^T \frac{\opt(\nu,U_{t-1},B_T)}{T} - \sqrt{2T \ln \frac{4}{\delta}} - E_T \\
\nonumber
\mbox{and} \qquad & \sum_{t=1}^T \bc(a_t,\bx_t) \, y_t \leq
\left( B_T + 1 + \sqrt{2T \ln \frac{4d}{\delta}} \right) \mathbf{1}
= (B-1) \mathbf{1} \,,
\end{align}
where the bound $E_T$ was defined in~\eqref{eq:defET} and where we used the definition of $B_T$,
namely,
\begin{equation}
\label{eq:defBT}
B_T = B - 2 - \sqrt{2T \ln \frac{4d}{\delta}}\,.
\end{equation}
This definition requires that
\begin{equation}
\label{eq:defBT-cstr}
B > c \left( 2 + \sqrt{2T \ln \frac{4d}{\delta}} \right)
\end{equation}
for $c=1$, but we will rather assume that the inequality holds with $c=2$.

\paragraph{Applications of the Hoeffding-Azuma inequality to handle the conversions $y_t$.}
We recall that we defined $h_0$ as the empty vector and $h_{t} = (\bx_s, a_s, \,y_s)_{s \leq t}$ for $t \geq 1$.
We introduce the filtration $\cF = (\cF_t)_{t \geq 0}$, with
$\cF_0 = \sigma(a_1,\bx_1)$ and for $t \geq 1$,
\[
\cF_{t} = \sigma\bigl( h_{t}, \, a_{t+1}, \, \bx_{t+1} \bigr)\,.
\]
Then, for all $t \geq 1$, the variables $r(a_t,\bx_t) \, y_t$ and $\bc(a_t,\bx_t) \, y_t$
are $\cF_t$--measurable, with conditional expectations with respect to $\cF_{t-1}$ equal to
\begin{align*}
& \E\bigl[ r(a_t, \bx_t) \, y_t \,\big|\, \cF_{t-1} \bigr] = r(a_t, \bx_t) \, P(a_t,\bx_t) \\
\mbox{and} \qquad &
\E\bigl[ \bc(a_t, \bx_t) \, y_t \,\big|\, \cF_{t-1} \bigr] = \bc(a_t, \bx_t) \, P(a_t,\bx_t)\,.
\end{align*}
Indeed, the conditioning by $\cF_{t-1}$ fixes $a_t$ and $\bx_t$, but not $y_t$,
and exactly means, when $a_t \ne \anull$, integrating over $y_t \sim P(a_t,\bx_t)$.
When $a_t = \anull$, all equalities above remain valid thanks to the abuse of notation
discussed in Section~\ref{sec:protocol}.
Given that $r$ takes values in $[0,1]$ and $\bc$ in $[0,1]^d$,
we may apply $d+1$ times the Hoeffding-Azuma inequality (once for $r$ and each component of $c$)
together with a union bound: there exist two events $\cE_{r,P,\delta}$ and $\cE_{\bc,P,\delta}$, each of probability
at least $1-\delta/4$, such that
\begin{align*}
& \mbox{on} \ \cE_{r,P,\delta}, \qquad
\sum_{t=1}^T r(a_t,\bx_t) \, y_t \geq \sum_{t=1}^T r(a_t, \bx_t) \, P(a_t, \bx_t) -
\sqrt{\frac{T}{2} \ln \frac{4}{\delta}} \\
\mbox{and} \qquad
& \mbox{on} \ \cE_{\bc,P,\delta}, \qquad
\sum_{t=1}^T \bc(a_t,\bx_t) \, y_t  \leq \sqrt{\frac{T}{2} \ln \frac{4d}{\delta}} \,\, \mathbf{1}
+ \sum_{t=1}^T  \bc(a_t, \bx_t) \, P(a_t, \bx_t)\,.
\end{align*}

\paragraph{Applications of the Hoeffding-Azuma inequality to use the properties of the $\bp_t(h_{t-1},\,\cdot\,)$.}
For this sub-step, we rather condition directly by $h_{t-1}$ (instead of $\cF_{t-1}$ as in the
previous step), where $t \geq 2$.
Conditioning by $h_{t-1}$ amounts to integrating first over $a_t \sim \bp_t(h_{t-1},\bx_t)$ and then over $\bx_t \sim \nu$:
this is because of the definition of the random draw of $a_t$ according to $\bp_t(h_{t-1},\bx_t)$
independently from everything else, and because $\bx_t$ is drawn independent from the past according to~$\nu$.
More precisely, for each $t \geq 2$, given that $U_{t-1}$ is $h_{t-1}$--measurable,
we thus have the following equalities:
\begin{align*}
\E\bigl[ r(a_t, \bx_t) \, U_{t-1}(a_t, \bx_t) \,\big|\, h_{t-1} \bigr]
& = \E\!\left[ \sum_{a \in \cA} r(a, \bx_t) \, U_{t-1}(a, \bx_t) \, p_{t,a}(h_{t-1},\bx_t) \,\bigg|\, h_{t-1} \right] \\
& = \E_{\bX \sim \nu} \!\left[ \sum_{a \in \cA} r(a, \bX) \, U_{t-1}(a, \bX) \, p_{t,a}(h_{t-1},\bX) \right] \\
\mbox{and} \quad\qquad
\E\bigl[ \bc(a_t, \bx_t) \, U_{t-1}(a_t, \bx_t) \,\big|\, h_{t-1} \bigr]
& = \E\!\left[ \sum_{a \in \cA} \bc(a, \bx_t) \, U_{t-1}(a, \bx_t) \, p_{t,a}(h_{t-1},\bx_t) \,\bigg|\, h_{t-1} \right] \\
& = \E_{\bX \sim \nu} \!\left[ \sum_{a \in \cA} \bc(a, \bX) \, U_{t-1}(a, \bX) \, p_{t,a}(h_{t-1},\bX) \right],
\end{align*}
where we recall that $\E_{\bX \sim \nu}$ denotes an integration solely over $\bX \sim \nu$.

By definition of $\bp_t(h_{t-1},\,\cdot\,)$, whenever the adaptive policy reaches Phase~2
at a given round~$t \geq 2$:
\begin{align*}
\E_{\bX \sim \nu} \!\left[ \sum_{a \in \cA} r(a, \bX) \, U_{t-1}(a, \bX) \, p_{t,a}(h_{t-1},\bX) \right]
& = \frac{\opt(\nu,U_{t-1},B_T)}{T} \\
\mbox{and} \quad\qquad
\E_{\bX \sim \nu} \!\left[ \sum_{a \in \cA} \bc(a, \bX) \, U_{t-1}(a, \bX) \, p_{t,a}(h_{t-1},\bX) \right]
& \leq \frac{B_T}{T}  \mathbf{1}\,.
\end{align*}
Otherwise, the adaptive policy is stuck in Phase~0, because at least one cost constraint is larger than $B-1$; in
this case, $\bp_{t}(h_{t-1},\bx) = \delta_{\anull}$ for all $\bx \in \cX$ and both expectations above are null.
We may summarize all cases by introducing the indicator function that all cost constraints are smaller than $B-1$,
\[
\mathds{1}_{ \{ \bC_{t-1} \leq (B-1)\mathbf{1} \} }\,, \qquad \mbox{where} \qquad
\bC_{t-1} = \sum_{s = 1}^{t-1} \bc(a_s,\bx_s) \, y_s\,,
\]
and stating that
\begin{align*}
\E_{\bX \sim \nu} \!\left[ \sum_{a \in \cA} r(a, \bX) \, U_{t-1}(a, \bX) \, p_{t,a}(h_{t-1},\bX) \right]
& \geq \mathds{1}_{\{ \bC_{t-1} \leq (B-1)\mathbf{1} \}} \frac{\opt(\nu,U_{t-1},B_T)}{T} \\
\mbox{and} \quad\qquad
\E_{\bX \sim \nu} \!\left[ \sum_{a \in \cA} \bc(a, \bX) \, U_{t-1}(a, \bX) \, p_{t,a}(h_{t-1},\bX) \right]
& \leq \frac{B_T}{T}  \mathbf{1}\,.
\end{align*}

Therefore, a second series of applications of the Hoeffding-Azuma inequality
(at times $2 \leq t \leq T$, i.e., excluding the first round)
entails the existence of two events $\cE_{r,U,\delta}$ and $\cE_{\bc,U,\delta}$, each of probability
at least $1-\delta/4$, such that
\begin{align*}
& \mbox{on} \ \cE_{r,U,\delta}, \qquad
\sum_{t=2}^T r(a_t, \bx_t) \, U_{t-1}(a_t, \bx_t)
\geq \sum_{t=2}^T \mathds{1}_{\{ \bC_{t-1} \leq (B-1)\mathbf{1} \}} \frac{\opt(\nu,U_{t-1},B_T)}{T} - \sqrt{\frac{T}{2} \ln \frac{4}{\delta}} \\
& \mbox{and on} \ \cE_{\bc,U,\delta}, \qquad
\sum_{t=2}^T \bc(a_t,\bx_t) \, U_{t-1}(a_t, \bx_t) \leq \left( B_T + \sqrt{\frac{T}{2} \ln \frac{4d}{\delta}} \right) \mathbf{1}\,.
\end{align*}

\paragraph{Appeal to results of Appendix~\ref{sec:ICP}.}
The inequalities~\eqref{eq:UgepP} and~\eqref{eq:csq1LmEdelta}
state that on the event $\cE_{\prob,\delta}$ of Lemma~\ref{lemma:pbound_conversion},
we have
\[
\forall t \geq 1, \ \ \forall a \in \cA \setminus \{ \anull \}, \ \ \forall \bx \in \cX, \qquad
P(a, \bx) \leq U_{t}(a,\bx) \leq P(a, \bx) + 2\varepsilon_{t}(a,\bx)\,.
\]
Thus, on $\cE_{\prob,\delta}$,
\begin{align*}
\sum_{t=1}^T r(a_t, \bx_t) \, P(a_t, \bx_t)
& \geq \sum_{t=2}^T r(a_t, \bx_t) \, U_{t-1}(a_t, \bx_t)
- 2 \sum_{t=2}^T \varepsilon_{t-1}(a_t,\bx_t) \Inotanull{t} \\
& \geq \sum_{t=2}^T r(a_t, \bx_t) \, U_{t-1}(a_t, \bx_t) - E_T \\
\mbox{and} \qquad
\sum_{t=1}^T  \bc(a_t, \bx_t) \, P(a_t, \bx_t)
& \leq \mathbf{1} + \sum_{t=2}^T \bc(a_t,\bx_t) \, U_{t-1}(a_t, \bx_t)\,,
\end{align*}
where we recall that the bound $E_T$ was defined in~\eqref{eq:defET}.

\paragraph{Conclusion of this step.}
We define
\[
\cE_{\haz,\delta} = \cE_{r,P,\delta} \cap \cE_{\bc,P,\delta} \cap \cE_{r,U,\delta} \cap \cE_{\bc,U,\delta}\,,
\]
which is an event of probability at least $1-\delta$.
On the intersection of $\cE_{\haz,\delta}$ and $\cE_{\prob,\delta}$, by collecting all bounds together,
we have
\[
\sum_{t=1}^T \bc(a_t,\bx_t) \, y_t \leq
\left( B_T + 1 + \sqrt{2T \ln \frac{4d}{\delta}} \right) \mathbf{1}
= (B-1) \mathbf{1}\,.
\]
This shows that on the intersection of $\cE_{\haz,\delta}$ and $\cE_{\prob,\delta}$,
the indicator functions $\mathds{1}_{\{ \bC_{t-1} \leq (B-1)\mathbf{1} \}}$
all equal~$1$, and that the policies $\bp_t(h_{t-1},\,\cdot\,)$ are always obtained
by solving the optimization problems of Phase~2. We conclude this
step by collecting the obtained bounds for rewards and by legitimately replacing
the indicator functions therein by~$1$.

\subsection{Fourth Step: Conclusion}

In this step, we merely combine the bounds exhibited in the first three steps
to obtain the closed-form expression of the regret bound. We then propose suitable
orders of magnitude for the parameters.

\paragraph{Collecting all bounds to get a closed-form regret bound.}
By considering $\bq = \pi^\star(\bx)$ for each $\bx \in \cX$ in~\eqref{eq:charact-pt-x-1bis},
where we recall that $\pi^\star$ is the optimal static policy,
and by the equality~\eqref{eq:charact-pt-x-1ter},
we note that, for all $t \geq 2$,
\begin{align*}
\forall \bx \in \cX, \qquad
\sum_{a \in \cA} \Bigl( r(a,\bx) - & \bigl(\bbeta^{\budg,\star}_t\bigr)^{\, \transp} \bc(a, \bx) \Bigr) \, U_{t-1}(a,\bx) \, p_{t,a}(h_{t-1},\bx) \\
& \geq \sum_{a \in \cA} \Bigl( r(a,\bx) - \bigl(\bbeta^{\budg,\star}_t\bigr)^{\, \transp} \bc(a, \bx) \Bigr)_+ \, U_{t-1}(a,\bx) \, \pi^\star_{a}(\bx)\,.
\end{align*}
On the event $\cE_{\prob,\delta}$ of Lemma~\ref{lemma:pbound_conversion},
the inequality~\eqref{eq:UgepP} states that $U_{t-1}(a,\bx) \geq P(a,\bx)$, which we may substitute in the inequality above
(thanks to the non-negative part in the right-hand side) to get
\begin{align*}
\forall \bx \in \cX, \qquad
\sum_{a \in \cA} \Bigl( r(a,\bx) - & \bigl(\bbeta^{\budg,\star}_t\bigr)^{\, \transp} \bc(a, \bx) \Bigr) \, U_{t-1}(a,\bx) \, p_{t,a}(h_{t-1},\bx) \\
& \geq \sum_{a \in \cA} \Bigl( r(a,\bx) - \bigl(\bbeta^{\budg,\star}_t\bigr)^{\, \transp} \bc(a, \bx) \Bigr)_+ \, P(a,\bx) \, \pi^\star_{a}(\bx) \\
& \geq \sum_{a \in \cA} \Bigl( r(a,\bx) - \bigl(\bbeta^{\budg,\star}_t\bigr)^{\, \transp} \bc(a, \bx) \Bigr) \, P(a,\bx) \, \pi^\star_{a}(\bx)\,.
\end{align*}
We replace the individual $\bx$ by a random variable $\bX \sim \nu$ and integrate over $\bX$: on $\cE_{\prob,\delta}$,
\begin{align}
\label{eq:ucb-opt-BT}
& \E_{\bX \sim \nu} \left[ \sum_{a \in \cA} r(a,\bX)  \, U_{t-1}(a,\bX) \, p_{t, a}(h_{t-1},\bX) \right] \\
\nonumber
& \qquad\qquad - \bigl(\bbeta^{\budg,\star}_t\bigr)^{\, \transp}
\E_{\bX \sim \nu} \left[ \sum_{a \in \cA}  \bc(a, \bX) \, U_{t-1}(a,\bX) \, p_{t, a}(h_{t-1},\bX) \right] \\
\nonumber
\geq \ \ & \underbrace{\E_{\bX \sim \nu} \left[  \sum_{a \in \cA} r(a,\bX) \, P(a,\bX) \, \pi^{\star}_a(\bX) \right]}_{=\opt(\nu,P,B)/T}
- \bigl( \bbeta^{\budg,\star}_t \bigr)^{\, \transp}
\underbrace{\E_{\bX \sim \nu} \left[ \sum_{a \in \cA}  \bc(a, \bX) \, P(a,\bX) \, \pi^{\star}_a(\bX) \right]}_{\leq (B/T) \mathbf{1}}.
\end{align}
The equality to $\opt(\nu,P,B)/T$ and the inequality $\leq (B/T) \mathbf{1}$ above come from the very definition of $\pi^\star$
as the static policy solving $\opt(\nu,P,B)$.
Similarly, Appendix~\ref{app:vhpc} shows that on the event $\cE_{\haz,\delta}$,
for all $2 \leq t \leq T$, the policies $p_{t, a}(h_{t-1},\,\cdot\,)$ are obtained by solving
$\opt(\nu,U_{t-1},B_T)$, so that, by definition,
\[
\E_{\bX \sim \nu} \left[ \sum_{a \in \cA} r(a,\bX)  \, U_{t-1}(a,\bX) \, p_{t, a}(h_{t-1},\bX) \right]
= \frac{\opt(\nu,U_{t-1},B_T)}{T}\,.
\]
Substituting this equality and
the KKT condition~\eqref{eq:kkt-complementary-slackness-budget}
into~\eqref{eq:ucb-opt-BT} and rearranging, we see that we proved so far that
on the intersection of $\cE_{\prob,\delta}$ and $\cE_{\haz,\delta}$,
\[
\forall \, 2 \leq t \leq T, \qquad
\opt(\nu,P,B) - \opt(\nu,U_{t-1}, B_T)
\leq
(B-B_T) \, \bigl(\bbeta^{\budg,\star}_t\bigr)^{\, \transp} \mathbf{1}\,.
\]
We may also substitute~\eqref{eq:property-strong-duality}, i.e.,
$(\bbeta^{\budg,\star}_t )^{\, \transp} \mathbf{1} \leq \opt(\nu,U_{t-1},B_T)/B_T$ and finally get that
on the intersection of $\cE_{\prob,\delta}$ and $\cE_{\haz,\delta}$,
\[
\forall \, 2 \leq t \leq T, \qquad
\opt(\nu,P,B) - \opt(\nu,U_{t-1}, B_T)
\leq \left( \frac{B}{B_T} - 1 \right) \opt(\nu,U_{t-1},B_T)\,.
\]
Summing over $2 \leq t \leq T$ and using that $\opt(\nu,P,B)/T \leq 1$
by definition and by the fact that $r$ takes values in $[0,1]$,
we obtain
\begin{align}
\nonumber
\opt(\nu,P,B) - \sum_{t=2}^T \frac{\opt(\nu,U_{t-1}, B_T)}{T}
& \leq 1 + \sum_{t=2}^T \frac{\opt(\nu,P,B) - \opt(\nu,U_{t-1}, B_T)}{T} \\
\label{eq:opt-diff}
& \leq 1 + \left( \frac{B}{B_T} - 1 \right) \sum_{t=2}^T \frac{\opt(\nu,U_{t-1}, B_T)}{T}\,.
\end{align}
Distinguishing the cases
\[
\opt(\nu,P,B) - \sum_{t=2}^T \frac{\opt(\nu,U_{t-1}, B_T)}{T} \leq 0 \quad \ \mbox{and} \ \quad
\opt(\nu,P,B) - \sum_{t=2}^T \frac{\opt(\nu,U_{t-1}, B_T)}{T} \geq 0\,,
\]
we see, based on~\eqref{eq:opt-diff}, that in both cases
\[
\opt(\nu,P,B) - \sum_{t=2}^T \frac{\opt(\nu,U_{t-1}, B_T)}{T}
\leq 1 + \left( \frac{B}{B_T} - 1 \right) \opt(\nu,P,B)\,.
\]
We finally substitute~\eqref{eq:ccl3rdstep-r} and have proved,
as claimed, that on the intersection of $\cE_{\prob,\delta}$ and $\cE_{\haz,\delta}$,
\begin{equation}
\label{eq:bdreg-almostfinal}
\opt(\nu,P,B) -
\sum_{t=1}^T r(a_t,\bx_t) \, y_t \leq
1 + \left( \frac{B}{B_T} - 1 \right) \opt(\nu,P,B)
+ E_T + \sqrt{2T \ln \frac{4}{\delta}}\,,
\end{equation}
where we recall that $E_T$ was defined in~\eqref{eq:defET}.

\paragraph{Improving readability and setting $\lambda$.}
As indicated, we require~\eqref{eq:defBT-cstr} with $c=2$.
By the definition~\eqref{eq:defBT} of $B_T$
and the inequality $1/(1-u) \leq 1+2u$ for $u \in (0,1/2)$,
we have
\begin{equation}
\label{eq:BBT}
\frac{B}{B_T} - 1
\leq \frac{2 \bigl( 2 + \sqrt{2 T \ln (4d/\delta)} \bigr)}{B}\,,
\end{equation}
which we may substitute in~\eqref{eq:bdreg-almostfinal}. It only
remains to deal with a bound on $E_T$ to conclude
the proof of Theorem~\ref{th:main}.

We set below a value of $\lambda$ larger than~$1$. Recalling that $\kappa \geq 4$, we may
already bound $E_T$ as
\begin{equation}
\label{eq:bdETpart1}
E_T \leq \gamma_{T,\lambda,\delta} \sqrt{\kappa \bigl(4 \Arrowvert \Theta \Arrowvert_{\maxn} +2\bigr)}
\sqrt{2m T \ln \biggl(1+\frac{T}{4 m}\biggr)}
= 2 \gamma_{T,\lambda,\delta} \sqrt{\kappa \bigl(2 \Arrowvert \Theta \Arrowvert_{\maxn} +1\bigr)}
\sqrt{m T \ln \biggl(1+\frac{T}{4 m}\biggr)}\,,
\end{equation}
where $\gamma_{T,\lambda,\delta}$, defined in the statement of Lemma~\ref{lemma:pbound_conversion},
may itself be bounded by
\[
\gamma_{T,\lambda,\delta} \leq \sqrt{\lambda} \bigl( \Arrowvert \Theta \Arrowvert_{\maxn} + 1/2 \bigr) + \frac{2}{\sqrt{\lambda}} \ln
\Biggl( \frac{2^m}{\delta} \biggl(1 + \frac{T}{4 m} \biggr)^{m/2} \Biggr)\,.
\]
For the sake of simplicity, we set the value of $\lambda$ by only optimizing
the orders of magnitude in $m$ and $T$ of (this upper bound on) $\gamma_{T,\lambda,\delta}$,
i.e., by considering
\[
\sqrt{\lambda} + \frac{m}{\sqrt{\lambda}} \ln T\,.
\]
We take $\lambda = m \ln(1+T/m)$, which is indeed larger than~$1$ given that $T \geq 2m$ and $\ln(1+T/m) \geq 1$.
We have
\[
\frac{2}{\sqrt{\lambda}} \ln \frac{1}{\delta} \leq \ln \frac{1}{\delta}
\qquad \mbox{and}
\qquad \frac{2}{\sqrt{\lambda}} \ln 2^m \leq (2 \ln 2) \sqrt{m} \leq 2\sqrt{m}\,,
\]
as well as
\[
\frac{2}{\sqrt{\lambda}} \ln \Biggl( \biggl(1 + \frac{T}{4 m} \biggr)^{m/2} \Biggr)
\leq \frac{\sqrt{m}}{\sqrt{\ln(1+T/m)}} \ln \biggl(1 + \frac{T}{4 m} \biggr) \leq \sqrt{m \ln \biggl(1 + \frac{T}{4 m} \biggr)}\,.
\]
All in all,
\begin{equation}
\label{eq:bdETpart2}
\gamma_{T,\lambda,\delta} \leq \bigl( \sqrt{m} + \Arrowvert \Theta \Arrowvert_{\maxn} + 1/2 \bigr)
\sqrt{\ln \biggl(1 + \frac{T}{m} \biggr)} + 2\sqrt{m} + \ln \frac{1}{\delta} \,.
\end{equation}
Combining~\eqref{eq:bdETpart1} and~\eqref{eq:bdETpart2}, we showed:
\begin{equation}
\label{eq:bdETpart3}
E_T \leq 2\sqrt{\kappa \bigl(2 \Arrowvert \Theta \Arrowvert_{\maxn} +1\bigr)}
\sqrt{m T \ln \biggl(1+\frac{T}{4 m}\biggr)}
\left( \bigl( \sqrt{m} + \Arrowvert \Theta \Arrowvert_{\maxn} + 1/2 \bigr)
\sqrt{\ln \biggl(1 + \frac{T}{m} \biggr)} + 2\sqrt{m} + \ln \frac{1}{\delta} \right).
\end{equation}

\clearpage
\section{Adaptation of the Logistic-UCB1 Strategy of~\citet{Faury2020ImprovedOA}: \\ ~~~~~~~Proof of Lemma~\ref{lemma:pbound_conversion}}
\label{app:Logistic-UCB1}

The proof is copied from the proof of \citet[Lemmas~1 and~2]{Faury2020ImprovedOA}, with minor adjustments.
We mostly provide it for the sake of self-completeness.

The adjustments are required
because the setting of~\citet{Faury2020ImprovedOA} is slightly different: their action set is a subset $\cX \subseteq \R^n$,
and when the learner picks an action $\bx_t \in \cX$ at round $t$, the obtained reward $r_t \in \{0,1\}$
is drawn independently at random according to a Bernoulli distribution with parameter
$\sig \bigl( \bx_t^{\transp} \btheta_\star \bigr)$, where $\btheta_\star \in \R^n$ is unknown. The learner
then only observes $r_t$ and not what would have been obtained with a different choice of action.

Therefore, the $\bx_t$ and $r_t$ of \citet{Faury2020ImprovedOA} correspond
to $\bphi(a_t, \bx_t)$ and $y_t$ in our setting. The main difference is that while
the learner has a full control over the choice of $\bx_t \in \cX$ in the setting of \citet{Faury2020ImprovedOA},
in our setting, $\bx_t \in \cX$ is drawn by the environment and the learner only picks $a_t \in \cA$;
the learner therefore does not have a full control over $\bphi(a_t, \bx_t)$.
This is why we carefully check in the present appendix that the results by \citet{Faury2020ImprovedOA}
(namely, their Lemmas~1 and~2) extend to our setting.

\paragraph{Reminder --- A tail inequality for self-normalized martingales.}
Theorem~1 of \citet{Faury2020ImprovedOA} reads as follows.
Let $\cF = (\cF_t)_{t \geq 0}$ be a filtration, $(\bU_t)_{t \geq 1}$ an $\cF$--adapted stochastic vector process
in $\R^m$ such that $\Arrowvert \bU_t \Arrowvert \leq 1$ a.s.\ for all $t \geq 1$, and
$(\Delta_t)_{t \geq 1}$ an $\cF$--martingale difference sequence with
$|\Delta_t| \leq 1$ a.s.\ for all $t \geq 1$; i.e., for all $t \geq 1$,
the random variable $\Delta_t$ is $\cF_t$--measurable and
\[
\E[ \Delta_t \, | \, \cF_{t-1} ] = 0 \qquad \mbox{a.s.}
\]
Denote $\sigma_t^2 = \E \bigl[ \Delta_t^2 \, | \, \cF_{t-1} \bigr]$, let $\lambda > 0$, and define, for $t \geq 1$:
\[
S_t = \sum_{s=1}^t \Delta_s \bU_s
\qquad \mbox{and} \qquad
M_t = \lambda \id_m + \sum_{s=1}^t \sigma_s^2 \bU_s \bU_s^{\,\,\transp}\,.
\]
Then, for all $\delta \in (0,1)$, with probability at least $1-\delta$:
\begin{equation}
\label{eq:th:1F}
\forall t \geq 1, \qquad
\Arrowvert S_t \Arrowvert_{M_t^{-1}} \leq \frac{\sqrt{\lambda}}{2} + \frac{2}{\sqrt{\lambda}} \ln
\biggr( \frac{2^m \text{det}(M_{t})^{1/2} \lambda^{-m/2}}{\delta} \biggr).
\end{equation}

The result above is proved by \citet{Faury2020ImprovedOA}
based on Laplace's method on supermartingales, which is a standard argument to provide confidence
bounds on self-normalized sums of conditionally centered random vectors and was
previously introduced, in the context of linear contextual bandits, by \citet[Theorem~2]{AbbasiYadkori2011ImprovedAF};
see also the monograph by \citet[Theorem 20.2]{Lattimore2020BA}.

\paragraph{Step~1 --- A martingale control.}
We apply~\eqref{eq:th:1F} to the following elements.
We take as filtration $\cF = (\cF_t)_{t \geq 0}$, with
$\cF_0 = \sigma(a_1,\bx_1)$ and for $t \geq 1$,
\[
\cF_{t} = \sigma\bigl( h_{t}, \, a_{t+1}, \, \bx_{t+1} \bigr)\,,
\]
where we recall $h_{t} = (\bx_s, a_s, \,y_s)_{s \leq t}$.
We set $\bU_t = \bphi(a_t, \bx_t)$, which is indeed $\cF_t$--measurable
and with Euclidean norm smaller than~$1$ (thanks to the normalization assumed in Section~\ref{sec:protocol}).
Finally, we set, for $t \geq 1$,
\[
\Delta_t =
\begin{cases*}
0 & if $a_t = \anull$, \\
y_t - {\sig}\bigl( \bphi(a_t, \bx_t)^{\transp} \btheta_\star \bigr) & if $a_t \ne \anull$,
\end{cases*}
\]
which we rewrite, by the abuses of notation indicated in Section~\ref{sec:protocol},
\[
\Delta_t = \Bigl( y_t - {\sig}\bigl( \bphi(a_t, \bx_t)^{\transp} \btheta_\star \bigr) \Bigr) \, \Inotanull{t}
= \bigl( y_t - P(a_t,\bx_t) \bigr) \, \Inotanull{t} \,.
\]
The conditioning by $\cF_{t-1}$ fixes $a_t$ and $\bx_t$, but not $y_t$,
and exactly means, when $a_t \ne \anull$, integrating over $y_t \sim P(a_t,\bx_t)$.
We therefore have that $\Delta_t$ is $\cF_t$--measurable, with
\[
\E[ \Delta_t \, | \, \cF_{t-1} \bigr] = 0 \qquad \mbox{a.s.};
\]
that is, $(\Delta_t)_{t \geq 1}$ appears as an $\cF$--martingale difference sequence, satisfying the boundedness-by-1 constraint.
We may therefore apply the result~\eqref{eq:th:1F}. To do so, we first compute the
conditional variances of the $\Delta_t$: for $t \geq 1$,
\begin{multline*}
\E \bigl[ \Delta_t^2 \, | \, \cF_{t-1} \bigr]
= \E\biggl[ \Bigl( y_t - {\sig}\bigl( \bphi(a_t, \bx_t)^{\transp} \btheta_\star \bigr) \Bigr)^2 \,\Big|\, \cF_{t-1} \biggr] \, \Inotanull{t} \\
= {\sig}\bigl( \bphi(a_t, \bx_t)^{\transp} \btheta_\star \bigr) \,\, \Bigl( 1 - {\sig}\bigl( \bphi(a_t, \bx_t)^{\transp} \btheta_\star \bigr) \Bigr)
\, \Inotanull{t} = \dot{\sig}\bigl( \bphi(a_t, \bx_t)^{\transp} \btheta_\star \bigr) \, \Inotanull{t}\,,
\end{multline*}
where we used the fact that $\eta(1-\eta) = \dot{\sig}$.
We rewrite
\[
S_t = \sum_{s=1}^t \Delta_t \, \bphi(a_s, \bx_s) =
\sum_{s=1}^t \Bigl( y_s - {\sig}\bigl( \bphi(a_t, \bx_t)^{\transp} \btheta_\star \bigr) \Bigr) \bphi(a_s, \bx_s) \Inotanull{s}
\]
and note that $M_t = W_t(\btheta)$ with the definition~\eqref{eq:Wttheta}.
The control~\eqref{eq:th:1F} may be rewritten as follows: for all $\delta \in (0,1)$, with probability at least $1-\delta$,
\[
\forall t \geq 1, \qquad \Arrowvert S_{t} \Arrowvert_{W_{t}(\btheta_\star)^{-1}}
\leq \frac{\sqrt{\lambda}}{2} \, + \frac{2}{\sqrt{\lambda}} \,  \ln \biggr( \frac{2^m \text{det}\bigr(W_{t}(\btheta_\star)\bigr)^{1/2} \lambda^{-m/2}}{\delta} \biggr).
\]
As $\dot{\eta} = \eta(1-\eta) \in [0,1/4]$ and as $\Arrowvert \bphi \Arrowvert \leq 1$,
we have, by a standard trace-determinant inequality (see, e.g., \citet[Lemma~10]{AbbasiYadkori2011ImprovedAF}),
\begin{equation*}
\begin{split}
\text{det}\bigr(W_{t}(\btheta_\star)\bigr) & \leq
\left( \lambda + \frac{1}{m} \sum_{s=1}^{t}
\dot{\sig} \bigl( \bphi(a_s, \bx_s)^{\transp} \btheta_\star \bigr) \,
\bigl\Arrowvert \bphi(a_s, \bx_s) \bigr\Arrowvert^2 \, \Inotanull{s} \right)^{\! m}  \\
& \leq \left( \lambda + \frac{1}{4m} \sum_{s=1}^{t} \Inotanull{s} \right)^{\! m}
\leq \left( \lambda + \frac{t}{4m} \right)^{\! m}.
\end{split}
\end{equation*}
Combining the two inequalities, we have proved that for all $\delta \in (0,1)$, with probability at least $1-\delta$,
\begin{equation}
\label{eq:ctrl-mart-Faury}
\forall t \geq 1, \qquad \Arrowvert S_{t} \Arrowvert_{W_{t}(\btheta_\star)^{-1}}
\leq \gamma_{t,\lambda,\delta} - \sqrt{\lambda} \Arrowvert \Theta \Arrowvert_{\maxn}\,,
\end{equation}
where~$\gamma_{t,\lambda,\delta}$ was defined in Lemma~\ref{lemma:pbound_conversion}.

\paragraph{Step~2 --- Application of the martingale control.}
The martingale control~\eqref{eq:ctrl-mart-Faury} is applied as follows.
We show below that the definition~\eqref{eq:besttheta} of~$\tilde{\btheta}_t$ entails that
\begin{equation}
\label{eq:StPsi}
S_t - \lambda\,{\btheta}_\star  = \Psi_t \bigl( \tilde{\btheta}_t \bigr) - \Psi_t( {\btheta}_\star )\,,
\end{equation}
where $\Psi_t$ was defined in~\eqref{eq:projtheta}.
Taking the $\Arrowvert\,\cdot\,\Arrowvert_{W_t(\btheta_\star)^{-1}}$ norms of both sides,
together with a triangle inequality (keeping in mind the boundedness of $\Theta$)
and noting that
\[
\Arrowvert \btheta \Arrowvert_{W_t(\btheta_\star)^{-1}} \leq
\Arrowvert \btheta \Arrowvert_{(\lambda \id_m)^{-1}} = \Arrowvert \btheta \Arrowvert / \sqrt{\lambda}\,,
\]
finally yields that for all $\delta \in (0,1)$, with probability at least $1-\delta$,
\begin{equation}
\label{eq:resultmartcontrol}
\forall t \geq 1,  \qquad
\Bigl\Arrowvert \Psi_t \bigl( \tilde{\btheta}_t \bigr) -
\Psi_t({\btheta}_\star) \Bigr\Arrowvert_{W_t(\btheta_\star)^{-1}}
\leq \Arrowvert S_t \Arrowvert_{W_t(\btheta_\star)^{-1}} + \sqrt{\lambda} \Arrowvert \btheta \Arrowvert
\leq \gamma_{t,\lambda,\delta}\,.
\end{equation}
We now show~\eqref{eq:StPsi}.
The gradient of the continuously differentiable function
\[
\btheta \in \R^m \longmapsto
\sum_{s=1}^{t} \Inotanull{s} \biggl( y_{s} \ln \sig \bigl( \bphi(a_s, \bx_s)^{\transp} \btheta \bigr)
+ (1- y_{s}) \ln \Bigl( 1 - \sig \bigl( \bphi(a_s, \bx_s)^{\transp} \btheta \bigr) \Bigr) \biggr)
- \frac{\lambda}{2} \Arrowvert \btheta \Arrowvert
\]
vanishes
at the point $\tilde{\btheta}_{t}$ where it achieves its maximum, i.e., $\tilde{\btheta}_{t}$
defined in~\eqref{eq:besttheta} satisfies
\begin{multline*}
\sum_{s=1}^{t} \Bigl( \sig \bigl(\bphi(a_s, \bx_s)^{\transp} \tilde{\btheta}_{t} \bigr) - y_s \Bigr) \bphi(a_s, \bx_s) \Inotanull{s} \\
= \sum_{s=1}^{t} \left( y_s \, \frac{\dot{\sig} \bigl(\bphi(a_s, \bx_s)^{\transp} \tilde{\btheta}_{t} \bigr) \, \bphi(a_s, \bx_s)}{\sig \bigl(\bphi(a_s, \bx_s)^{\transp} \tilde{\btheta}_{t} \bigr)} + (1-y_s) \, \frac{\dot{\sig} \bigl(\bphi(a_s, \bx_s)^{\transp} \tilde{\btheta}_{t} \bigr) \, \bphi(a_s, \bx_s)}{1-\sig \bigl(\bphi(a_s, \bx_s)^{\transp} \tilde{\btheta}_{t} \bigr)} \right) \Inotanull{s}
= \lambda \, \tilde{\btheta}_{t}\,,
\end{multline*}
where we used $\dot{\eta} = \eta(1-\eta)$ to get the first equality.
In particular,
\[
\Psi_t \bigl( \tilde{\btheta}_t \bigr) =
\sum_{s=1}^{t} \sig \bigl(\bphi(a_s, \bx_s)^{\transp} \tilde{\btheta}_t \bigr) \, \bphi(a_s, \bx_s) \Inotanull{s} + \lambda \, \tilde{\btheta}_t
= \sum_{s=1}^{t} y_s \, \bphi(a_s, \bx_s) \Inotanull{s}\,,
\]
hence the stated rewriting~\eqref{eq:StPsi}.

\paragraph{Step 3 --- Bound on prediction error.}
We now proceed to bounding, for all $a \in \cA \setminus \{ \anull \}$ and~$\bx$:
\[
\bigl| P(a,\bx)  - \hat{P}_{t}(a,\bx) \bigr|
= \Bigl| \sig \bigl( \bphi(a, \bx)^{\transp}
\btheta_\star \bigr) - \sig \bigl( \bphi(a, \bx)^{\transp}
\hat{\btheta}_{t} \bigr) \Bigr|\,.
\]
As $\eta$ is $1/4$--Lipschitz (given $\dot{\eta} = \eta(1-\eta) \in [0,1/4]$),
\begin{equation}
\label{eq:err-bound-start}
\bigl| \hat{P}_{t}(a,\bx) - P(a,\bx) \bigr|
\leq \frac{1}{4} \, \Bigl| \bphi(a, \bx)^{\transp} (\btheta_\star - \hat{\btheta}_{t} ) \Bigr|\,.
\end{equation}
For two $m \times m$ symmetric definite positive matrices $M$ and $M'$, we write
$M \succeq M'$ whenever $\Arrowvert \bv \Arrowvert_M \geq \Arrowvert \bv \Arrowvert_{M'}$
for all $\bv \in \R^m$. This inequality entails $(M')^{-1} \succeq M^{-1}$.
We introduce below a symmetric definite positive matrix $G_t$ such that
\begin{equation}
\label{eq:Gtsucceq}
G_t \succeq \kappa^{-1} V_t\,, \qquad
G_t \succeq \bigl(1 + 2 \Arrowvert \Theta \Arrowvert_{\maxn} \bigr)^{-1} \, W_t\bigl( \hat{\btheta}_t \bigr)\,, \qquad
G_t \succeq \bigl(1 + 2 \Arrowvert \Theta \Arrowvert_{\maxn} \bigr)^{-1} \, W_t({\btheta}_\star)
\end{equation}
and
\begin{equation}
\label{eq:PsieqGt}
\Psi_t \bigl( \hat{\btheta}_t \bigr) - \Psi_t\bigl( {\btheta}_\star \bigr) = G_t \bigl( \hat{\btheta}_t - {\btheta}_\star \bigr)\,.
\end{equation}
Based on all these properties, we get, by a Cauchy-Schwarz inequality
in the norms $\Arrowvert \,\cdot\, \Arrowvert_{G_t}$ and $\Arrowvert \,\cdot\, \Arrowvert_{G_t^{-1}}$:
\begin{align*}
\Bigl| \bphi(a, \bx)^{\transp} & (\btheta_\star - \hat{\btheta}_{t} ) \Bigr| \\
& \leq
\bigl\| \bphi(a, \bx) \bigr\|_{G_t^{-1}} \, \bigl\| \btheta_\star - \tilde{\btheta}_{t} \bigr\|_{G_t} \\
& = \bigl\| \bphi(a, \bx) \bigr\|_{G_t^{-1}} \, \Bigl\| G_t \bigl( \btheta_\star - \tilde{\btheta}_{t} \bigr) \Bigr\|_{G_t^{-1}} \\
& = \bigl\| \bphi(a, \bx) \bigr\|_{G_t^{-1}} \, \Bigl\| \Psi \bigl( \hat{\btheta}_t \bigr) - \Psi\bigl( {\btheta}_\star \bigr) \Bigr\|_{G_t^{-1}}
\leq  \sqrt{\kappa} \, \bigl\| \bphi(a, \bx) \bigr\|_{V_t^{-1}} \, \Bigl\| \Psi \bigl( \hat{\btheta}_t \bigr) - \Psi\bigl( {\btheta}_\star \bigr) \Bigr\|_{G_t^{-1}}\,.
\end{align*}
A triangle inequality for the first inequality, followed by applying~\eqref{eq:Gtsucceq} for
the second inequality, and applying the definition of~\eqref{eq:projtheta} as a projection
for the third inequality, shows that
\begin{align*}
\Bigl\| \Psi \bigl( \hat{\btheta}_t \bigr) - \Psi\bigl( {\btheta}_\star \bigr) \Bigr\|_{G_t^{-1}}
& \leq \Bigl\| \Psi\bigl( \hat{\btheta}_t \bigr) - \Psi \bigl( \tilde{\btheta}_t \bigr) \Bigr\|_{G_t^{-1}}
+ \Bigl\| \Psi\bigl( {\btheta}_\star \bigr) - \Psi \bigl( \tilde{\btheta}_t \bigr) \Bigr\|_{G_t^{-1}}\\
& \leq \sqrt{1 + 2 \Arrowvert \Theta \Arrowvert_{\maxn}} \biggl(
\Bigl\| \Psi\bigl( \hat{\btheta}_t \bigr) - \Psi \bigl( \tilde{\btheta}_t \bigr) \Bigr\|_{W_t(\hat{\btheta}_t)^{-1}} +
\Bigl\| \Psi({\btheta}_\star) - \Psi \bigl( \tilde{\btheta}_t \bigr) \Bigr\|_{W_t({\btheta}_\star)^{-1}}
\biggr) \\
& \leq 2 \sqrt{1 + 2 \Arrowvert \Theta \Arrowvert_{\maxn}} \,\,
\Bigl\| \Psi({\btheta}_\star) - \Psi \bigl( \tilde{\btheta}_t \bigr) \Bigr\|_{W_t({\btheta}_\star)^{-1}}\,.
\end{align*}
Substituting the martingale control~\eqref{eq:resultmartcontrol} and collecting all bounds together
finally yields
\[
\bigl| \hat{P}_{t}(a,\bx) - P(a,\bx) \bigr|
\leq \frac{\sqrt{\kappa}}{2} \, \bigl\| \bphi(a, \bx) \bigr\|_{V_t^{-1}} \, \sqrt{1 + 2 \Arrowvert \Theta \Arrowvert_{\maxn}}
\,\, \gamma_{t,\lambda,\delta}  \,,
\]
as desired.

Note in particular that~\eqref{eq:resultmartcontrol} holds for all $t \geq 1$
with probability $1 - \delta$,
and that we took care of the dependencies on $a$ and $\bx$ through the
$\bigl\| \bphi(a, \bx) \bigr\|_{V_t^{-1}}$ term.
This explains why the result of Lemma~\ref{lemma:pbound_conversion}
holds with probability $1 - \delta$ for all $t \geq 1$, all $a \in \cA \setminus \{ \anull \}$ and all $\bx \in \cX$.

\paragraph{Step 4 --- Construction of the matrix $G_t$.}
It only remains to show that there exists a symmetric definite positive matrix $G_t$ such
that~\eqref{eq:Gtsucceq} and~\eqref{eq:PsieqGt} hold.
We define
\begin{align*}
& G_t = \int_{[0,1]} W_t\bigl( v \, \hat{\btheta}_{t} + (1-v)\btheta_\star \bigr) \d v \\
& = \lambda \id_m +
\sum_{s=1}^t \left( \int_{[0,1]} \dot{\sig} \bigl( v \, \bphi(a_s, \bx_s)^{\transp} \hat{\btheta}_{t} +
(1-v) \bphi(a_s, \bx_s)^{\transp} \btheta_\star \bigr) \d v \right)
\bphi(a_s, \bx_s) \bphi(a_s, \bx_s)^{\transp} \Inotanull{s} \,.
\end{align*}
The thus defined matrix $G_t$ is indeed symmetric definite positive matrix.
By definition of $\kappa$ and the fact that $\Theta$ is convex, we have,
for all $v \in [0,1]$,
\[
\dot{\sig} \bigl( v \, \bphi(a_s, \bx_s)^{\transp} \hat{\btheta}_{t} +
(1-v) \bphi(a_s, \bx_s)^{\transp} \btheta_\star \bigr)
= \dot{\sig} \Bigl( \bphi(a_s, \bx_s)^{\transp} \bigl( v \hat{\btheta}_{t} +
(1-v) \btheta_\star \bigr) \Bigr) \geq \kappa^{-1}\,,
\]
which also immediately entails
the first inequality of~\eqref{eq:Gtsucceq}.
To prove~\eqref{eq:PsieqGt},
we introduce, for $s \geq 1$ such that $a_s \ne \anull$, the continuously differentiable function
\[
f_{s,t} :
v \in [0,1] \longmapsto f_{s,t}(v) = \sig \bigl( v \, \bphi(a_s, \bx_s)^{\transp} \hat{\btheta}_{t} + (1-v) \bphi(a_s, \bx_s)^{\transp} \btheta_\star \bigr)\,,
\]
with derivative
\[
\dot{f}_{s,t}(v) =
\dot{\sig} \bigl( v \, \bphi(a_s, \bx_s)^{\transp} \hat{\btheta}_{t} + (1-v) \bphi(a_s, \bx_s)^{\transp} \btheta_\star \bigr)
\,\, \bphi(a_s, \bx_s)^{\transp} \! \bigl( \hat{\btheta}_{t} - \btheta_\star \bigr)\,,
\]
and we have
\[
\sig \bigl( \bphi(a_s, \bx_s)^{\transp} \hat{\btheta}_{t} \bigr)
- \sig \bigl( \bphi(a_s, \bx_s)^{\transp} \btheta_\star \bigr)
= f_{s,t}(1) - f_{s,t}(0) = \int_{[0,1]} \dot{f}_{s,t}(v) \d v\,,
\]
These facts, combined with the definition of $G_t$, immediately entail~\eqref{eq:PsieqGt}.

It only remains to prove the third inequality of~\eqref{eq:Gtsucceq}, namely,
\[
G_t \succeq \bigl(1 + 2 \Arrowvert \Theta \Arrowvert_{\maxn} \bigr)^{-1} \, W_t( {\btheta}_\star )\,,
\]
as the second one is obtained by symmetry from there, by exchanging the roles of ${\btheta}_\star$ and~$\hat{\btheta}_t$.
To do so, we rely on the following inequality:
for all $z_1, z_2 \in \R$,
\begin{equation}
\label{eq:selfconc}
\int_{[0,1]} \dot{\sig} \bigl( z_1 + v (z_2 - z_1) \bigr) \d v \geq \frac{\dot{\sig} (z_1)}{1 + \arrowvert z_1 - z_2 \arrowvert}\,.
\end{equation}
This inequality is proved, in the case $z_1 \ne z_2$, by noting that the second-order derivative of $\eta$ equals
\[
\ddot{\eta}(x) = \frac{\e^{-x} -1}{1 + \e^x} \,\,\dot{\eta}(x)\,,
\qquad \mbox{where} \qquad \frac{\e^{-x} -1}{1 + \e^x} \in [-1,1]\,,
\]
so that for all $z,z' \in \R$,
\[
\ln \dot{\eta}(z') - \ln \dot{\eta}(z) = \int_{z}^{z'}
\frac{\ddot{\eta}(\tau)}{\dot{\eta}(\tau)} \d \tau \geq - |z'-z|\,, \qquad \mbox{i.e.}, \qquad
\dot{\eta}(z') \geq \dot{\eta}(z) \, \e^{- |z'-z|}\,.
\]
Therefore,
\[
\int_{[0,1]} \dot{\sig} \bigl( z_1 + v (z_2 - z_1) \bigr) \d v
\geq
\int_{[0,1]} \dot{\eta}(z_1) \, \e^{- v|z_1-z_2|} \d v
= \dot{\eta}(z_1) \, \frac{1 - \e^{- |z_1-z_2|}}{|z_1-z_2|} \geq \frac{\dot{\eta}(z_1)}{1+|z_1-z_2|}\,,
\]
where we applied the inequality $(1-\e^{-x})/x \geq 1/(1+x)$, which holds for all $x \geq 0$.

We go back to proving the third inequality of~\eqref{eq:Gtsucceq}.
With \eqref{eq:selfconc} for the first inequality, followed by an application of
the Cauchy-Schwarz inequality for the second inequality, and
the fact that $\btheta_{\star}, \, \hat{\btheta}_{t} \in \Theta$
have Euclidean norms smaller than~$\Arrowvert \Theta \Arrowvert_{\maxn}$,
together with $\Arrowvert \bphi\Arrowvert \leq 1$, we have,
for all $s \geq 1$ with $a_s \ne \anull$,
\begin{align*}
\int_{[0,1]} \dot{\sig} \bigl( v \, \bphi(a_s, \bx_s)^{\transp} \hat{\btheta}_{t} & +
(1-v) \bphi(a_s, \bx_s)^{\transp} \btheta_\star \bigr) \d v \\
& \geq \dot{\sig} \bigl( \bphi(a_s, \bx_s)^{\transp} \btheta_\star \bigr) \Bigl( 1 +  \bigl  \arrowvert \bphi(a_s, \bx_s)^{\transp}  (\btheta_\star - \hat{\btheta}_{t} )  \bigr \arrowvert \Bigr ) ^{\! -1} \\
& \geq \dot{\sig} \bigl( \bphi(a_s, \bx_s)^{\transp} \btheta_\star \bigr) \Bigl( 1 +  \bigl  \Arrowvert \bphi(a_s, \bx_s) \bigr \Arrowvert \;  \bigl  \Arrowvert \btheta_\star - \hat{\btheta}_{t} \bigr \Arrowvert \Bigr ) ^{\! -1} \\
& \geq \dot{\sig} \bigl( \bphi(a_s, \bx_s)^{\transp} \btheta_\star \bigr) \bigl(1 + 2 \Arrowvert \Theta \Arrowvert_{\maxn} \bigr)^{-1}\,.
\end{align*}
As $\bigl(1 + 2 \; \Arrowvert \Theta \Arrowvert_{\maxn} \bigr)^{-1} \leq 1$, we can then conclude,
from the definition of $G_t$, that
\[
\begin{split}
G_t & \succeq
(1 + 2 \; \Arrowvert \Theta \Arrowvert_{\maxn})^{-1}
\left( \lambda \id_m + \sum_{s=1}^{t}
\dot{\sig} \bigl( \bphi(a_s, \bx_s)^{\transp} \btheta_{\star} \bigr)
\, \bphi(a_s, \bx_s) \bphi(a_s, \bx_s)^{\transp} \Inotanull{s} \right) \\
& = (1 + 2 \; \Arrowvert \Theta \Arrowvert_{\maxn})^{-1} \,
W_t({\btheta}_\star)
\,,
\end{split}
\]
as announced. This concludes the proof.

\clearpage
\section{Proof of the Regret Bound in Case of an Unknown Distribution~$\nu$: \\ ~~~~~~~Proof of Theorem~\ref{th:main-unknown}}
\label{app:proof-nu-unknown}

We rather explain the differences to and modifications with respect to the proof of Appendix~\ref{app:proof-nu-known}.

To best do so, we use $\hat{~}$ superscripts to index various quantities defined based on the estimations $\hat{\nu}_t$,
even though these quantities are not themselves estimators. In particular, the budget parameter is denoted by $\hat{B}_T$;
we refer to the policies computed at rounds $t \geq 2$ by $\hat{\bp}_t(h_{t-1},\,\cdot\,)$,
and by $\hat{\cE}_\delta$ the event of Lemma~\ref{lemma:pbound_conversion}
for the sampling strategy pulling actions $a_t \sim \hat{\bp}_t(\bx_t)$,
which is exactly the strategy that we are analyzing here;
and finally, the optimal dual variables linked to the budget
are denoted by $\hat{\bbeta}^{\budg,\star}_t$.

\subsection{Key New Building Block: Uniform Deviation Inequality}

Throughout this appendix, we will need to relate quantities defined based on
$\hat{\nu}_t$ to the target quantities defined based on $\nu$. All these quantities will be
of the form: for $1 \leq t \leq T$ and for various functions $f : \cX \to [0,1]$,
\[
\E_{\bX \sim \hat{\nu}_t} \bigl[ f(\bX) \bigr]
\qquad \mbox{and} \qquad
\E_{\bX \sim \hat{\nu}_t} \bigl[ f(\bX) \bigr]\,.
\]
A simple (but probably slightly suboptimal) way to do so is to apply $T |\cX|$ times the
Hoeffding-Azuma inequality together with a union bound. We get that on an event $\cEnu$
of probability at least $1-\delta$,
\[
\forall \, 1 \leq t \leq T, \quad \forall \bx \in \cX, \qquad
\bigl| \hat{\nu}_t(\bx) - \nu(\bx) \bigr| \leq \sqrt{\frac{1}{2t} \ln \frac{2 T |\cX|}{\delta}}\,.
\]
In particular, with probability at least $1-\delta$,
for all functions $f : \cX \to [0,1]$ and all $1 \leq t \leq T$,
\begin{equation}
\label{eq:step0-HAz-f}
\Bigl| \E_{\bX \sim \hat{\nu}_t} \bigl[ f(\bX) \bigr]
- \E_{\bX \sim \hat{\nu}_t} \bigl[ f(\bX) \bigr] \Bigr|
\leq \sum_{x \in \cX} \bigl| \hat{\nu}_t(\bx) - \nu(\bx) \bigr|
\leq |\cX| \sqrt{\frac{1}{2t} \ln \frac{2 T |\cX|}{\delta}}\,.
\end{equation}

We now explain the adaptations required (or not required) for each of the four steps of the proof
provided in Appendix~\ref{app:proof-nu-known}.

\subsection{First and Second Steps: No Adaptation Required}
\label{app:recap-step2}

These two steps do not require any adaptation; we merely re-state the useful results
extracted therein, with the corresponding adapted notation.

The first step (Appendix~\ref{sec:ICP}) held for any sampling
strategy. Therefore,
the same upper-confidence bonuses~\eqref{eq:def:vareps}
and Lemma~\ref{lemma:pbound_conversion}
entail that on an event $\hat{\cE}_\delta$ of probability at least $1-\delta$,
for all $t \geq 1$, all $a \in \cA \setminus \{ \anull \}$, and all $\bx \in \cX$:
\begin{equation}
\label{eq:UP-unknown}
P(a, \bx) \leq U_{t}(a,\bx) \leq P(a, \bx) + 2\varepsilon_{t}(a,\bx)\,.
\end{equation}
The bound~\eqref{eq:defET} also still holds, as it was obtained in a deterministic manner
not using any specific feature of the sampling strategy; namely,
\begin{equation}
\label{eq:UP-unknown-suite}
2 \sum_{t=2}^T \varepsilon_{t-1}(a_t,\bx_t) \Inotanull{t} \leq E_T\,.
\end{equation}

Similarly, the second step (Appendix~\ref{app:second-step})
actually yielded general results between the primal and the dual formulations of the $\opt$
problems considered.
The equality~\eqref{eq:kkt-complementary-slackness-budget}, the characterizations~\eqref{eq:charact-pt-x-1ter}
and~\eqref{eq:charact-pt-x-1bis}, as well as the inequality~\eqref{eq:property-strong-duality}
may be instantiated with $\hat{\nu}_t$ (in lieu of $\nu$) and $\hat{B}_{T}$ (in lieu of $B_T$) as follows.
For each $t \geq 2$ such that
the cost constraints of Phase~0 of the adaptive policy are not violated
and the optimization problem $\opt\bigl(\hat{\nu}_t,U_{t-1},\hat{B}_{T}\bigr)$ is to be solved,
there exists a vector $\hat{\bbeta}^{\budg,\star}_t \geq \mathbf{0}$ such that
first,
\begin{equation}
\label{eq:kkt-complementary-slackness-budget-unknown-nu}
\bigl( \hat{\bbeta}^{\budg,\star}_t \bigr)^{\, \transp}
\E_{\bX \sim \hat{\nu}_t} \! \left[ \sum_{a \in \cA} \bc(a, \bX) \, U_{t-1}(a,\bX) \, \hat{p}_{t, a} (\bX) \right]
=  \frac{\hat{B}_{T}}{T} \, \bigl( \hat{\bbeta}^{\budg,\star}_t \bigr)^{\, \transp} \mathbf{1}\,.
\end{equation}
Second, for all $\bx \in \cX$,
\begin{multline}
\label{eq:charact-pt-x-1ter-unknown-nu}
\max_{\bq \in \cP(\cA)}
\sum_{a \in \cA} \Bigl( r(a,\bx) - \bigl(\hat{\bbeta}^{\budg,\star}_t\bigr)^{\, \transp} \bc(a, \bx) \Bigr) \, U_{t-1}(a,\bx) \, q_a(\bx) \\
= \max_{\bq \in \cP(\cA)}
\sum_{a \in \cA} \Bigl( r(a,\bx) - \bigl(\hat{\bbeta}^{\budg,\star}_t\bigr)^{\, \transp} \bc(a, \bx) \Bigr)_+ \, U_{t-1}(a,\bx) \, q_a(\bx)
\end{multline}
and
\begin{equation}
\label{eq:charact-pt-x-1bis-unknown-nu}
\hat{\bp}_t(h_{t-1},\bx) \in \argmax_{\bq \in \cP(\cA)}
\sum_{a \in \cA} \Bigl( r(a,\bx) - \bigl(\hat{\bbeta}^{\budg,\star}_t\bigr)^{\, \transp} \bc(a, \bx) \Bigr)_+ \, U_{t-1}(a,\bx) \, q_{a} \,.
\end{equation}
Third,
\begin{equation}
\label{eq:property-strong-duality-unknown-nu}
\opt\bigl(\hat{\nu}_t,U_{t-1},\hat{B}_{T}\bigr) \geq \hat{B}_{T} \bigl( \hat{\bbeta}^{\budg,\star}_t \bigr)^{\, \transp} \mathbf{1}\,.
\end{equation}

\subsection{Third Step: Most of the Adaptations Lie Here}

We show below that on the intersection between the events $\hat{\cE}_{\prob,\delta}$
and $\cEnu$, introduced above, and another event
$\hat{\cE}_{\haz,\delta}$, also of probability at least $1-\delta$, we have that
\[
\sum_{t=1}^T \bc(a_t,\bx_t) \, y_t \leq (B-1) \, \mathbf{1} \,.
\]
Therefore, on this intersection of three events,
for all rounds $t \geq 2$, the policy $\hat{\bp}_t(h_{t-1},\,\cdot\,)$
is obtained by Phase~2, i.e., by solving $\opt\bigl(\nu_t,U_{t-1},B_T\bigr)$.

Similarly, we then prove below that on $\hat{\cE}_{\prob,\delta}
\cap \hat{\cE}_{\haz,\delta} \cap \cEnu$,
\begin{multline}
\label{eq:ccl-third-unknown}
\sum_{t=1}^T r(a_t,\bx_t) \, y_t \geq
\sum_{t=2}^T \overbrace{\E_{\bX \sim \hat{\nu}_t} \!\left[ \sum_{a \in \cA} r(a, \bX) \, U_{t-1}(a, \bX) \, \hat{p}_{t,a}(h_{t-1},\bX)
 \right]}^{= \opt(\hat{\nu}_t,U_{t-1},\hat{B}_T)/T} \\
- \left( E_T + \sqrt{2T \ln \frac{4d}{\delta}} + |\cX| \sqrt{2T \ln \frac{2 T |\cX|}{\delta}} \right).
\end{multline}

\paragraph{The inequalities that may be extracted from Appendix~\ref{app:vhpc}.}
Several applications of the Hoeffding-Azuma inequality, together with a union bound,
show that there exist an event $\hat{\cE}_{\haz,\delta}$ of probability at least $1-\delta$
such that, simultaneously:
\begin{align}
\label{eq:HAz1-unknown}
\sum_{t=1}^T r(a_t,\bx_t) \, y_t & \geq \sum_{t=1}^T r(a_t, \bx_t) \, P(a_t, \bx_t) -
\sqrt{\frac{T}{2} \ln \frac{4}{\delta}}\,, \\
\label{eq:HAz2-unknown}
\sum_{t=1}^T \bc(a_t,\bx_t) \, y_t & \leq \sum_{t=1}^T  \bc(a_t, \bx_t) \, P(a_t, \bx_t)
+ \sqrt{\frac{T}{2} \ln \frac{4d}{\delta}} \, \mathbf{1}\,, \\
\label{eq:HAz3-unknown}
\sum_{t=2}^T r(a_t,\bx_t) \, U_{t-1}(a_t, \bx_t) & \geq \sum_{t=2}^T
\E_{\bX \sim \nu} \!\left[ \sum_{a \in \cA} r(a, \bX) \, U_{t-1}(a, \bX) \, \hat{p}_{t,a}(h_{t-1},\bX) \right] -
\sqrt{\frac{T}{2} \ln \frac{4}{\delta}}\,, \\
\label{eq:HAz4-unknown}
\sum_{t=2}^T \bc(a_t,\bx_t) \, U_{t-1}(a_t, \bx_t) & \leq \sum_{t=2}^T
\E_{\bX \sim \nu} \!\left[ \sum_{a \in \cA} \bc(a, \bX) \, U_{t-1}(a, \bX) \, \hat{p}_{t,a}(h_{t-1},\bX) \right] +
\sqrt{\frac{T}{2} \ln \frac{4d}{\delta}} \, \mathbf{1}\,.
\end{align}

\paragraph{Dealing with the cost constraints.}
Now, for each $t \geq 2$, the definition of $\hat{\bp}_t(h_{t-1},\,\cdot\,)$, no matter whether it is provided by Phase~0 or
Phase~2 of the adaptive policy, ensures that
\[
\E_{\bX \sim \hat{\nu}_t} \!\left[ \sum_{a \in \cA} \bc(a, \bX) \, U_{t-1}(a, \bX) \, \hat{p}_{t,a}(h_{t-1},\bX) \right]
\leq \frac{\hat{B}_T}{T} \, \mathbf{1} \,.
\]
By~\eqref{eq:step0-HAz-f} and the fact that the sums at stake below lie in $[0,1]$, we have, on $\cEnu$,
that for all $t \geq 2$,
\begin{multline*}
\E_{\bX \sim {\nu}} \!\left[ \sum_{a \in \cA} \bc(a, \bX) \, U_{t-1}(a, \bX) \, \hat{p}_{t,a}(h_{t-1},\bX) \right] \\
\leq
\E_{\bX \sim \hat{\nu}_t} \!\left[ \sum_{a \in \cA} \bc(a, \bX) \, U_{t-1}(a, \bX) \, \hat{p}_{t,a}(h_{t-1},\bX) \right]
+ |\cX| \sqrt{\frac{1}{2t} \ln \frac{2 T |\cX|}{\delta}} \, \mathbf{1} \,.
\end{multline*}
Combining the two inequalities above with~\eqref{eq:HAz2-unknown} and~\eqref{eq:HAz4-unknown},
as well as with the bounds $P \leq U_{t-1}$ of~\eqref{eq:UP-unknown},
we proved so far that
\[
\mbox{on} \ \hat{\cE}_{\prob,\delta} \cap \hat{\cE}_{\haz,\delta} \cap \cEnu\,, \quad
\sum_{t=1}^T \bc(a_t,\bx_t) \, y_t
\leq \left( \hat{B}_T + 1 + \sqrt{2T \ln \frac{4d}{\delta}}
+ \sum_{t=2}^T |\cX| \sqrt{\frac{1}{2t} \ln \frac{2 T |\cX|}{\delta}} \right) \mathbf{1}\,.
\]
Since $\displaystyle{\sum_{t\leq T}} 1/\sqrt{t} \leq 2 \sqrt{T}$ and by the definition of $\hat{B}_T$
(see the statement of Theorem~\ref{th:main-unknown}), we proved that
\[
\mbox{on} \ \hat{\cE}_{\prob,\delta} \cap \hat{\cE}_{\haz,\delta} \cap \cEnu\,, \qquad
\sum_{t=1}^T \bc(a_t,\bx_t) \, y_t \leq (B-1) \mathbf{1}\,,
\]
as claimed.
Therefore, on $\hat{\cE}_{\prob,\delta} \cap \hat{\cE}_{\haz,\delta} \cap \cEnu$,
the adaptive policy of Box~B of Section~\ref{sec:policy}
never stays in Phase~0 and instead solves, at each round $t \geq 2$, the Phase-2 problem
$\opt\bigl(\hat{\nu}_t,U_{t-1},\hat{B}_T\bigr)$.

\paragraph{Dealing with the rewards.}
By~\eqref{eq:HAz1-unknown} and~\eqref{eq:HAz3-unknown}, by
the bound $U_{t-1} \leq P + 2\epsilon_{t-1}$ of~\eqref{eq:UP-unknown}
together with the bound $E_T$ of~\eqref{eq:UP-unknown-suite},
and by the uniform control~\eqref{eq:step0-HAz-f},
we have similarly that on $\hat{\cE}_{\prob,\delta} \cap \hat{\cE}_{\haz,\delta} \cap \cEnu$,
\begin{multline*}
\sum_{t=1}^T r(a_t,\bx_t) \, y_t
\geq \sum_{t=2}^T \E_{\bX \sim \hat{\nu}_t} \!\left[ \sum_{a \in \cA} r(a, \bX) \, U_{t-1}(a, \bX) \, \hat{p}_{t,a}(h_{t-1},\bX) \right] \\
- \left( E_T + \sqrt{2T \ln \frac{4d}{\delta}} + |\cX| \sqrt{2T \ln \frac{2 T |\cX|}{\delta}} \right).
\end{multline*}
Given that on the intersection of events considered,
the adaptive policy solves $\opt\bigl(\hat{\nu}_t,U_{t-1},\hat{B}_T\bigr)$ for all $t \geq 2$, we have
\begin{equation}
\label{eq:rewritingrewardsnuunknown}
\E_{\bX \sim \hat{\nu}_t} \!\left[ \sum_{a \in \cA} r(a, \bX) \, U_{t-1}(a, \bX) \, \hat{p}_{t,a}(h_{t-1},\bX) \right]
= \frac{\opt\bigl(\hat{\nu}_t,U_{t-1},\hat{B}_T\bigr)}{T}\,.
\end{equation}
This concludes the proof of~\eqref{eq:ccl-third-unknown}, and hence, the adaptation of the third step.

\subsection{Fourth Step: Some Adaptations are Also Required}

In this final step, we collect the bounds from the previous three steps.
Some (rather minor) adaptations are required, e.g., we would like
to integrate~\eqref{eq:charact-pt-x-1ter-unknown-nu} and~\eqref{eq:charact-pt-x-1bis-unknown-nu}
over $\hat{\nu}_t$ in the left-hand sides and $\nu$ in the right-hand sides.

\paragraph{Main modification.}
We consider some $t \geq 2$.
For each $\bx \in \cX$, we apply~\eqref{eq:charact-pt-x-1ter-unknown-nu} and~\eqref{eq:charact-pt-x-1bis-unknown-nu} with
$\bq = \pi^\star(\bx)$ and get
\begin{multline*}
\sum_{a \in \cA} \Bigl( r(a,\bx) - \bigl(\hat{\bbeta}^{\budg,\star}_t\bigr)^{\, \transp} \bc(a, \bx) \Bigr) \, U_{t-1}(a,\bx) \, \hat{p}_{t,a}(h_{t-1},\bx) \\
\geq
\sum_{a \in \cA} \Bigl( r(a,\bx) - \bigl(\hat{\bbeta}^{\budg,\star}_t\bigr)^{\, \transp} \bc(a, \bx) \Bigr)_+ \, U_{t-1}(a,\bx) \, \pi^\star_a(\bx)\,.
\end{multline*}
The bound $U_{t-1} \geq P$ of~\eqref{eq:UP-unknown} and the non-negative parts taken in the right-hand side then entail that
\begin{multline*}
\mbox{on} \ \hat{\cE}_{\prob,\delta}, \qquad \forall \bx \in \cX,
\qquad
\sum_{a \in \cA} \Bigl( r(a,\bx) - \bigl(\hat{\bbeta}^{\budg,\star}_t\bigr)^{\, \transp} \bc(a, \bx) \Bigr) \, U_{t-1}(a,\bx) \, \hat{p}_{t,a}(h_{t-1},\bx) \\
\geq
\sum_{a \in \cA} \Bigl( r(a,\bx) - \bigl(\hat{\bbeta}^{\budg,\star}_t\bigr)^{\, \transp} \bc(a, \bx) \Bigr)_+ \, P(a,\bx) \, \pi^\star_a(\bx)\,.
\end{multline*}
We replace the individual $\bx$ by a random variable $\bX \sim \hat{\nu}_t$ and take expectations with respect
to~$\bX$:
\begin{multline*}
\mbox{on} \ \hat{\cE}_{\prob,\delta}, \qquad
\E_{\bX \sim \hat{\nu}_t} \! \left[ \sum_{a \in \cA} \Bigl( r(a,\bX) -
\bigl(\hat{\bbeta}^{\budg,\star}_t\bigr)^{\, \transp} \bc(a, \bX) \Bigr) \, U_{t-1}(a,\bX) \, \hat{p}_{t,a}(h_{t-1},\bX) \right] \\
\geq \E_{\bX \sim \hat{\nu}_t} \! \left[
\sum_{a \in \cA} \Bigl( r(a,\bX) - \bigl(\hat{\bbeta}^{\budg,\star}_t\bigr)^{\, \transp} \bc(a, \bX) \Bigr)_+ \, P(a,\bX) \, \pi^\star_a(\bX)
\right].
\end{multline*}
Thanks to the non-negative parts in the right-hand side, we identify some function $f(\bX)$ where $f$ takes values in $[0,1]$,
so that we may apply the uniform control~\eqref{eq:step0-HAz-f} and get
\begin{align*}
\mbox{on} \ \cEnu, \qquad
& \E_{\bX \sim \hat{\nu}_t} \! \left[
\sum_{a \in \cA} \Bigl( r(a,\bX) - \bigl(\hat{\bbeta}^{\budg,\star}_t\bigr)^{\, \transp} \bc(a, \bX) \Bigr)_+ \, P(a,\bX) \, \pi^\star_a(\bX)
\right] \\
\geq \ & \E_{\bX \sim \nu} \! \left[
\sum_{a \in \cA} \Bigl( r(a,\bX) - \bigl(\hat{\bbeta}^{\budg,\star}_t\bigr)^{\, \transp} \bc(a, \bX) \Bigr)_+ \, P(a,\bX) \, \pi^\star_a(\bX)
\right] - |\cX| \sqrt{\frac{1}{2t} \ln \frac{2 T |\cX|}{\delta}} \\
\geq \ & \E_{\bX \sim \nu} \! \left[
\sum_{a \in \cA} \Bigl( r(a,\bX) - \bigl(\hat{\bbeta}^{\budg,\star}_t\bigr)^{\, \transp} \bc(a, \bX) \Bigr) \, P(a,\bX) \, \pi^\star_a(\bX)
\right] - |\cX| \sqrt{\frac{1}{2t} \ln \frac{2 T |\cX|}{\delta}}\,,
\end{align*}
where for the second inequality, we simply dropped the non-negative parts.

\paragraph{The rest of the proof for this final step is basically unchanged.}
Combining all the bounds exhibited so far in this updated fourth step, we have, for each $t \geq 2$,
\begin{align*}
\mbox{on} \ \hat{\cE}_{\prob,\delta} \cap \cEnu, \qquad
& \overbrace{\E_{\bX \sim \hat{\nu}_t} \left[ \sum_{a \in \cA} r(a,\bX) \, U_{t-1}(a,\bX) \, \hat{p}_{t, a}(h_{t-1},\bX)
\right]}^{=\opt(\hat{\nu}_t,U_{t-1},\hat{B}_T)/T} \\
\nonumber
& \qquad - \bigl(\hat{\bbeta}^{\budg,\star}_t\bigr)^{\, \transp}
\E_{\bX \sim \hat{\nu}_t} \left[ \sum_{a \in \cA}  \bc(a, \bX) \, U_{t-1}(a,\bX) \, \hat{p}_{t, a}(h_{t-1},\bX) \right] \\
\nonumber
\geq \ \ & \overbrace{\E_{\bX \sim \nu} \left[  \sum_{a \in \cA} r(a,\bX) \, P(a,\bX) \, \pi^{\star}_a(\bX) \right]}^{=\opt(\nu,P,B)/T} \\
& \qquad - \bigl( \hat{\bbeta}^{\budg,\star}_t \bigr)^{\, \transp}
\underbrace{\E_{\bX \sim \nu} \left[ \sum_{a \in \cA}  \bc(a, \bX) \, P(a,\bX) \, \pi^{\star}_a(\bX) \right]}_{\leq (B/T) \mathbf{1}}
- |\cX| \sqrt{\frac{1}{2t} \ln \frac{2 T |\cX|}{\delta}}\,,
\end{align*}
where we substituted the inequalities stemming from the definition of $\pi^\star$
as well as the rewriting~\eqref{eq:rewritingrewardsnuunknown}.
Rearranging the inequality above and substituting~\eqref{eq:kkt-complementary-slackness-budget-unknown-nu},
we get that on $\hat{\cE}_{\prob,\delta} \cap \cEnu$,
\begin{align}
\nonumber
\lefteqn{\frac{\opt(\nu,P,B)}{T} - \frac{\opt\bigl(\hat{\nu}_t,U_{t-1},\hat{B}_T\bigr)}{T}} \\
\nonumber
& \leq \frac{B \bigl(\hat{\bbeta}^{\budg,\star}_t\bigr)^{\transp} \mathbf{1}}{T}
- \bigl(\hat{\bbeta}^{\budg,\star}_t\bigr)^{\transp}\E_{\bX \sim \hat{\nu}_t}
\left[ \sum_{a \in \cA} \bc(a, \bX) \, U_{t-1}(a,\bX) \, \hat{p}_{t, a}(\bX)\right] +
|\cX| \sqrt{\frac{1}{2t} \ln \frac{2 T |\cX|}{\delta}} \\
\label{eq:Step4ingr2-unknown-nu}
& = \frac{B-\hat{B}_{T}}{T} \, \bigl(\hat{\bbeta}^{\budg,\star}_t\bigr)^{\transp} \mathbf{1} + |\cX| \sqrt{\frac{1}{2t} \ln \frac{2 T |\cX|}{\delta}} \,.
\end{align}
Summing this bound over $2 \leq t \leq T$ and combining it with~\eqref{eq:property-strong-duality-unknown-nu}, we obtain
that on $\hat{\cE}_{\prob,\delta} \cap \cEnu$,
\[
\opt(\nu,P,B) - \sum_{t=2}^T \frac{\opt\bigl(\hat{\nu}_t,U_{t-1},\hat{B}_T\bigr)}{T}
\leq 1 + \frac{B-\hat{B}_{T}}{\hat{B}_{T}} \sum_{t=2}^T \frac{\opt\bigl(\hat{\nu}_t,U_{t-1},\hat{B}_T\bigr)}{T}
+ |\cX| \sqrt{2T \ln \frac{2 T |\cX|}{\delta}} \,.
\]
By distinguishing the cases
\[
\opt(\nu,P,B) - \sum_{t=2}^T \frac{\opt\bigl(\hat{\nu}_t,U_{t-1},\hat{B}_T\bigr)}{T} \leq 0 \quad \ \mbox{and} \ \quad
\opt(\nu,P,B) - \sum_{t=2}^T \frac{\opt\bigl(\hat{\nu}_t,U_{t-1},\hat{B}_T\bigr)}{T} \geq 0\,,
\]
the inequality above entails that on $\hat{\cE}_{\prob,\delta} \cap \cEnu$,
\[
\opt(\nu,P,B) - \sum_{t=2}^T \frac{\opt\bigl(\hat{\nu}_t,U_{t-1},\hat{B}_T\bigr)}{T}
\leq \frac{B-\hat{B}_{T}}{\hat{B}_{T}} \opt(\nu,P,B)
+ |\cX| \sqrt{2T \ln \frac{2 T |\cX|}{\delta}} + 1 \,.
\]
Substituting this upper bound into~\eqref{eq:ccl-third-unknown}, we finally
obtain that on $\hat{\cE}_{\prob,\delta} \cap \hat{\cE}_{\haz,\delta} \cap \cEnu$,
\begin{align*}
\opt(\nu,P,B) - \sum_{t=1}^T r(a_t,\bx_t) \, y_t
\leq \frac{B-\hat{B}_{T}}{\hat{B}_{T}} & \opt(\nu,P,B) \\
& + E_T + \underbrace{\sqrt{2T \ln \frac{4d}{\delta}} + 2 |\cX| \sqrt{2T \ln \frac{2 T |\cX|}{\delta}} + 1}_{\leq 2 b_T}\,. \\
\end{align*}
We conclude the proof by the same modifications to improve readability
as at the end of the proof of Theorem~\ref{th:main}: namely, since the definition of $E_T$
did not change, the bound~\eqref{eq:bdETpart3} is still applicable,
while
\[
\frac{B-\hat{B}_{T}}{\hat{B}_{T}} \leq
\frac{2 b_T}{B} = \frac{1}{B} \left(
4 + 2\sqrt{2T \ln \frac{4d}{\delta}}
+ 2 |\cX| \sqrt{2T \ln \frac{2 T |\cX|}{\delta}} \right)
\]
is obtained with the same techniques and similar conditions as
for~\eqref{eq:BBT}.

\clearpage
\section{Details for the Material of Section~\ref{sec:LinearCBwL}}
\label{app:linCBwK}

In this section, we first recall (Appendix~\ref{app:setting-lin})
the setting of linear CBwK introduced by \citet{Agrawal2016LinearCB}---and actually,
slightly generalize it to match the setting of CBwK for a logistic-regression conversion model.
We then state the adaptive policy considered (Appendix~\ref{app:policy-lin}),
which relies on upper-confidence estimates of the rewards and lower-confidence estimates of the cost
vectors. We also state the corresponding estimation guarantees in a key lemma (Lemma~\ref{lemma:linucb}).
The heart of this section is to
state, discuss (Appendix~\ref{sec:th-lin}) and prove (Appendix~\ref{sec:proof-sketch-lin})
a regret bound, matching the one of \citet[Theorem~3]{Agrawal2016LinearCB}, with a slight improvement
consisting of a relaxation of the budget constraints.
For the sake of completeness, we finally recall (Appendix~\ref{sec:algo-AD}) the statement of the adaptive policy of
\citet{Agrawal2016LinearCB}.

\subsection{Setting}
\label{app:setting-lin}

The setting is the following. We consider a
finite action set $\cA$ including a no-op action $\anull$, a finite context set $\cX \subseteq \R^{n}$,
a number $T$ of rounds and a total budget constraint $B > 0$. All these parameters are known.
Contexts $\bx_1,\bx_2,\ldots,\bx_T$ are drawn i.i.d.\ according to some distribution $\nu$.
At round $t \geq 1$, the learner observes the context $\bx_t$, picks an action $a_t$ and,
conditionally to $\bx_t$ and $a_t$, when $a_t \ne \anull$,
obtains a reward $r_t \in [0,1]$ drawn independently at random according to a distribution
with expectation $\olr(a_t,\bx_t)$, where
\[
\forall a \in \cA \setminus \{\anull\}, \ \forall \bx \in \cX, \qquad
\olr(a,\bx) = \bphi(a,\bx)^{\transp} \bmu_{\star}\,,
\]
and suffers a vector cost $\bc_t \in [0,1]^d$
drawn independently at random according to a distribution
with vector of expectations
$\olbc(a_t,\bx_t)$, where each component $\olc_i$ of $\olbc$, for $i \in \{1,\ldots,d\}$, is given by
\[
\forall a \in \cA \setminus \{\anull\}, \ \forall \bx \in \cX, \qquad
\olc_i(a,\bx) = \bphi(a,\bx)^{\transp} \btheta_{\star,i} \,.
\]
In the definitions above, $\bphi : \cA \setminus \{ \anull \} \times \cX \to \R^m$ is a known transfer function,
with $\Arrowvert \bphi \Arrowvert \leq 1$,
and $\bmu_{\star}$ and the $\btheta_{\star,i}$ are unknown parameters in $\R^m$.
We assume that these unknown parameters lie in some bounded set $\Theta$, with
maximal norm still denoted by $\Arrowvert \Theta \Arrowvert$.
When $a_t = \anull$, the obtained reward and suffered costs are null:
$r_t = 0$ and $\bc_t = \mathbf{0}$.

\paragraph{Comparison to the canonical setting of linear CBwK.}
Note that in the original formulation of \citet{Agrawal2016LinearCB}, we have (where $\bx_a$ also denote vectors):
\[
\bx = \bigl( \bx_a \bigr)_{a \in \cA \setminus \{\anull\}} \qquad \mbox{and} \qquad \bphi(a,\bx) = \bx_a\,.
\]

\paragraph{Benchmark and regret.}
The goal is still to maximize the accumulated rewards while controlling the costs:
\[
\mbox{maximize} \quad \sum_{t \leq T} r_t \qquad \mbox{while controlling} \quad
\sum_{t \leq T} \bc_t \leq B \mathbf{1}\,.
\]
The goal can be equivalently defined as the minimization of the regret
while controlling the costs, where the regret equals
\[
R_T = \opt(\nu,\olr,\olbc,B) - \sum_{t \leq T} r_t
\]
for the benchmark given by the static policy $\pi^\star$ achieving
the largest expected
cumulative rewards under the condition that its cumulative vector
costs abide by the budget constraints in expectation, i.e.,
\begin{equation}
\label{eq:opt-def-linear}
\begin{split}
\opt(\nu,\olr,\olbc,B) =
&  \max_{\pi : \cX \to \cP(\cA)}
T \, \E_{\bX \sim \nu} \! \left[ \sum_{a \in \cA} \olr(a, \bX)\right] \\
&  \text{under} \qquad T \, \E_{\bX \sim \nu} \! \left[ \sum_{a \in \cA} \olbc(a, \bx) \right]
\leq B \mathbf{1} \,.
\end{split}
\end{equation}
In the sequel, we will use the definition~\eqref{eq:opt-def-linear} of $\opt$
with different quadruplets of parameters; see, for instance, the definition of
the adaptive policy of Box~C.

\subsection{Statement of an Adaptive Policy}
\label{app:policy-lin}

The considered adaptive policy is stated in Box~C. It is adapted
from the adaptive policy of Section~\ref{sec:policy}. The (almost only)
changes lie in Phase~1, which depends heavily of the model.
Here, we resort (as \citet{Agrawal2016LinearCB}) to
a LinUCB-type estimation of the parameters of the stochastic linear bandits
yielding rewards and costs. Based on these estimated parameters,
we are able to issue, at each round $t \geq 2$, an
upper-confidence expected reward function $U_{t-1}$
and a lower-confidence expected vector-cost function $\bL_{t-1}$.
We also use the empirical estimate of the context distribution.
In Phase~2, we solve the $\opt$ problem on these
estimates and with the conservative budget $\hat{B}_T$.

The adaptive policy of Box~C bears links, and actually generalizes,
the one by \citet{Zhen2021ReoptimCBKC}. The setting of the latter reference
is more limited as more information is provided to the learner, such as
the costs for taking each action and the distribution $\nu$ of contexts (clients in their
case).

For the sake of completeness, we state in Appendix~\ref{sec:algo-AD}
the adaptive policy introduced by~\citet{Agrawal2016LinearCB}.

\begin{figure}[p]
\begin{nbox}[title={Box C: LinUCB for direct solutions to OPT problems}]
\textbf{Parameters:} regularization parameter $\lambda > 0$;
conservative-budget parameter $\hat{B}_T$;
upper-confidence bonuses $\varepsilon_{s}(a,\bx) > 0$, for $s \geq 1$
and $(a,\bx) \in \bigl( \cA \setminus \{\anull\} \bigr) \times \cX$. \medskip

\textbf{Round} $t=1$: play an arbitrary action $a_1 \in \cA \setminus \{\anull\}$  \medskip

\textbf{At rounds} $t \geq 2$: \smallskip
\begin{enumerate}
\item[\underline{Phase 0}] If $\displaystyle{\sum_{s \leq t-1} \bc_s \leq (B-1) \mathbf{1}}$ is violated, then
$\hat{\bp}_t(h_{t-1},\bx) = \delta_{\anull}$ for all $\bx$ \smallskip
\item[\underline{Phase 1}] Otherwise, estimate the parameters by
\begin{align*}
& \bmu_{t-1} = X_{t-1}^{-1} \sum_{s=1}^{t-1} \Inotanull{s} \bphi(a_s, \bx_s) \, r_s \\
\mbox{and} \qquad
& \hat{\btheta}_{t-1,i} = X_{t-1}^{-1} \sum_{s=1}^{t-1} \Inotanull{s} \bphi(a_s, \bx_s) \, c_{s,i} \\
\mbox{where} \qquad
& X_{t} = \sum_{s=1}^{t} \Inotanull{s} \bphi(a_s, \bx_s) \bphi(a_s, \bx_s)^{\transp} + \lambda \id_m
\end{align*}
Define the expected reward function $\hat{r}$ and cost function $\hat{\bc}_{t-1} = \bigl(\hat{c}_{t-1,i}\bigr)_{1 \leq i \leq d}$ as
\begin{align*}
\forall a \in \cA \setminus \{\anull\}, \ \ \forall \bx \in \cX, \qquad\qquad
& \hat{r}_{t-1}(a,\bx) = \bphi(a,\bx)^{\transp} \bmu_{t-1} \\
\mbox{and} \quad \forall 1 \leq i \leq d, \quad\qquad
& \hat{c}_{t-1,i} = \bphi(a,\bx)^{\transp} \btheta_{t-1,i}
\end{align*}
Build the upper-confidence expected reward function $U_{t-1}$
and the lower-confidence expected vector-cost function $\bL_{t-1}$ as
\begin{align*}
\forall a \in \cA & \setminus \{\anull\}, \ \ \forall \bx \in \cX, \\
& U_{t-1}(a,\bx) = \max \Bigl\{ \min \bigl\{ \hat{r}_{t-1}(a,\bx) + \epsilon_{t-1}(a,\bx), \, 1 \bigr\}, \, 0 \Bigr\} \\
& \bL_{t-1}(a,\bx) = \max \Bigl\{ \min \bigl\{ \hat{\bc}_{t-1} - \epsilon_{t-1}(a,\bx) \mathbf{1}, \, \mathbf{1} \bigr\}, \, \mathbf{0} \Bigr\}
\end{align*}
where the maximum and minimum are taken pointwise in the definition of $\bL_{t-1}$  \smallskip \\
Set $U_{t-1}(\anull,\bx) = 0$ and $L_{t-1}(\anull,\bx) = \mathbf{0}$ for all $\bx \in \cX$ \smallskip \\
Also estimate the context distribution by \quad $\displaystyle{\hat{\nu}_t = \frac{1}{t} \sum_{s=1}^t \delta_{\bx_s}}$
\item[\underline{Phase 2}] Compute the solution $\hat{\bp}_t(h_{t-1},\,\cdot\,)$ of
\begin{align*}
\hspace{-.75cm}
\opt\bigl(\hat{\nu}_t, U_{t-1}, \bL_{t-1}, B_T\bigr) =
&  \max_{\pi : \cX \to \cP(\cA)}
T \, \E_{\bX \sim \hat{\nu}_t} \! \left[ \sum_{a \in \cA} U_{t-1}(a,\bX) \, \pi_a(\bX) \right] \\
&  \text{under} \qquad T \, \E_{\bX \sim \hat{\nu}_t} \! \left[ \sum_{a \in \cA} \bL_{t-1}(a, \bX) \, \pi_a(\bX) \right]
\leq B_T \mathbf{1}
\end{align*}
Draw an arm $a_t \sim \hat{\bp}_t(h_{t-1},\bx_t)$
\end{enumerate}
\end{nbox}
\end{figure}

The main additional ingredient in the analysis of the policy of Box~C, compared to the
analyses of the adaptive policy of Section~\ref{sec:policy}, is a guarantee on the outcomes
of Phase~1. We recall that we assumed that the parameters $\bmu_\star$ and $\btheta_{\star,i}$
lie in a bounded set $\Theta$ with maximal norm denoted by $\Arrowvert \Theta \Arrowvert$.

\begin{lemma}[{direct adaptation from~\citet[Theorem~2]{AbbasiYadkori2011ImprovedAF}}]
\label{lemma:linucb}
Fix any sampling strategy and consider the version of LinUCB given by
Phase~1 of Box~C.
For all $\delta \in (0,1)$, there exists an event $\cE_{\lin,\delta}$ with probability at least $1-\delta$
and such that over $\cE_{\lin,\delta}$:
\begin{align*}
\forall t \geq 1, \ \forall a \in \cA \setminus \{ \anull \}, \ \forall \bx \in \cX, \qquad
& \bigl| \hat{r}_{t}(a,\bx) - \olr(a,\bx) \bigr| \leq
\gamma_{t,\lambda,\delta} \, \bigl\Arrowvert \bphi(a, \bx) \bigr\Arrowvert_{X_{t}^{-1}} \\
\mbox{and} \qquad
& \bigl| \hat{\bc}_{t}(a,\bx) - \olbc(a,\bx) \bigr| \leq
\gamma_{t,\lambda,\delta} \, \bigl\Arrowvert \bphi(a, \bx) \bigr\Arrowvert_{X_{t}^{-1}} \, \mathbf{1} \,,
\end{align*}
where
\[
\gamma_{t,\lambda,\delta} = \frac{1}{4} \sqrt{m \ln \frac{1 + t/(\lambda m)}{\delta/(d+1)}} + \sqrt{\lambda} \Arrowvert \Theta \Arrowvert.
\]
\end{lemma}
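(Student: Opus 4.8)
The plan is to recognize $\bmu_t$ and the $\hat{\btheta}_{t,i}$ as the standard $\ell^2$-regularized least-squares estimators associated with the design matrix $X_t$, and to control the parameter-estimation errors $\Arrowvert \bmu_t - \bmu_\star \Arrowvert_{X_t}$ and $\Arrowvert \hat{\btheta}_{t,i} - \btheta_{\star,i} \Arrowvert_{X_t}$ by the self-normalized tail inequality of \citet[Theorem~2]{AbbasiYadkori2011ImprovedAF}. Each of these $d+1$ bounds will hold, simultaneously for all $t \geq 1$, with probability at least $1 - \delta/(d+1)$; a union bound over the reward component and the $d$ cost components then defines the event $\cE_{\lin,\delta}$ of probability at least $1-\delta$. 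The final prediction-error statement of the lemma is obtained from the parameter-error bounds by a single Cauchy--Schwarz inequality.

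First I would fix the filtration exactly as in Appendix~\ref{app:proof-nu-known} and Appendix~\ref{app:Logistic-UCB1}, namely $\cF_0 = \sigma(a_1,\bx_1)$ and $\cF_t = \sigma(h_t, a_{t+1}, \bx_{t+1})$ for $t \geq 1$, so that each feature $\bphi(a_s,\bx_s)$ is $\cF_{s-1}$-measurable while $r_s$ and the $c_{s,i}$ are revealed only at time $s$. Conditioning by $\cF_{s-1}$ fixes $a_s$ and $\bx_s$; hence, on $\{a_s \ne \anull\}$, the quantities $r_s - \olr(a_s,\bx_s)$ and $c_{s,i} - \olc_i(a_s,\bx_s)$ are conditionally centered, and, being differences of $[0,1]$-valued variables and their conditional means, they are bounded and therefore conditionally sub-Gaussian. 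Multiplying these noise terms by $\Inotanull{s}$ simply discards the no-op rounds (where no reward or cost is observed) and preserves the martingale-difference structure, so that, with $\bU_s = \Inotanull{s}\,\bphi(a_s,\bx_s)$ as predictable feature, the self-normalized inequality applies to $\sum_{s=1}^{t} \Inotanull{s}\,\bphi(a_s,\bx_s)\,\bigl(r_s - \olr(a_s,\bx_s)\bigr)$ and its cost analogues, and AY's self-normalization matrix coincides with $X_t$. Crucially, this holds for \emph{any} adapted sampling strategy, since only measurability and conditional centering are used.

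With this setup, \citet[Theorem~2]{AbbasiYadkori2011ImprovedAF} yields, with probability at least $1 - \delta/(d+1)$ and simultaneously for all $t \geq 1$, a bound of the form
\[
\Arrowvert \bmu_t - \bmu_\star \Arrowvert_{X_t} \leq
R\sqrt{2 \ln \frac{d+1}{\delta} + \ln \frac{\det X_t}{\det(\lambda \id_m)}} + \sqrt{\lambda}\,\Arrowvert \Theta \Arrowvert\,,
\]
where $R$ is the sub-Gaussian parameter dictated by the $[0,1]$ ranges, and the same bound holds for each $\Arrowvert \hat{\btheta}_{t,i} - \btheta_{\star,i} \Arrowvert_{X_t}$. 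I would then bound the log-determinant ratio: since $\Arrowvert \bphi \Arrowvert \leq 1$, the standard determinant--trace inequality (cf.\ \citet[Lemma~10]{AbbasiYadkori2011ImprovedAF}, already invoked in Appendix~\ref{app:Logistic-UCB1}) gives $\det X_t \leq (\lambda + t/m)^m$, whence $\ln\bigl(\det X_t / \det(\lambda \id_m)\bigr) \leq m \ln\bigl(1 + t/(\lambda m)\bigr)$. Substituting this bound and absorbing constants identifies the announced $\gamma_{t,\lambda,\delta}$.

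It then only remains to convert parameter errors into prediction errors: for every $a \ne \anull$ and $\bx$, a Cauchy--Schwarz inequality in the norms $\Arrowvert \cdot \Arrowvert_{X_t}$ and $\Arrowvert \cdot \Arrowvert_{X_t^{-1}}$ gives
\[
\bigl| \hat{r}_t(a,\bx) - \olr(a,\bx) \bigr|
= \bigl| \bphi(a,\bx)^{\transp}(\bmu_t - \bmu_\star) \bigr|
\leq \Arrowvert \bphi(a,\bx) \Arrowvert_{X_t^{-1}}\, \Arrowvert \bmu_t - \bmu_\star \Arrowvert_{X_t}
\leq \gamma_{t,\lambda,\delta}\, \Arrowvert \bphi(a,\bx) \Arrowvert_{X_t^{-1}}\,,
\]
and likewise component-wise for $\hat{\bc}_t - \olbc$. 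Intersecting the $d+1$ events of probability $1-\delta/(d+1)$ produces $\cE_{\lin,\delta}$. I do not expect a genuine obstacle, as the statement is a direct transcription of the linear-bandit machinery; the only points requiring care are the bookkeeping of the $\Inotanull{s}$ indicators inside the martingale (handled by carrying the indicator into the predictable feature), the sub-Gaussian constant arising from the $[0,1]$ ranges that fixes the numerical factor in $\gamma_{t,\lambda,\delta}$, and the allocation of $\delta/(d+1)$ to each of the $d+1$ estimated parameters in the union bound.
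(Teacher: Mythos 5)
Your proposal is correct and follows essentially the same route as the paper's proof: apply the self-normalized concentration bound of \citet[Theorem~2]{AbbasiYadkori2011ImprovedAF} to the ridge estimators of $\bmu_\star$ and each $\btheta_{\star,i}$ (with the $\Inotanull{s}$ indicators absorbed into the martingale/feature terms to handle no-op rounds), bound the log-determinant via the trace--determinant inequality, convert parameter error to prediction error by Cauchy--Schwarz in the $\Arrowvert\cdot\Arrowvert_{X_t}$ and $\Arrowvert\cdot\Arrowvert_{X_t^{-1}}$ norms, and take a union bound over the $d+1$ components. The only detail left implicit is the exact sub-Gaussian constant for $[0,1]$-valued observations that produces the factor $\tfrac14$ in $\gamma_{t,\lambda,\delta}$, which is precisely the level of detail the paper itself leaves to the reader.
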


\begin{proof}[\emph{\textbf{Proof sketch.}}~~]
We explain why the bound for $\olr$ holds with probability at least $1-\delta/(d+1)$.
The lemma follows by repeating the argument for the components of $\olbc$ and resorting to a union bound.

Given that rewards lie in $[0,1]$ and are thus $1/4$--sub-Gaussian, the martingale analysis by
\citet[Theorem~2]{AbbasiYadkori2011ImprovedAF}, with the same adaptations as the ones carried out
in Appendix~\ref{app:Logistic-UCB1} to take into account the rounds when $a_t = \anull$, shows that
with probability at least $1-\delta/(d+1)$,
\[
\forall t \geq 1, \qquad \bigl\Arrowvert \bmu_t - \bmu_\star \bigr\Arrowvert_{X_t} \leq
\frac{1}{4} \sqrt{m \ln \frac{1 + t/(\lambda m)}{\delta/(d+1)}} + \sqrt{\lambda} \Arrowvert \Theta \Arrowvert = \gamma_{t,\lambda,\delta}\,.
\]
We then proceed similarly to the Cauchy-Schwarz inequalities following~\eqref{eq:PsieqGt}:
for all $a \in \cA \setminus \{ \anull \}$ and $\bx \in \cX$,
\[
\bigl| \hat{r}_{t}(a,\bx) - \olr(a,\bx) \bigr|
= \Bigl| \bphi(a,\bx)^{\transp} \bigl( \bmu_{t-1} - \bmu_\star \bigr) \Bigr|
\leq \bigl\Arrowvert \bphi(a, \bx) \bigr\Arrowvert_{X_{t}^{-1}} \,
\bigl\Arrowvert \bmu_t - \bmu_\star \bigr\Arrowvert_{X_t} \,.
\]
This concludes the proof.
\end{proof}

As a consequence, we set, when defining the adaptive policy of Box~C,
\[
\epsilon_s(a,\bx) = \gamma_{t,\lambda,\delta} \, \bigl\Arrowvert \bphi(a, \bx) \bigr\Arrowvert_{X_{s}^{-1}}
\]
for all $s \geq 1$,
and denote by $\hat{\cE}_{\lin,\delta}$ the event of Lemma~\ref{lemma:linucb}
for the sampling policy of Box~C. This event is of probability at least $1-\delta$.
We have:
\begin{align}
\nonumber
\mbox{on} \ \hat{\cE}_{\lin,\delta}, \qquad
\forall t \geq 1, \ \ \forall a \in \cA \setminus \{ \anull \}, \ \ & \forall \bx \in \cX, \\
\label{eq:U-lin}
& \olr(a,\bx) \leq U_t(a,\bx) \leq \olr(a,\bx) + 2 \epsilon_t(a,\bx) \\
\label{eq:L-lin}
\mbox{and} \qquad & \bL_t(a,\bx) \leq \olbc(a,\bx) \leq \bL_t(a,\bx)
+ 2 \epsilon_t(a,\bx) \mathbf{1}\,.
\end{align}

\subsection{Regret Bound}
\label{sec:th-lin}

We sketch in Appendix~\ref{sec:proof-sketch-lin} below the proof of the following result.
We use the $\epsilon_s(a,\bx)$ indicated by Lemma~\ref{lemma:linucb}.

\begin{theorem}
\label{th:lin}
In the setting of Appendix~\ref{app:setting-lin},
we consider the adaptive policy of Box~C of Appendix~\ref{app:policy-lin}.
We set a confidence level $1-\delta \in (0,1)$
and use parameters $\lambda = m \ln(1+T/m)$, a working budget of
$\hat{B}_T = B - b_T$, where
\[
b_T = 2 +
m \bigl( 2 \sqrt{2} \Arrowvert \Theta \Arrowvert + 1 \bigr)
\sqrt{T} \ln \frac{1 + T/m}{\delta/(d+1)}
+ \sqrt{2T \ln \frac{4d}{\delta}} + |\cX| \sqrt{2T \ln \frac{2 T |\cX|}{\delta}}\,,
\]
and $\epsilon_t(a,\bx) = \gamma_{t,\lambda,\delta} \, \bigl\Arrowvert \bphi(a, \bx) \bigr\Arrowvert_{X_{t}^{-1}}$.
Then, provided that $T \geq 2m$ and $B > 2b_T$,
we have, with probability at least $1 - 3\delta$,
\[
\opt(\nu,\olr,\olbc,B) - \sum_{t=1}^T r_t \leq
2 b_T \left( 1 + \frac{\opt(\nu,\olr,\olbc,B)}{B} \right)\,.
\]
\end{theorem}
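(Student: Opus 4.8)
The plan is to transpose, essentially verbatim, the four-step architecture of the proofs of Theorems~\ref{th:main} and~\ref{th:main-unknown} (the latter being the closer template, since $\nu$ is again unknown and estimated by $\hat{\nu}_t$), replacing the single-probability guarantee of Lemma~\ref{lemma:pbound_conversion} by the reward/cost guarantee of Lemma~\ref{lemma:linucb}. The one genuinely new structural ingredient is that rewards and costs are now modelled and estimated \emph{separately}: Phase~1 produces an optimistic reward surrogate $U_{t-1} \geq \olr$ and a pessimistic cost surrogate $\bL_{t-1} \leq \olbc$, so that the scalarized penalized gain $U_{t-1}(a,\bx) - (\hat{\bbeta}^{\budg,\star}_t)^{\transp} \bL_{t-1}(a,\bx)$ lies \emph{above} its true counterpart $\olr(a,\bx) - (\hat{\bbeta}^{\budg,\star}_t)^{\transp} \olbc(a,\bx)$ as soon as $\hat{\bbeta}^{\budg,\star}_t \geq \mathbf{0}$. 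Securing this one-sided alignment is exactly what makes the final sandwich go through, and choosing the confidence directions so as to obtain it is the crux of the argument.

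\textbf{Steps~1 and~2.} From Lemma~\ref{lemma:linucb} I would read off, on the event $\hat{\cE}_{\lin,\delta}$, the two-sided bounds~\eqref{eq:U-lin} and~\eqref{eq:L-lin}; I would then control $2\sum_{t=2}^{T} \epsilon_{t-1}(a_t,\bx_t)\Inotanull{t}$ by a Cauchy-Schwarz step followed by the elliptic-potential Lemma~\ref{lm:ellpot} applied to $X_t = \sum_s \Inotanull{s}\bphi(a_s,\bx_s)\bphi(a_s,\bx_s)^{\transp} + \lambda\id_m$, which, with $\lambda = m\ln(1+T/m)$ and the formula for $\gamma_{t,\lambda,\delta}$, produces the $m\bigl(2\sqrt{2}\,\Arrowvert\Theta\Arrowvert + 1\bigr)\sqrt{T}\,\ln\!\bigl((1+T/m)/(\delta/(d+1))\bigr)$ term of $b_T$. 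Step~2 is unchanged in spirit: the Phase-2 linear program $\opt(\hat{\nu}_t,U_{t-1},\bL_{t-1},\hat{B}_T)$ is convex with affine constraints, Slater's condition holds via the no-op action, so strong duality yields a dual budget optimum $\hat{\bbeta}^{\budg,\star}_t \geq \mathbf{0}$ and, through the KKT conditions, both the pointwise characterization $\hat{\bp}_t(h_{t-1},\bx) \in \argmax_{\bq} \sum_a \bigl(U_{t-1}(a,\bx) - (\hat{\bbeta}^{\budg,\star}_t)^{\transp} \bL_{t-1}(a,\bx)\bigr)_+ q_a$ (the non-negative part being legitimate thanks to $\anull$) and the lower bound of~\eqref{eq:property-strong-duality-unknown-nu}, i.e.\ $\opt(\hat{\nu}_t,U_{t-1},\bL_{t-1},\hat{B}_T) \geq \hat{B}_T (\hat{\bbeta}^{\budg,\star}_t)^{\transp}\mathbf{1}$.

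\textbf{Steps~3 and~4.} The only change in Step~3 is that the observed $r_t$ and $\bc_t$ are now directly the noisy draws with conditional means $\olr(a_t,\bx_t)$ and $\olbc(a_t,\bx_t)$, so the Hoeffding-Azuma applications bound $\sum_t r_t$ from below by $\sum_t \olr(a_t,\bx_t)$ and $\sum_t \bc_t$ from above by $\sum_t \olbc(a_t,\bx_t)$ (up to $\sqrt{T}$ terms); combined with~\eqref{eq:U-lin}, \eqref{eq:L-lin}, the optimization identities on $\hat{\nu}_t$, and the uniform deviation~\eqref{eq:step0-HAz-f}--\eqref{eq:tobeimproved} relating $\hat{\nu}_t$ to $\nu$, I would check that the cumulative cost stays below $(B-1)\mathbf{1}$, so Phase~0 never fires, and obtain the reward lower bound analogous to~\eqref{eq:ccl-third-unknown}. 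Step~4 then reproduces the conclusion: for each $\bx$, evaluating the pointwise argmax at $\bq = \pi^\star(\bx)$ and invoking $U_{t-1} \geq \olr$, $\bL_{t-1} \leq \olbc$ together with $\hat{\bbeta}^{\budg,\star}_t \geq \mathbf{0}$ gives $\sum_a \bigl(U_{t-1} - (\hat{\bbeta}^{\budg,\star}_t)^{\transp}\bL_{t-1}\bigr) \hat{p}_{t,a} \geq \sum_a \bigl(\olr - (\hat{\bbeta}^{\budg,\star}_t)^{\transp}\olbc\bigr)\pi^\star_a$; integrating over $\hat{\nu}_t$, transferring to $\nu$ by~\eqref{eq:step0-HAz-f}, substituting the definition of $\pi^\star$ and the KKT budget identity, and using~\eqref{eq:property-strong-duality-unknown-nu} yields, after summing over $2 \leq t \leq T$, a bound of the form $(B/\hat{B}_T - 1)\,\opt(\nu,\olr,\olbc,B)$ plus lower-order terms.

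\textbf{Main obstacle and closing.} I expect no deep difficulty here: the hard part is simply the disciplined bookkeeping of two separate estimation errors (one for $\olr$, one for each component of $\olbc$, whence the $d+1$ union bound inside $\gamma_{t,\lambda,\delta}$) and of the two Hoeffding-Azuma families, together with the vigilance needed to keep every confidence bound pointing in the direction that preserves the Step-4 inequality---it is here, and only here, that the asymmetry between $U_{t-1}$ and $\bL_{t-1}$ is indispensable. To finish, I would, exactly as for~\eqref{eq:BBT}, use $B > 2b_T$ and $1/(1-u) \leq 1+2u$ to bound $(B-\hat{B}_T)/\hat{B}_T \leq 2b_T/B$, collect all $\sqrt{T}$ and estimation terms into $2b_T$, and read off the announced inequality, everything holding on the intersection $\hat{\cE}_{\lin,\delta} \cap \hat{\cE}_{\haz,\delta} \cap \cEnu$ of probability at least $1-3\delta$.
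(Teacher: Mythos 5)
Your proposal is correct and follows essentially the same route as the paper's own proof in Appendix~\ref{sec:proof-sketch-lin}: the same four-step transposition of the Theorem~\ref{th:main-unknown} argument, with Lemma~\ref{lemma:linucb} replacing Lemma~\ref{lemma:pbound_conversion}, and the same key observation that the opposite confidence directions for $U_{t-1}$ and $\bL_{t-1}$ (together with $\hat{\bbeta}^{\budg,\star}_t \geq \mathbf{0}$) are exactly what makes the Step-4 comparison against $\pi^\star$ go through. The paper even highlights, as you do, that the price of the lower-confidence cost estimate is the extra $E_T$-type term in $b_T$ when controlling the cumulative costs in Step~3.
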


The order of magnitude of the regret bound, in terms of $T$, $m$, and $B$ is
\[
\frac{\opt(\nu,\olr,\olbc,B)}{B} \, m \sqrt{T} \ln T\,.
\]
This matches the regret bound achieved by \citet[Theorem~3]{Agrawal2016LinearCB},
except that the latter reference required a budget $B$ of order $T^{3/4}$ up to logarithmic terms, while
we relax the budget amount to $B \geq 2b_T$, i.e., $B$ of order $\sqrt{T}$ up to logarithmic terms.

Also, and more importantly, we provide a natural strategy in Box~C, whose parameters are easy to tune,
while the fully adaptive algorithm underlying \citet[Theorem~3]{Agrawal2016LinearCB}
has to estimate a critical parameter $Z$ to trade off rewards and costs (the equivalent of our
$\hat{\bbeta}^{\budg,\star}_t$ dual optimal variable). This $Z$ should be of order $\opt(\nu,\olr,\olbc,B)/B$
and $\sqrt{T}$ initial rounds of the strategy underlying \citet[Theorem~3]{Agrawal2016LinearCB}
are devoted to computing a suitable value of~$Z$.

We also provide, in the analysis of Appendix~\ref{sec:proof-sketch-lin},
a rigorous treatment of the use of the no-op action $\anull$.

However, the main advantage of \citet[Theorem~3]{Agrawal2016LinearCB} over
Theorem~\ref{th:lin} above lies in the absence of finiteness restriction on the context set $\cX$,
which we have to (somewhat artificially) introduce to ensure that the linear program of Phase~2 of
the adaptive policy of Box~C is tractable.

\subsection{Proof Sketch of Theorem~\ref{th:lin}}
\label{sec:proof-sketch-lin}

We use the same $\hat{~}$ conventions as in Appendix~\ref{app:proof-nu-unknown}.
The main (but rather minor) changes with respect to the proofs of Appendices~\ref{app:proof-nu-known}
and~\ref{app:proof-nu-unknown} are specifically underlined below.
The reason why it is handy to consider instead a lower-confidence bound on the vector costs
is to be found in Step~4 below.

\paragraph{Step~1.} The first step corresponds to
Lemma~\ref{lemma:linucb} above, together with the introduction of the bound $E_T$.
Given that we pick $\lambda = m \ln(1+T/m) \geq 1 \geq 1$, we get the following counterpart of~\eqref{eq:defET},
by replacing $\kappa$ by~$1$:
\[
2 \sum_{t=2}^T \varepsilon_{t-1}(a_t,\bx_t) \Inotanull{t}
\leq 2 \gamma_{T,\lambda,\delta}
\sqrt{2m T \ln \biggl(1+\frac{T}{\lambda m}\biggr)} \defeq E_T \,.
\]
Substituting the value of $\lambda$ and
upper bounding $\gamma_{T,\lambda,\delta}$ by
\[
\gamma_{T,\lambda,\delta} \leq
\left( \Arrowvert \Theta \Arrowvert + \frac{1}{4} \right) \sqrt{m \ln \frac{1 + T/m}{\delta/(d+1)}} \,,
\]
we get
\[
E_T \leq m \bigl( 2 \sqrt{2} \Arrowvert \Theta \Arrowvert + 1 \bigr)
\sqrt{T} \ln \frac{1 + T/m}{\delta/(d+1)}\,.
\]

\paragraph{Step~2.}
There are three main outcomes of Step~2 (see the summary in Appendix~\ref{app:recap-step2}).
Up to considering the new $U_t(a,\bx)$ and $\bL_t(a,\bx)$
in lieu of the $r(a,\bx)\,U_t(a,\bx)$ and $\bc(a,\bx)\,U_t(a,\bx)$, respectively, we
have the following counterparts to~\eqref{eq:kkt-complementary-slackness-budget},
\eqref{eq:argmax-ucb-main-simple} and~\eqref{eq:property-strong-duality}.
For each $t \geq 2$ such that
the cost constraints of Phase~0 of the adaptive policy of Box~C are not violated
and the optimization problem $\opt\bigl(\hat{\nu}_t, U_{t-1}, \bL_{t-1}, \hat{B}_T\bigr)$ is to be solved,
there exists a vector $\hat{\bbeta}^{\budg,\star}_t \geq \mathbf{0}$ such that
the complementary slackness condition of KKT reads
\begin{equation}
\label{eq:kkt-complementary-slackness-budget-LIN}
\bigl( \hat{\bbeta}^{\budg,\star}_t \bigr)^{\, \transp}
\E_{\bX \sim \hat{\nu}_t} \! \left[ \sum_{a \in \cA} \bL_{t-1}(a, \bX) \, \hat{p}_{t, a}(h_{t-1},\bX) \right]
=  \frac{\hat{B}_{T}}{T} \, \bigl( \hat{\bbeta}^{\budg,\star}_t \bigr)^{\, \transp} \mathbf{1}\,,
\end{equation}
the policy $\hat{\bp}_t(h_{t-1},\,\cdot\,)$ satisfies
\begin{equation}
\label{eq:property-strong-duality-LIN}
\hat{\bp}_t(h_{t-1},\bx) \in \argmax_{\bq \in \cP(\cA)}
\sum_{a \in \cA} \Bigl( U_{t-1}(a,\bx) - \bigl(\hat{\bbeta}^{\budg,\star}_t\bigr)^{\, \transp} \bL_{t-1}(a, \bx) \Bigr)\, q_{a} \,,
\end{equation}
and the value of the optimization problem is larger than
\begin{equation}
\label{eq:argmax-ucb-main-simple-LIN}
\opt\bigl(\hat{\nu}_t, U_{t-1}, \bL_{t-1}, \hat{B}_T\bigr)
\geq \hat{B}_{T} \bigl( \hat{\bbeta}^{\budg,\star}_t \bigr)^{\, \transp} \mathbf{1}\,.
\end{equation}

\paragraph{Step~3.}
The uniform deviation argument~\eqref{eq:tobeimproved}, formulated equivalently as~\eqref{eq:step0-HAz-f},
still holds, on an event referred to as $\cEnu$.
Also, we assumed that rewards $r_t$ and cost vectors $\bc_t$ are bounded in $[0,1]$ and
$[0,1]^d$, respectively.
Several applications of the Hoeffding-Azuma inequality, together with a union bound,
show that there exist an event $\hat{\cE}_{\haz,\delta}$ of probability at least $1-\delta$
such that, simultaneously, various high-probability controls
similar to~\eqref{eq:HAz1-unknown}--\eqref{eq:HAz4-unknown} hold. We do not rewrite them explicitly.

On the intersection $\hat{\cE}_{\haz,\delta} \cap \cE_{\lin,\delta} \cap \cEnu$, we have
\begin{align*}
\lefteqn{\sum_{t=1}^T \bc_t} \\
& \leq \sum_{t=1}^T \olbc(a_t,\bx_t) \Inotanull{t} + \sqrt{\frac{T}{2} \ln \frac{4d}{\delta}} \, \mathbf{1} \\
& \leq 1 + \sum_{t=2}^T \bL_{t-1}(a_t,\bx_t) \Inotanull{t} + \left( E_T + \sqrt{\frac{T}{2} \ln \frac{4d}{\delta}} \right) \mathbf{1} \\
& \leq 1 + \sum_{t=2}^T \E_{\bX \sim \nu} \!\left[ \sum_{a \in \cA} \bL_{t-1}(a, \bX) \, \hat{p}_{t,a}(h_{t-1},\bX) \right]
+ \left( E_T + \sqrt{2T \ln \frac{4d}{\delta}} \right) \mathbf{1} \\
&\leq 1 + \sum_{t=2}^T \E_{\bX \sim \hat{\nu}_t} \!\left[ \sum_{a \in \cA} \bL_{t-1}(a, \bX) \, \hat{p}_{t,a}(h_{t-1},\bX) \right]
+ \left( E_T + \sqrt{2T \ln \frac{4d}{\delta}} + |\cX| \sqrt{2T \ln \frac{2 T |\cX|}{\delta}} \right) \mathbf{1}\,,
\end{align*}
where, among others, we used~\eqref{eq:L-lin} and the definition of $E_T$ for the second inequality.
Note that the $E_T$ term was not necessary in Appendices~\ref{app:proof-nu-known}
and~\ref{app:proof-nu-unknown} as we were then using an upper-confidence bound on the vector costs,
obtained thanks to an upper-confidence bound on the conversion rate.
At each round $t \geq 2$, whether the strategy picks $\hat{p}_{t,a}(h_{t-1},\,\cdot\,)$ in
Phase~0 (in which case the left-hand side in the display below equals~$\mathbf{0}$) or Phase~2 (in which case
we have an equality in the display below), it holds that
\[
\E_{\bX \sim \hat{\nu}_t} \!\left[ \sum_{a \in \cA} \bL_{t-1}(a, \bX) \, \hat{p}_{t,a}(h_{t-1},\bX) \right]
\leq \frac{\hat{B}_T}{T} \mathbf{1}\,.
\]
Substituting the value of $\hat{B}_T$, we proved that on $\hat{\cE}_{\haz,\delta} \cap \cE_{\lin,\delta} \cap \cEnu$,
which is an event of probability at least $1-3\delta$,
\[
\sum_{t=1}^T \bc_t \leq (B-1) \mathbf{1}\,.
\]
This shows that on this intersection of events, the adaptive policy of Box~C always resorts to Phase~2.
We will consider this event for the rest of the proof, so that the results of Step~2 may be applied.

We may proceed similarly for rewards, based on~\eqref{eq:U-lin}:
on $\hat{\cE}_{\haz,\delta} \cap \cE_{\lin,\delta} \cap \cEnu$
\begin{align}
\nonumber
\lefteqn{\sum_{t=1}^T r_t} \\
\nonumber
& \geq \sum_{t=1}^T \olr(a_t,\bx_t) \Inotanull{t} - \sqrt{\frac{T}{2} \ln \frac{4}{\delta}} \\
\nonumber
& \geq \sum_{t=2}^T U_{t-1}(a_t,\bx_t) \Inotanull{t} - \left( E_T + \sqrt{\frac{T}{2} \ln \frac{4}{\delta}} \right)  \\
\nonumber
& \geq \sum_{t=2}^T \E_{\bX \sim \nu} \!\left[ \sum_{a \in \cA} U_{t-1}(a, \bX) \, \hat{p}_{t,a}(h_{t-1},\bX) \right]
- \left( E_T + \sqrt{2T \ln \frac{4}{\delta}} \right)  \\
\label{eq:rt-lin-finalbd}
&\geq  \sum_{t=2}^T \underbrace{\E_{\bX \sim \hat{\nu}_t} \!\left[ \sum_{a \in \cA} U_{t-1}(a, \bX) \, \hat{p}_{t,a}(h_{t-1},\bX) \right]}_{=
\opt(\hat{\nu}_t, U_{t-1}, \bL_{t-1}, \hat{B}_T)/T}
- \Biggl( \underbrace{E_T + \sqrt{2T \ln \frac{4}{\delta}} + |\cX| \sqrt{2T \ln \frac{2 T |\cX|}{\delta}}}_{\leq b_T} \Biggr)\,,
\end{align}
where the indicated equality to $\opt\bigl(\hat{\nu}_t, U_{t-1}, \bL_{t-1}, \hat{B}_T\bigr)$ follows from the fact
that on the considered intersection of events, the adaptive policy always resorts to Phase~2. We also
used the piece of notation $b_T$ introduced in the statement of Lemma~\ref{lemma:linucb}.

\paragraph{Step~4.}
We build on~\eqref{eq:property-strong-duality-LIN} as follows. By the existence of $\anull$,
the maximum in~\eqref{eq:property-strong-duality-LIN} can be taken with non-negative parts.
We also substitute the upper confidence control~\eqref{eq:U-lin} and the lower confidence control~\eqref{eq:L-lin}---this
piece of the proof is the very reason why such upper and lower confidence estimates were picked. We get:
on $\hat{\cE}_{\haz,\delta} \cap \cE_{\lin,\delta} \cap \cEnu$, for all $t \geq 2$,
for all $\bx \in \cX$,
\begin{align*}
\lefteqn{\sum_{a \in \cA} \Bigl( U_{t-1}(a,\bx) - \bigl(\hat{\bbeta}^{\budg,\star}_t\bigr)^{\, \transp} \bL_{t-1}(a, \bx) \Bigr)\,
\hat{p}_{t,a}(h_{t-1},\bx)} \\
& = \sum_{a \in \cA} \Bigl( U_{t-1}(a,\bx) - \bigl(\hat{\bbeta}^{\budg,\star}_t\bigr)^{\, \transp} \bL_{t-1}(a, \bx) \Bigr)_+ \,
\hat{p}_{t,a}(h_{t-1},\bx) \\
& \geq \sum_{a \in \cA} \Bigl( U_{t-1}(a,\bx) - \bigl(\hat{\bbeta}^{\budg,\star}_t\bigr)^{\, \transp} \bL_{t-1}(a, \bx) \Bigr)_+ \,
\pi^\star_a(\bx) \\
& \geq \sum_{a \in \cA} \Bigl( \olr(a,\bx) - \bigl(\hat{\bbeta}^{\budg,\star}_t\bigr)^{\, \transp} \olbc(a, \bx) \Bigr)_+ \,
\pi^\star_a(\bx)\,.
\end{align*}
The rest of the proof follows the exact same logic as in Appendices~\ref{app:proof-nu-known} and~\ref{app:proof-nu-unknown}.
By replacing the $\bx$ by a random variable $\bX \sim \hat{\nu}_t$ and integrating, we have,
on $\hat{\cE}_{\haz,\delta} \cap \cE_{\lin,\delta} \cap \cEnu$, that for all $t \geq 2$,
\begin{align*}
\lefteqn{\E_{\bX \sim \hat{\nu}_t} \left[\sum_{a \in \cA} \Bigl( U_{t-1}(a,\bx) -
\bigl(\hat{\bbeta}^{\budg,\star}_t\bigr)^{\, \transp} \bL_{t-1}(a, \bx) \Bigr) \, \hat{p}_{t,a}(h_{t-1},\bx) \right]} \\
& \geq
\E_{\bX \sim \hat{\nu}_t} \left[\sum_{a \in \cA} \Bigl(\olr(a,\bx) -
\bigl(\hat{\bbeta}^{\budg,\star}_t\bigr)^{\, \transp} \olbc(a, \bx) \Bigr)_+ \, \pi^\star_{a}(\bx) \right] \\
& \geq \E_{\bX \sim \nu} \left[\sum_{a \in \cA} \Bigl(\olr(a,\bx) -
\bigl(\hat{\bbeta}^{\budg,\star}_t\bigr)^{\, \transp} \olbc(a, \bx) \Bigr)_+ \, \pi^\star_{a}(\bx) \right]
- |\cX| \sqrt{\frac{t}{2} \ln \frac{2 T |\cX|}{\delta}} \,,
\end{align*}
where the second inequality follows by~\eqref{eq:step0-HAz-f},
which is legitimately applied thanks the fact that the sum over $a$ in the right-hand side takes values in $[0,1]$,
given the non-negative parts and the fact that $\olr(a,\bx) \leq 1$ by definition.
Now, with~\eqref{eq:kkt-complementary-slackness-budget-LIN} and the definition of $\pi^\star$:
on $\hat{\cE}_{\haz,\delta} \cap \cE_{\lin,\delta} \cap \cEnu$,
for all $t \geq 2$,
\begin{align*}
\lefteqn{\frac{\opt(\hat{\nu}_t, U_{t-1}, \bL_{t-1}, \hat{B}_T)}{T} -
\frac{\hat{B}_{T}}{T} \, \bigl( \hat{\bbeta}^{\budg,\star}_t \bigr)^{\, \transp} \mathbf{1}} \\
& = \E_{\bX \sim \hat{\nu}_t} \left[\sum_{a \in \cA} \Bigl( U_{t-1}(a,\bx) -
\bigl(\hat{\bbeta}^{\budg,\star}_t\bigr)^{\, \transp} \bL_{t-1}(a, \bx) \Bigr) \, \hat{p}_{t,a}(h_{t-1},\bx) \right] \\
& \geq \E_{\bX \sim \nu} \left[\sum_{a \in \cA} \Bigl(\olr(a,\bx) -
\bigl(\hat{\bbeta}^{\budg,\star}_t\bigr)^{\, \transp} \olbc(a, \bx) \Bigr)_+ \, \pi^\star_{a}(\bx) \right]
- |\cX| \sqrt{\frac{t}{2} \ln \frac{2 T |\cX|}{\delta}} \\
& \geq \frac{\opt(\nu,\olr,\olbc,B)}{T} - \frac{B}{T} \, \bigl( \hat{\bbeta}^{\budg,\star}_t \bigr)^{\, \transp} \mathbf{1}
- |\cX| \sqrt{\frac{t}{2} \ln \frac{2 T |\cX|}{\delta}}\,.
\end{align*}
Based on these inequalities, we have
\begin{align*}
\lefteqn{\opt(\nu,\olr,\olbc,B) - \sum_{t=2}^T \frac{\opt(\hat{\nu}_t, U_{t-1}, \bL_{t-1}, \hat{B}_T)}{T}} \\
& \leq |\cX| \sqrt{2T \ln \frac{2 T |\cX|}{\delta}} + 1 + \sum_{t=2}^T
\frac{B - \hat{B}_{T}}{T} \, \bigl( \hat{\bbeta}^{\budg,\star}_t \bigr)^{\, \transp} \mathbf{1} \\
& \leq |\cX| \sqrt{2T \ln \frac{2 T |\cX|}{\delta}} + 1 + \frac{B - \hat{B}_{T}}{\hat{B}_{T}} \sum_{t=2}^T
\frac{\opt(\hat{\nu}_t, U_{t-1}, \bL_{t-1}, \hat{B}_T)}{T}\,,
\end{align*}
where we substituted~\eqref{eq:argmax-ucb-main-simple-LIN} for the second inequality.
By a case analysis, we finally proved that
on $\hat{\cE}_{\haz,\delta} \cap \cE_{\lin,\delta} \cap \cEnu$,
\begin{multline*}
\opt(\nu,\olr,\olbc,B) - \sum_{t=2}^T \frac{\opt(\hat{\nu}_t, U_{t-1}, \bL_{t-1}, \hat{B}_T)}{T} \\
\leq \left( \frac{B}{\hat{B}_{T}} - 1 \right) \opt(\nu,\olr,\olbc,B)
+ \underbrace{1 + |\cX| \sqrt{2T \ln \frac{2 T |\cX|}{\delta}}}_{\leq b_T} \, .
\end{multline*}
The proof is concluded by combining the inequality above with
the bound~\eqref{eq:rt-lin-finalbd} on cumulative rewards:
\[
\opt(\nu,\olr,\olbc,B) - \sum_{t=1}^T r_t \leq 2b_T
+ \biggl( \underbrace{\frac{B}{\hat{B}_{T}} - 1}_{\leq 2b_T} \biggr) \opt(\nu,\olr,\olbc,B)\,,
\]
where the bound on $B/\hat{B}_{T} - 1$
is obtained with the same techniques and similar conditions as for~\eqref{eq:BBT}.

\subsection{Reminder: Algorithm 1 from~\citet{Agrawal2016LinearCB}}
\label{sec:algo-AD}

We recall (and actually slightly adapt to our setting) the
adaptive policy of~\citet{Agrawal2016LinearCB} titled Algorithm~1 therein.
We describe it in Box~D.
One of the adaptations is to state it with general upper-confidence bonuses $\varepsilon_{s}(a,\bx) > 0$.
As in~\citet{Agrawal2016LinearCB}, who proceed as in the proof of Lemma~\ref{lemma:linucb}, we will use the same values for
$\varepsilon_{s}(a,\bx)$ as in Theorem~\ref{th:lin}. The same comment applies to $\lambda$.
Another adaptation is that we specified the online convex optimization algorithm to be used
and picked a simple strategy (instead of other possible choices discussed in \citet{Agrawal2016LinearCB}),
namely, the projected gradient descent introduced by~\citet{Zink03}. The latter relies on a learning rate $\eta > 0$.
The drawback of the projected gradient descent is however that its dependency in the ambient dimension is suboptimal.

The final parameter of the adaptive policy of Box~D is a parameter $Z$ to
trade off between rewards and costs. A recommended choice is, for instance, $Z = \opt(\nu,\olr,\olbc,B)/B$,
the issue being that, of course, the latter value is unknown. In the simulation
study of Appendix~\ref{app:simu}, we will provide a good value of $Z$ to the adaptive policy,
even though \citet{Agrawal2016LinearCB} introduce a variant with a preliminary exploration
phase meant to provide in an automatic way such a good value for $Z$.

\begin{figure}[p]
\begin{nbox}[title={Box~D: Adaptation of Algorithm 1 from~\citet{Agrawal2016LinearCB}}]
\textbf{Parameters:} regularization parameter $\lambda > 0$;
trade-off parameter $Z$ between reward and costs;
upper-confidence bonuses $\varepsilon_{s}(a,\bx) > 0$, for $s \geq 1$
and $(a,\bx) \in \bigl( \cA \setminus \{\anull\} \bigr) \times \cX$;
learning rate $\eta > 0$. \medskip

\textbf{Round} $t=1$: play an arbitrary action $a_1 \in \cA \setminus \{\anull\}$; pick
$\bzeta_{1} = \mathbf{0}$ \medskip

\textbf{At rounds} $t \geq 2$: \smallskip
\begin{enumerate}
\item[\underline{Phase 0}] If $\displaystyle{\sum_{s \leq t-1} \bc_s \leq (B-1) \mathbf{1}}$ is violated, then
$a_t = \anull$ \smallskip
\item[\underline{Phase 1}] Otherwise, estimate the parameters by
\begin{align*}
& \bmu_{t-1} = X_{t-1}^{-1} \sum_{s=1}^{t-1} \bphi(a_s, \bx_s) \, r_s \\
\mbox{and} \qquad
& \hat{\btheta}_{t-1,i} = X_{t-1}^{-1} \sum_{s=1}^{t-1} \bphi(a_s, \bx_s) \, c_{s,i} \\
\mbox{where} \qquad
& X_{t} = \sum_{s=1}^{t} \bphi(a_s, \bx_s) \bphi(a_s, \bx_s)^{\transp} + \lambda \id_m
\end{align*}
Define the expected reward function $\hat{r}$ and cost function $\hat{\bc}_{t-1} = \bigl(\hat{c}_{t-1,i}\bigr)_{1 \leq i \leq d}$ as
\begin{align*}
\forall a \in \cA \setminus \{\anull\}, \ \ \forall \bx \in \cX, \qquad\qquad
& \hat{r}_{t-1}(a,\bx) = \bphi(a,\bx)^{\transp} \bmu_{t-1} \\
\mbox{and} \quad \forall 1 \leq i \leq d, \quad\qquad
& \hat{c}_{t-1,i} = \bphi(a,\bx)^{\transp} \btheta_{t-1,i}
\end{align*}
Build the upper-confidence expected reward function $U_{t-1}$
and the lower-confidence expected vector-cost function $\bL_{t-1}$ as
\begin{align*}
\forall a \in \cA & \setminus \{\anull\}, \ \ \forall \bx \in \cX, \\
& U_{t-1}(a,\bx) = \hat{r}_{t-1}(a,\bx) + \epsilon_{t-1}(a,\bx) \\
& \bL_{t-1}(a,\bx) = \hat{\bc}_{t-1} - \epsilon_{t-1}(a,\bx) \mathbf{1}
\end{align*}
\item[\underline{Phase 2}]
Play
\begin{align*}
a_t \in \argmax_{a \in \cA \setminus \{\anull\}} U_{t-1}(a,\bx) - Z \left( \bzeta_{t-1}^{\transp}  \bL_{t-1}(a,\bx) \right)
\end{align*}
Compute
\begin{align*}
\bzeta_{t} = \proj \Bigl( \bzeta_{t-1} + \eta \bigl( \bc_{t-1} - (B/T) \mathbf{1} \bigr) \Bigr)
\end{align*}
where $\proj$ denotes the Euclidean projection onto
the unit $\ell_1$--ball
\[
\bigl\{ \bzeta \in \R^d : \bzeta \geq \mathbf{0} \ \mbox{and} \ \zeta_1 + \ldots + \zeta_d \leq 1 \bigr\}
\]
\end{enumerate}
\end{nbox}
\end{figure}

\clearpage
\section{Simulation Study}
\label{app:simu}

This appendix reports numerical simulations performed on partially simulated but realistic data (Appendix~\ref{sec:data}),
for the sake of illustration only. We describe (Appendix~\ref{sec:spec-sett-CBwK})
the specific experimental setting of CBwK for a conversion model considered---i.e., the features
available, the parameters of the logistic regression, the reward and cost functions.
A key point is that continuous variables are used to define rewards, costs, and even the
conversion rate, while the adaptive policy of Box~C must discretize these variables to solve
its Phase~2 linear program.
Though the experimental setting introduced is not a setting of linear CBwK, we may still
apply the Box~D adaptive policy (Appendix~\ref{sec:AD-howto}), with the underlying idea
that it provides linear approximations to non-linear reward and cost functions.
We carefully explain how the hyper-parameters picked (Appendix~\ref{sec:hyperp})
before providing and discussing the outcomes of the simulations (Appendix~\ref{sec:outcomes-simu-app}).
The main outcome is that, as expected, the ad hoc adaptive policy of Box~C outperforms
the adaptive policy of Box~D, which essentially linearly approximates non-linear rewards and costs.
We end with a note (Appendix~\ref{sec:comput-time}) on the computation environment and time.

\subsection{Data Preparation and Available Contexts}
\label{sec:data}

The underlying dataset for the simulations is the standard ``default of credit card clients''
dataset of~\citet{UCI2016DefaultCR}, initially provided by~\citet{UCIarticle}.
(It may be used under a Creative Commons Attribution 4.0 International [CC BY 4.0] license.)
This dataset is originally for comparing algorithms predicting default probability of credit card clients. It includes some socio-demographic data, debt level, and payment/default history of the clients. It also includes a target measuring whether the client will default in the future (1-month ahead).
We transform it to match our motivating application of market share expansion for loans, described in Appendix~\ref{app:industrialmotivation}.
To do so, we consider each line of the dataset as a loan application. We then discard some variables (e.g., the target) and create new ones (requested amount, standard interest rate offered,
risk score).
Below, we begin with describing the variables that we keep as they are and explain next how we created the additional variables,
based on existing ones.

\paragraph{Variables retrieved.}
The variable \var{Age} provides the age of a given client at the time of the loan application, in years. We
discretize it into 5 levels with similar numbers of loan requests in each level. The cutoffs for each level are 27, 31, 37, and 43, respectively. This gives rise to a variable referred to as
\var{Age--discrete}.

The variable \var{Education} reports the education level of a client; in the data there are 4 levels:
``others'' (level 1, representing $2\%$ of clients), ``high school'' (level 2, with a share of $16\%$),
``university degree'' (level 3, with $47\%$), and ``graduate school degree'' (level~4, with $35\%$).

Finally, the variable \var{Marital status} provides the marital status of a client: ``others'' (level~1,
accounting for $1.3\%$ of clients), ``single'' (level~2, for $53.3\%$ of clients), and ``married'' (level~3,
for $45.4\%$).

\paragraph{Variables created based on existing ones.}
We create a variable \var{Requested amount}, in dollars ({\$}),
based on a variable provided in the data set that measures the current debt level, in dollars, of the clients:
we do so by multiplying the debt level by $0.2$. We cap the value of \var{Requested amount} to 100K{\$}.
We then discretize \var{Requested amount} into 5 levels with similar numbers of loan requests in each level; the obtained variable is referred to
as \var{Requested amount--discrete}.
The cutoffs for each level are 10K{\$}, 20K{\$}, 36K{\$}, and 54K{\$}, respectively.

For the final two variables, \var{Standard interest rate} and \var{Risk score}, we
first build a probability-of-default model with the variables from the raw database as predictors and the occurrence of a default
within the next month as a target. This probability-of-default
model is based on XGBoost (\citet{Chen2016XGBoost}), run
with no penalization, depth $3$, learning rate $0.01$, subsample parameter $0.8$, min child weight $10$,
and number of trees $1,\!176$. We only set the number of trees by cross validation, while the rest of the hyper-parameters were set arbitrarily.
As the default target is on a credit card, the predicted default rate seems high compared to what we deem as typical default rates on loans. We therefore
divide the predicted probability of default by factor of $4$ and cap this probability to $20\%$. This gives us a working variable called \var{PD},
for probability of default.

We build a \var{Risk score} rating the risk of a client's default, with 5 levels,
coded from A (level~1) to E (level~5), where E represents the highest risk.
It is created based on the $20\%$ - $40\%$ - $60\%$ - $80\%$ quantiles of \var{PD}.

We finally set the \var{Standard interest rate} variable as $0.9$ times the \var{PD}, with a maximal value of $18\%$ and a minimum one of $1\%$.
This constitutes an oversimplification of risk-based pricing, since we do not take into account any loss given default; but we do not
have enough information in the dataset to do so, which is why we basically assume that the loss given default is constant. Then, theory
has it that the \var{Standard interest rate} can be considered proportional to \var{PD}, and we carefully picked the factor $0.9$
to get realistic values. In the dataset, the thus created \var{Standard interest rate} variable exhibits an
average of $4.9\%$, with median $3.3\%$.
An important note is that this variable is continuous and does not take finitely many values, while
the setting described in Section~\ref{sec:protocol} imposes such a restriction
(the linear setting of Section~\ref{sec:LinearCBwL} does not require it).
We discuss below how we take this fact into account: by only using this variable for the conversion model
(i.e., in Phase~1 of the adaptive policy of Box~C),
but not to pick actions (i.e., not in Phase~2 of the adaptive policy of Box~C).

Lastly, we filter the database to remove the outliers: the lines for which \var{Standard interest rate} times \var{Requested amount} is larger than 10K,
which happens for $133$ clients. Our final database contains $29,\!867$ loan applications, out of which we will
bootstrap $T=50,\!000$ applications.

\paragraph{Additional comments.}
Note that for \var{Requested amount} and \var{Age}, the discretizations performed
aim to get five balanced classes; however, as some requests are with some specific boundary values,
we do not get exact equal distributions over the classes.

All the parameters, constants, and cutoff/filter thresholds used in this data preparation step were decided arbitrarily and
were not based on any real information. The context variables here were also selected somewhat arbitrarily (based on their availability),
and solely for illustration purposes. In reality, the variables that can be used for commercial discounts
need to comply with relevant laws, regulations and company's internal compliance rules.

\paragraph{Summary.}
The context $\bx$ for a given client thus contains the following variables:
\var{Age},
\var{Age-discrete}, \var{Education}, \var{Marital status},
\var{Requested amount}, \var{Requested amount--discrete},
\var{Risk score}, and \var{Standard interest rate}.
Categorical variables are hot-one encoded via binary variables.

\subsection{Specific Setting of CBwK for Logistic Conversions}
\label{sec:spec-sett-CBwK}

We recall that our aim is to provide simulations matching the motivating example of
market share expansion for loans described in~Appendix \ref{app:industrialmotivation}.
We take as action set, i.e., as possible discount rates, $\cA = \{\anull, 0.1, 0.2, 0.35, 0.55, 0.8\}$.

\paragraph{Features.} The feature vectors $\varphi_{\conv}(a,\bx)$ used are composed of only some of the variables
defining the context $\bx$, namely,
\var{Age-discrete}, \var{Education}, \var{Marital status},
\var{Requested amount--discrete},
\var{Risk score}, and
\var{Standard interest rate} (we recall that this variable is not discrete
but will not use it in the linear program of Phase~2), with the addition
of a new variable called \var{Final interest rate} equal to
the discounted standard interest rate offered, i.e.,
\var{Final interest rate} = \var{Standard interest rate} $\times (1-a)$.

\paragraph{Reward and vector cost functions.}
We use the following (normalized) reward and cost functions, inspired from Appendix \ref{app:industrialmotivation}.
We set a common duration for all loans, say, 2~years, so that the requested amount equals the outstanding.
For all $a \in \cA \setminus \{\anull\}$ and $\bx$,
\begin{equation}
\label{eq:rbc-exp}
r(a,\bx) = x_{\am}/M_{\am} \qquad \mbox{and} \qquad
\bc(a,\bx) = \bigl( a/M_{\disc}, \, x_{\irs} x_{\am} / M_{\irs,\am} \bigr)
\end{equation}
where $x_{\am}$ and $x_{\irs}$ denote the components of the context $\bx$ containing the \var{Requested amount}
and \var{Standard interest rate}, respectively, and the normalization factors
equal $M_{\am} = 10^5$, $M_{\disc} =7$, and $M_{\irs,\am} = 9,\!996$.

Note that the definition of the first cost here is different from that in Appendix~\ref{app:industrialmotivation}:
we use $a/M_{\disc}$ here instead of $\mathds{1}_{\{ a \ne 0 \}}$ there. We do so not to disfavor the Box~D policy,
see details in Appendix~\ref{sec:AD-howto} below.

\paragraph{Conversion rate function $P$.}
We model the conversion rate function~$P$ with the logistic-regression model stated in~\eqref{eq:P-logistic},
with $\varphi = \varphi_{\conv}$,
and only need to provide the numerical value of~$\btheta_\star$, which we do in Table~\ref{table:conversion-coef}.
This model, as well as the Phase~1 learning of $\btheta_\star$ described by~\eqref{eq:besttheta}--\eqref{eq:Wttheta}
in Section~\ref{sec:policy}, holds for possibly non-discrete contexts.

\begin{table}[t]
\caption{Coefficients picked for the logistic-regression model of $P$.}
\label{table:conversion-coef}
\centering
\begin{tabular}{lccccc}
\toprule
Intercept                              &  \multicolumn{5}{c}{$0.8177$} \\
\midrule
Continuous variable                   &  \multicolumn{5}{c}{Single coefficient} \\
\var{Final interest rate}     &  \multicolumn{5}{c}{$-13.1101$} \\
\midrule
Discrete variables                  &  \multicolumn{5}{c}{Coefficients for each level} \\
&  Level 1  &  Level 2   &  Level 3     &  Level 4     &  Level 5 \\
\cline{2-6} \\
\var{Risk score}                 &  $-0.3045$       &  $-0.0383$       &   $0.0515$  &  $0.1261$   &  $0.1636$ \\
\var{Requested amount--discrete}       &  $0.7093$         &  $0.4703$         &  $0.1113$    &  $-0.2748$  &  $-1.0179$ \\
\var{Age--discrete}                          &  $-0.1837$        &  $-0.1392$       & $-0.0476$   &  $0.1096$  &   $0.2592$ \\
\var{Education}                 &  $0.1836$         & $0.0126$        & $-0.0896$    & $-0.1084$   &   \\
\var{Marital status}           &  $0.0799$        &  $0.0102$      &    $-0.0918$  &  & \\
\bottomrule
\end{tabular}
\end{table}

The numerical values picked for $\btheta_\star$ were so in some arbitrary way, to get somewhat realistic outcomes with a simple model structure.
We imposed monotonicity constraints, as these are most natural: for instance, the conversion rate increases with
the level of \var{Risk score} and \var{Age--discrete} increase,
and decreases with the level of \var{Education}, \var{Requested amount--discrete}, and
\var{Final interest rate}. The coefficients for \var{Marital status} indicates that conversions
are more likely for clients that are single than for married clients.

The average conversion rate in the case $a = 0$ of no discount (i.e,
by replacing \var{Final interest rate} by \var{Standard interest rate}) is around $50\%$.

\paragraph{Adaptive policy: based only on the discrete variables.}
As indicated above, the logistic-regression model and the learning of its parameters
apply to continuous variables.
The restriction that the context set $\cX$ should be finite only came from
Phase~2 of the Box~C adaptive policy, i.e., the linear program---in particular,
for it to be computationally tractable. Here, we thus restrict our attention
to policies that map the discrete variables in $\bx$ to distributions over $\cA$:
policies that ignore the variables \var{Age}, \var{Requested amount}, and \var{Standard interest rate}.
For the first two variables, they may use their discretized versions
\var{Age--discrete} and \var{Requested amount--discrete}.
For \var{Standard interest rate}, given how it was constructed,
\var{Risk Score} appears as its discretized version.

The aim of these simulations is to show, among others, that using
discretizations only in Phase~2 is relevant and efficient.

Note that, on the theoretical side, the proof sketches provided in
Sections~\ref{sec:analysis-known} and~\ref{sec:analysis-unknown}
reveal that the errors $\varepsilon_t(a,\bx)$ for learning $\theta_\star$ and $P$,
obtained as outcomes of the first step of the analyses, are carried over in the subsequent steps,
where the optimization part is evaluated. Using discretizations only in Phase~2 does therefore not
come at the price of loosing theoretical guarantees.

\subsection{Consideration of the Box~D Adaptive Policy for Linear CBwK}
\label{sec:AD-howto}

In these experiments, we also consider the Box~D adaptive policy of Appendix~\ref{sec:algo-AD},
which was introduced by~\citet{Agrawal2016LinearCB} in a different setting.
To be as fair as possible to this adaptive policy,
we do so with the extended features $\varphi_{\lin}(a,\bx)$ consisting
of the features $\varphi_{\conv}(a,\bx)$ described above and three additional components:
the discount $a$,
the \var{Requested amount} $x_{\am}$, and
the product $x_{\irs} x_{\am}$ of the \var{Requested amount}
by the \var{Standard interest rate}. Actually, to ensure that $\varphi_{\lin}(a,\bx) \in [0, 1]^m$, the last two components added are normalized:
we rather use $x_{\am}/M_{\am}$ and $ x_{\irs} x_{\am} / M_{\irs,\am}$. The reward and vector cost functions introduced in~\eqref{eq:rbc-exp}
are linear in $\varphi_{\lin}(a,\bx)$. Even better, each component of $r(a,\bx)$ and $\bc(a,\bx)$
is given directly by a component of $\varphi_{\lin}(a,\bx)$---an extremely simple
linear dependency on $\varphi_{\lin}(a,\bx)$.

However, the expected reward and cost functions
\[
\olr(a,\bx) = r(a,\bx) \, P(a,\bx) \qquad \mbox{and} \qquad \olbc(a,\bx) = \bc(a,x)\,P(a,\bx)\,,
\]
which are the ones that should be linear in $\varphi_{\lin}(a,\bx)$ according to the
setting described in Appendix~\ref{app:setting-lin}, are not linear in these features.
This is due to the $P(a,\bx)$ terms, which are given by logistic regressions.

The Box~D adaptive policy is therefore disadvantaged. This is even more true as it models rewards and costs independently, while they are coupled through conversions.
We nonetheless consider this linear-modeling policy because a typical justification for linear approximations is that they offer
a typical and efficient first-stage approach to possibly complex problems.
Another reason was the desire to have some competitor to our policy in the simulations, and
Box~D adaptive policy was an easy-to-implement strategy---unlike the policies by
\citet{Badanidiyuru2014ResourcefulCB} and \citet{Agrawal2016AnEA}, which rely on considering finitely many benchmark policies.

All in all, we report
the performance of the Box~D policy as well in our experiments, though, as expected,
the ad-hoc Box~C policy outperforms it.

\subsection{Hyper-parameters Picked}
\label{sec:hyperp}

We actually set the hyper-parameters based on the budget $B$, and therefore, first explain how we set its possible values.
It turns out that setting $B \geq 3,\!650$ is equivalent to not imposing any constraint, while setting $B \geq 2,\!900$ is equivalent
to no second budget constraint. To get meaningful results, we therefore picked $B = 1,\!600$ and $B = 2,\!200$
as the two possible values for $B$. We now describe how we tune each of the two adaptive policies considered.

\paragraph{Hyper-parameters common to the two adaptive policies.}
We take $T=50,\!000$ clients in the experiments, by bootstrapping them from the enriched dataset prepared in Appendix~\ref{sec:data}. We
set initial $50$ rounds as a warm start for the sequential logistic regression and sequential linear regression
carried out in Phase~1 of the adaptive policies.

Both adaptive policies use upper-confidence bonuses $\varepsilon_{s}(a,\bx)$, which are roughly of the form
(considering $\lambda$ as a constant)
\[
C \bigl( 1 + \ln s \bigr) \bigl\Arrowvert \bphi(a, \bx) \bigr\Arrowvert_{X_{s}^{-1}}\,,
\]
where the matrix $X_s$ was defined in Box~C; for simplicity,
we set $\kappa = 1$, so that the matrices $V_s$ of Lemma~\ref{lemma:pbound_conversion} and $X_s$ are equal,
which explains the common form of the upper-confidence bonuses $\varepsilon_{s}(a,\bx)$.
The hyper-parameter $C$ controls the exploration: the higher $C$, the more exploration. We report in the simulations
the results achieved for $C$ in the range $\{0.025, 0.1, 0.3\}$. That range was
set so that at round $s = 51$, which is the first round after the warm start, the
$\varepsilon_{51}(a,\bx) $ take values around $0.05$, $0.3$, and $0.9$, respectively.

For simplicity, we set $B_T = B$ as a working budget.

\paragraph{Hyper-parameters for the Box~C adaptive policy.}
We feed this adaptive policy with a good value of $\lambda$, namely, $\lambda = 0.0129$. We obtained it
by cross-validation on an independent $T$--sample of data, using the Phase~1 estimation. In the $T$--sample for estimating $\lambda$,
at each round $s$, we take action from the optimal static policy and use the associated conversion $y_s$ as target for estimation.
We omit the projection step in Phase~1 by considering that a large enough set $\Theta$ was picked.

\paragraph{Hyper-parameters for the Box~D adaptive policy.}
As discussed in Appendix~\ref{sec:algo-AD},
we set $Z = \opt(\nu,P,B)/B$, that is, $Z = 5.16$ for $B = 1,\!600$ and  $Z =3.87$ for $B = 2,\!200$.
We also set $\lambda  = 0.2452$ for $B = 1,\!600$ and $\lambda  = 0.2765$ for $B = 2,\!200$.
These values were obtained as weighted averages:
the sum of $0.5$ times the optimal $\lambda$ for rewards and $0.25$ times the optimal $\lambda$ for each of the two cost components.
These optimal $\lambda$s for rewards and cost components were set by cross validation on an independent $T$--sample;
with actions $a_s$ taken at each round $s$ from the optimal static policy and associated rewards $r_s$ and costs $\bc_s$ as targets.

Finally, the learning parameter $\eta$ was selected in the range $\{0.005, 0.01, 0.05, 0.1, 0.2\}$.
We did so given the other choices, by picking in hindsight the $\eta$ with best performance; this of course,
just like the clever choice of $Z$, should give an advantage to the Box~D adaptive policy.
Namely, when $B = 1,\!600$, for $C$ equal $0.025$, $0.1$, and $0.3$, we selected $\eta$ equal to $0.05$, $0.01$, and $0.1$, respectively; and when $B = 2,\!200$, we selected $\eta$ equal to $0.01$, $0.01$, and $0.05$, respectively.
When performing these retrospective choices, we however noted that
the performance was not significantly impacted by the choice of $\eta$.

\subsection{Outcomes of the Simulations}
\label{sec:outcomes-simu-app}

We were limited by the computational power (see Appendix~\ref{sec:comput-time})
and could only perform $10$ simulations for each pair of $B \in \{ 1,\!600, \, 2,\!200 \}$
and $C \in \{ 0.025, \, 0.1, \, 0.3 \}$. We report averages (strong lines) as well as
$\pm 2$ times the standard errors (shaded areas).

Figure~\ref{fig:num-res} reports, in the first line of graphs, the regret achieved
with respect to what achieves the optimal static policy, i.e.,
\[
t \longmapsto \frac{t}{T} \opt(\nu,P,B) - \sum_{s=1}^t r_{s}\,,
\]
where $\opt(\nu,P,B)$ is larger than $8,\!000$ for both values of~$B$.
This regret can take negative or positive values, but in expectation, it is non-negative.
This is not immediately clear from the figure, which reports the empirical averages of the regret over $10$ runs:
these empirical averages are sometimes negative, but they always lie in confidence intervals containing the value~$0$.

The figures also reports, in the second and third lines, the difference between a constant linear increase
of the costs (between a $0$ initial cost and a final $B$ cost) and the costs actually
achieved by the adaptive policies. I.e., these graphs report the averages and standard errors
of the following quantities: for each cost component $i \in \{1,2\}$,
\[
t \longmapsto \sum_{s=1}^t c_{s,i} - \frac{t}{T} B\,;
\]
by design, the difference above must be non-positive.
The second line of Figure~\ref{fig:num-res} deals with the first cost component,
and its third line reports the results for the second cost component.

\begin{figure}[p]
\vspace{-1.5cm}

\hspace{-2.8cm} \includegraphics[width=1.35\textwidth]{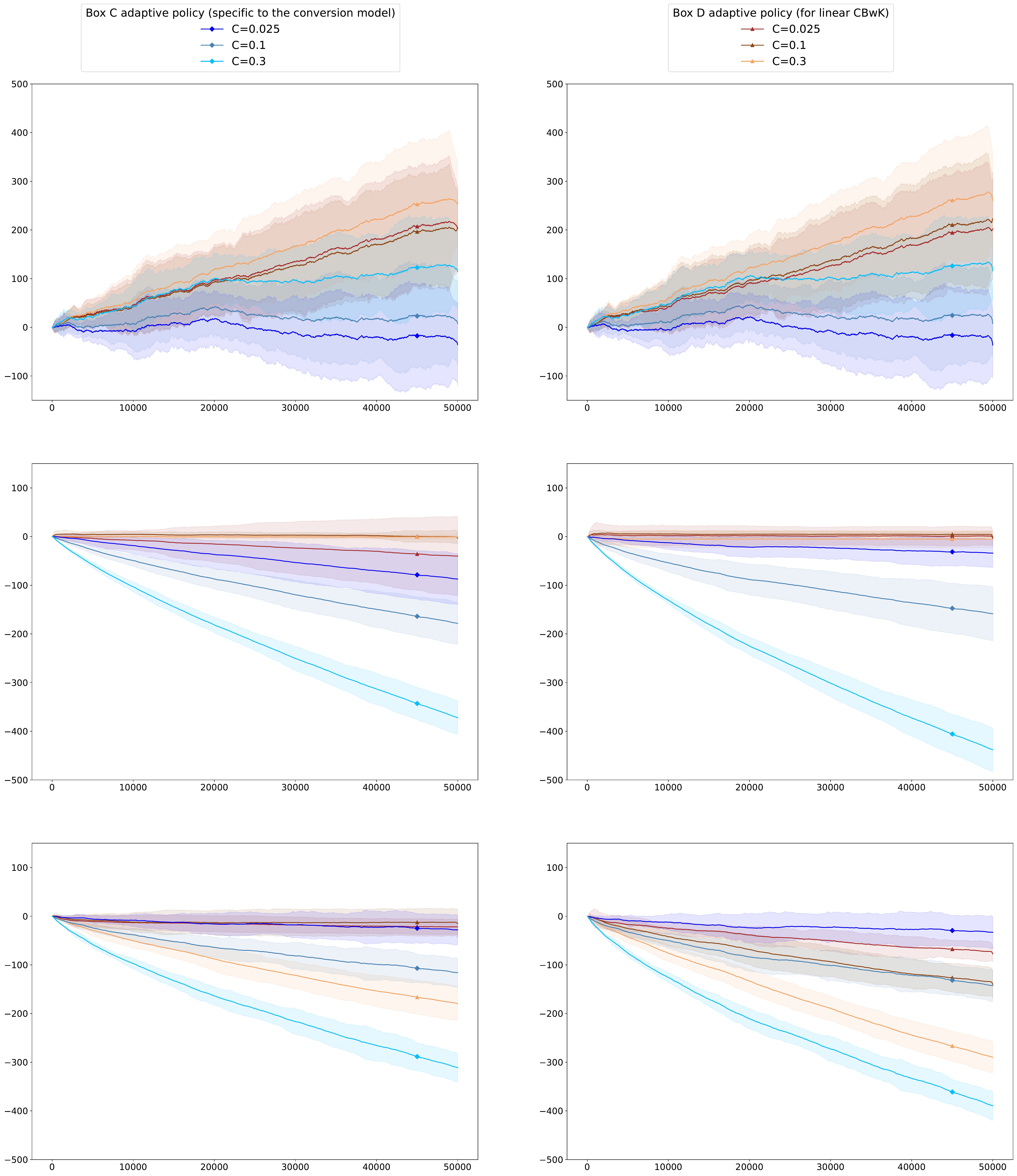} \\ \ \\
\caption{\label{fig:num-res} \emph{Averages (\emph{solid lines}) and $\pm 2$ times standard errors (\emph{shaded areas}),
achieved on $10$ runs by the Box~C (\emph{blue}) and Box~D (\emph{orange}) adaptive policies:
of the regret (\emph{first line}), of the difference of the first cost component to $tB/T$ (\emph{second line}), and
of the difference of the second cost component to $tB/T$ (\emph{third line}), by the values of the budget
(B = $1,\!600$ in the \emph{first column}, B = $2,\!200$ in the \emph{second column}). \ \\ \ \\}}
\end{figure}

The experiments reveal that while both adaptive policies seem to achieve sublinear regret,
the Box~D adaptive policy, which is suited for the CBwK setting for a conversion model, performs better
than the Box~C adaptive policy in terms of rewards: it achieves a smaller, sometimes negative, regret.
In terms of costs, we globally see the same trend, with, for a given value of $C$, the Box~D adaptive policy
suffering smaller costs than the Box~C adaptive policy while achieving higher rewards. This hints at
a better use of the discounts.

The hyper-parameter~$C$ has an interesting impact: the lower $C$, the lower the regret (the higher the rewards)
and the lower the costs. Rewards and costs go hand in hand: for a given adaptive policy,
higher rewards are associated with higher costs.

\subsection{Computation Time and Environment}
\label{sec:comput-time}

As requested by the NeurIPS checklist, we provide details on the computation time and environment.
Our experiments were ran on the following hardware environment: no GPU was required, CPU is $2.7$ GHz Quad-Core with total of 8 threads and RAM is 16 GB 2133 MHz LPDDR3. We ran $5$ simulations with different seeds on parallel each time.
In the setting and for the data described above,
it took us $8$ hours for each such bunch of $5$ runs of the adaptive policy of Box~C, and $1.5$ hours for Box~D.

\end{document}